\theoremstyle{plain}
\newtheorem{theorem}{Theorem}
\newtheorem{lemma}{Lemma}
\theoremstyle{definition}
\newtheorem{assumption}{Assumption}
\theoremstyle{remark}
\newcommand{\delc}{\Delta_c}
\newcommand{\ck}{K}
\newcommand{\mcm}{\mathcal{M}}
\newcommand{\mcs}{\mathcal{S}}
\newcommand{\mca}{\mathcal{A}}
\newcommand{\ch}{H}
\newcommand{\ct}{T}
\newcommand{\mbp}{\mathbb{P}}
\newcommand{\hc}{\hat{c}}
\newcommand{\bphi}{\boldsymbol{\phi}}
\newcommand{\mbr}{\mathbb{R}}
\newcommand{\phiv}{\bphi_{V_h}}
\newcommand{\bc}{\bar{c}}
\newcommand{\mcg}{\mathcal{G}}
\newcommand{\safe}{\text{safe}}
\newcommand{\expect}{\mathbb{E}}
\newcommand{\cc}{c^0}
\newcommand{\bpphi}{\Phi}
\newcommand{\mcd}{\mathcal{D}}
\newcommand{\delt}{\delta}
\newcommand{\deltt}{\delta}
\newcommand{\abralg}{\text{LSVI-NEW}}
\newcommand{\mcu}{\mathcal{U}}
\newcommand{\mcuo}{\mathcal{U}^{\perp}}
\newcommand{\phii}{\tilde{\bphi}}
\newcommand{\ppsi}{\boldsymbol{\psi}}
\newcommand{\bet}{\beta_1}
\newcommand{\bett}{\beta}
\newcommand{\lamb}{\boldsymbol{\Lambda}_{h,1}^k}
\newcommand{\lambb}{\boldsymbol{\Lambda}_{h,2}^k}
\newcommand{\blambd}{\boldsymbol{\Lambda}}
\newcommand{\gam}{\boldsymbol{\gamma}_h^k}
\newcommand{\bmu}{\boldsymbol{\mu}}
\newcommand{\bgam}{\boldsymbol{\gamma}}
\newcommand{\ident}{\boldsymbol{I}}
\newcommand{\phihk}{\bphi_{h,h+1}^k}
\newcommand{\phiht}{\bphi_{h,h+1}^{\tau}}
\newcommand{\whk}{\boldsymbol{w}_h^k}
\newcommand{\tc}{\tilde{c}}
\newcommand{\eps}{\epsilon_{1}}
\newcommand{\epss}{\epsilon_{h,2}}
\newcommand{\epsss}{\epsilon_{h,3}}
\newcommand{\epssss}{\epsilon_{4}}
\newcommand{\ddelt}{\Delta_{\bphi}(c)}
\newcommand{\dddelt}{\tilde{\delta}}
\newcommand{\ddeltt}{\tilde{\delta}}
\newcommand{\leps}{\lambda_{0}}
\newcommand{\ta}{\tilde{\mca}}
\newcommand{\ts}{\tilde{\mcs}}
\newcommand{\api}{\pi}
\newcommand{\apik}{\pi^k}
\newcommand{\hs}{\hat{s}}
\newcommand{\balpp}{\bar{\alpha}_0}
\newcommand{\alpp}{\alpha_0}
\newcommand{\hf}{\hat{f}}
\newcommand{\tf}{\tilde{f}}
\newcommand{\ha}{\hat{a}}
\title{A Near-Optimal Algorithm for Safe Reinforcement Learning Under Instantaneous Hard Constraints}
\author{Ming Shi, Yingbin Liang, Ness Shroff \\
Department of Electrical and Computer Engineering \\
The Ohio State University \\
Columbus, OH 43210, USA \\
\texttt{\{shi.1796,liang.889,shroff.11\}@osu.edu}
}
\begin{document}

\maketitle

\begin{abstract}
In many applications of Reinforcement Learning (RL), it is critically important that the algorithm performs safely, such that instantaneous hard constraints are satisfied at each step, and unsafe states and actions are avoided. However, existing algorithms for ``safe'' RL are often designed under constraints that either require expected cumulative costs to be bounded or assume all states are safe. Thus, such algorithms could violate instantaneous hard constraints and traverse unsafe states (and actions) in practice. Therefore, in this paper, we develop the first near-optimal safe RL algorithm for episodic Markov Decision Processes with unsafe states and actions under instantaneous hard constraints and the linear mixture model. It not only achieves a regret $\tilde{O}(\frac{d\ch^3 \sqrt{d\ck}}{\delc})$ that tightly matches the state-of-the-art regret in the setting with only unsafe actions and nearly matches that in the unconstrained setting, but is also safe at each step, where $d$ is the feature-mapping dimension, $\ck$ is the number of episodes, $\ch$ is the number of steps in each episode, and $\delc$ is a safety-related parameter. We also provide a lower bound $\tilde{\Omega}(\max\{d\ch \sqrt{\ck}, \frac{\ch}{\delc^2}\})$, which indicates that the dependency on $\delc$ is necessary. Further, both our algorithm design and regret analysis involve several novel ideas, which may be of independent interest.
\end{abstract}

\section{Introduction}\label{sec:introduction}

Reinforcement learning (RL) has been extensively studied to improve the learning performance in sequential decision-making problems for machine learning applications. These decision making problems are usually modelled as a Markov Decision Process (MDP), where an online learner interacts with an unknown environment sequentially to achieve a large expected cumulative reward. Many RL algorithms that do not consider any constraint (and hence are allowed to freely explore any state-action pair) with sample-complexity guarantees have been proposed in the literature~\cite{azar2017minimax,jin2018q,agarwal2019reinforcement,jin2020provably,jia2020model,zhou2021provably,he2022near}. Moreover, existing ``safe'' RL algorithms are usually designed under the constraint that requires expected cumulative, i.e., not \emph{instantaneous}, costs over all steps to be bounded~\cite{yang2019projection,brantley2020constrained,ding2021provably,paternain2022safe} (please see more related work in~\cref{subsec:relatedwork}). Thus, practical scenarios where unsafe states and actions must be avoided at \emph{each} time/step are not captured.

Instantaneous hard constraints are important in many practical scenarios, and any unsafe states and actions (and transitions) should be avoided at each step. In safety-critical systems, violating such a constraint could result in catastrophic consequences. For example, in power systems, it is well-known that the states of blackouts (e.g., due to violating the power-grid operation constraints) must be avoided~\cite{amani2019linear,shi2022stability}. In autonomous driving, improper operations that could cause dangerous states, e.g., crashing, must be avoided~\cite{amani2021safe,vamvoudakis2021handbook}. In robotics, even a single bad action could damage the machines and any undesirable state of failure must be avoided~\cite{turchetta2016safe,wachi2018safe}.

Recently, instantaneous hard constraints have been studied in theoretical machine learning. Specifically,~\cite{amani2019linear} and~\cite{pacchiano2021stochastic} studied bandits with linear instantaneous constraints that require a linear safety value of the chosen action to be bounded at each step. However, it is well-known that bandits are only a very special case of MDP.~\cite{amani2021safe} studied safe linear MDP with linear instantaneous hard constraints. However, they still assume that only the actions could be unsafe, and hence unsafe states (and transitions) are still not considered. Intuitively, when there are only unsafe actions, any action will always lead to a state in any future step that is safe. Then, we could consider the safety at each step separately. Indeed, the existing idea in such a setting is to estimate the safe actions at each step separately, \emph{without the need to consider the impact from other steps}. In sharp contrast, when one allows for the more practical scenario when unsafe states can also exist (as done in this paper), even though an action is safe at a step, it may cause unsafe states in subsequent steps. As a result, at each step, the impact from other steps must be carefully handled. This results in significantly new challenges in both the algorithm design and regret analysis.

Therefore, this paper studies a fundamentally important and open question: \emph{in MDPs with unsafe states and actions (and transitions) under instantaneous hard constraints, is it possible to design an RL algorithm that not only still achieves a strong sample-complexity guarantee, but is also safe (i.e., satisfies the instantaneous hard constraint) at each step?}

\subsection{Our Contributions}\label{subsec:contribution} 

\emph{In this paper, we make the first effort to address this question.} Specifically, we study episodic MDPs with unsafe states and actions under instantaneous hard constraints and the linear mixture model. We develop an RL algorithm, called Least-Square Value Iteration by lookiNg ahEad and peeking backWard (\abralg).~\abralg~not only achieves a regret $\tilde{O}(\frac{d\ch^3 \sqrt{d\ck}}{\delc})$ that tightly matches the state-of-the-art regret in the \emph{unsafe-action} setting and nearly matches that in the \emph{unconstrained} setting, but is also safe at each step, where $d$ is the feature-mapping dimension, $\ck$ is the number of episodes, $\ch$ is the number of steps in each episode, and $\delc$ (which is defined in~\cref{thm:regret}) is a safety-related parameter. We also provide a lower bound $\tilde{\Omega}(\max\{d\ch \sqrt{\ck}, \frac{\ch}{\delc^2}\})$, which indicates that the dependency on $\delc$ is necessary. 

As discussed before, in our case, the coupling between steps need to be carefully handled. To resolve the new challenges due to this coupling, our algorithm in~\cref{sec:algorithm} involves four important novel ideas. \textbf{Idea I:} \emph{constructing safe subgraphs (defined in~\cref{subsec:performancemetric}).} Remember that an action that is safe at a step could cause unsafe future states. To resolve this problem, we restrict~\abralg~to be inside safe subgraphs of the state-transition diagram. These safe subgraphs are constructed by estimating safe state-sets at each step in a backward manner, such that the chosen action could only result in future states that are estimated to be safe. \textbf{Idea II:} \emph{encouraging to explore the transitions with higher uncertainty.} Due to our first idea for safety, the choices of actions become restricted. In order to still achieve a sublinear regret, the algorithm needs to be more optimistic in the learning process. To resolve this new pessimism-optimism dilemma, we construct a new bonus term in the estimated $Q$-value function to encourage~\abralg~to explore transitions with higher uncertainty. \textbf{Idea III:} \emph{encouraging to explore the future subsubgraphs with higher uncertainty.} Idea-II by itself is not sufficient, since each step could be affected by the safety-learning process at future steps. For example, even though the safety function at step $h$ may be precisely known, a bad learning quality at a future step $h'>h$ could make the algorithm still not be able to really execute the optimal safe action at step $h$. To resolve this difficulty, we construct another new bonus term to encourage~\abralg~to explore future subsubgraphs with higher uncertainty. \textbf{Idea IV:} \emph{encouraging to explore the past subsubgraphs with higher uncertainty.} Similar to that in Idea III, since each step $h$ is also affected by past steps $h'<h$, we construct a new bonus term to encourage~\abralg~to explore past subsubgraphs with higher uncertainty.

To show a sublinear regret of~\abralg, our regret analysis involves novel ideas for solving the following difficulties. (Please see~\cref{sec:finalresults} for details.) \textbf{Difficulty I:} \emph{the commonly-used invariant in RL relying on the ergodicity property does not hold any more.} Due to our special design of the safe subgraphs, the optimal policy and~\abralg~may visit different sets of states at each step. Thus, the classical invariant that shows the estimated $V$-value is larger than the optimal $V$-value at any state does not hold any more. To resolve this problem, we construct the value functions in a special way so that other useful interesting invariants still hold. \textbf{Difficulty II:} \emph{how to quantify the impact from other steps?} Our idea is to consider the future and past impacts separately. Then, we could quantify such impacts based on our construction of the safe subgraphs. This way of quantification precisely implies the requirements for the parameters of the new bonus terms that we construct for~\abralg.

\subsection{Related Work}\label{subsec:relatedwork}

We provide more related work in this section. \emph{To the best of our knowledge, none of existing work has addressed the fundamental open problem that we consider in this paper.}

\textbf{RL with constraints:} First, constraints that require some expected cumulative costs over all steps to be bounded have been widely studied in safe RL~\cite{wu2016conservative,achiam2017constrained,tessler2018reward,yang2019projection,efroni2020exploration,singh2020learning,ding2020natural,brantley2020constrained,kalagarla2021sample,liu2021learning,ding2021provably,wei2021provably,xu2021crpo,paternain2022safe,bai2022achieving,ghosh2022provably}. Second, many other work, e.g.,~\cite{caramanis2014efficient} and~\cite{wu2018budget}, studied budget constraints that will halt the learning process whenever the budget has run out of.

\textbf{Instantaneous hard constraints with only unsafe actions:} First,~\cite{amani2019linear,pacchiano2021stochastic} studied safe linear bandits which require a linear safety value of the chosen action to be bounded at each step. Second,~\cite{amani2021safe} studied linear MDPs with instantaneous hard constraints, while assuming only actions could be unsafe.

\textbf{Instantaneous hard constraints under deterministic transitions:}~\cite{turchetta2016safe} and~\cite{wachi2018safe} studied instantaneous hard constraints with unsafe states, while assuming the state transitions are deterministic, i.e., by choosing an action, a state will be transferred to a known single deterministic state.

\section{Problem Formulation}\label{sec:problemformulation}

In this section, we provide the problem formulation and introduce the performance metric.

\subsection{Episodic MDP Under Instantaneous Hard Constraints and the Linear Mixture Model}\label{subsec:emdplinearmixturemodel}

We study the constrained episodic MDP, denoted by $\mcm = (\mcs,\mca,\ch,\mbp,r,c)$, in an online setting with $\ck$ episodes, where $\mcs$ and $\mca$ denote the state and action spaces, respectively; $\ch$ denotes the number of steps in each episode; $\mbp=\{\mbp_h\}_{h=1}^{\ch}$, $r=\{r_h\}_{h=1}^{\ch}$ and $c=\{c_h\}_{h=1}^{\ch}$ denote the transition probability function, reward function and safety function, respectively. Let $\ct=\ch\ck$ denote the total number of steps. The learner interacts with the unknown environment as follows. At each step $h$ of episode $k$, the learner first chooses an action $a_h^k \in \mca$ for current state $s_h^k$. Then, the learner receives a reward $r_h(s_h^k,a_h^k)$, where $r_h(\cdot): \mcs \times \mca \to [0,1]$ is known. Finally, according to the \emph{unknown} transition probability function $\mbp_h(\cdot \vert s_h^k,a_h^k): \mcs \times \mca \times \mcs \to [0,1]$, the environment draws a next state $s_{h+1}^k$ and reveals it to the learner. Meanwhile, the learner observes a noisy safety value $\hc_h^k = c_h(s_h^k,a_h^k,s_{h+1}^k) + \zeta_h^k$, where $c_h(\cdot): \mcs \times \mca \times \mcs \to [0,1]$ is \emph{unknown} and $\zeta_h^k$ is an additive $0$-mean $\sigma$-subGaussian random variable.

\textbf{Instantaneous hard constraint:} At each step $h<\ch$ of each episode $k$, the following constraint must be satisfied,
\begin{align}\label{eq:defhardconstraint}
c_h(s_h^k,a_h^k,s_{h+1}^k) \leq \bc,
\end{align}
where $\bc$ is a known constant, and $c_h(s_{\ch}^k) \leq \bc$ must be satisfied at step $\ch$. The transition from $s_h^k$ through $a_h^k$ to $s_{h+1}^k$ is said to be unsafe if constraint~\eqref{eq:defhardconstraint} is violated. Due to this constraint, some states and actions could also be unsafe.
\begin{itemize}
    \item A state is said to be unsafe at step $h$, if there exists no action, such that constraint~\eqref{eq:defhardconstraint} can be satisfied, i.e., $\min_{a\in\mca} \max_{\{s':\mbp_h(s' \vert s,a)>0\}} c_h(s,a,s') > \bc$.
    \item An action is said to be unsafe for state $s$ at step $h$, if there is a non-zero probability to transit to a state, such that constraint~\eqref{eq:defhardconstraint} will be violated, i.e., $\max_{\{s':\mbp_h(s' \vert s,a)>0\}} c_h(s,a,s') > \bc$.
\end{itemize}
As discussed in~\cref{sec:introduction}, due to unsafe states and actions caused by the instantaneous hard constraint, e.g., bad movements and failures in robotics, crushing in autonomous driving and blackouts in power systems, new fundamental difficulties need to be resolved, which is the focus of this paper.

\textbf{Linear mixture MDP:} Due to the ergodicity under the linear function approximation $\mbp_h(\cdot \vert s,a) = \langle \bmu_h^*(\cdot),\bphi(s,a) \rangle$ from~\cite{jin2020provably}, any state could be finally visited from any other state. Thus, in such a linear MDP, no algorithm can avoid the unsafe states under constraint~\eqref{eq:defhardconstraint}. Thus, instead we borrow the linear mixture MDP model from~\cite{jia2020model,zhou2021nearly,zhou2021provably,zhou2022computationally,he2022near}. The importance and many applications of linear mixture MDPs have been provided in these references. Specifically, the transition probability $\mbp_h(s' \vert s,a) = \langle \bmu_h^*,\bphi(s,a,s') \rangle$ and safety value $c_h(s,a,s') = \langle \bgam_h^*,\bphi(s,a,s') \rangle$ are linear functions of a given feature mapping $\bphi: \mcs \times \mca \times \mcs \to \mbr^d$, where $\bmu_h^*\in \mbr^d$ and $\bgam_h^* \in \mbr^d$ are \emph{unknown} parameters. As typically assumed, for any bounded function $V_h: \mcs \to [0,\ch]$ and state-action pair $(s,a)$, we have $\lVert \phiv(s,a) \rVert_2 \leq D$, where $\phiv(s,a) = \sum_{\{s':\mbp_h(s' \vert s,a)>0\}} \bphi(s,a,s')V_h(s') \in \mbr^d$. Moreover, $\lVert \bmu_h^* \rVert_2 \leq L$ and $\lVert \bgam_h^* \rVert_2 \leq L$.

\subsection{State-Action Subgraphs and Performance Metric}\label{subsec:performancemetric}

Notice that the ergodicity property, (i.e., any state could finally be visited from any other state) in classical MDPs does not hold any more under instantaneous hard constraint~\eqref{eq:defhardconstraint}. This is because if unsafe states can be visited from any other state, it is impossible to satisfy~\eqref{eq:defhardconstraint} at all steps. Due to this non-ergodicity, we define two important notions below.

First, we let $\mcs_h(s,a)$ denote the set of next-states that could be transited to with non-zero probability from a state-action pair $(s,a)$ at step $h$, i.e., $\mcs_h(s,a) \triangleq \{s':\mbp_h(s' \vert s,a)>0\}$. Similar to that required in the case with deterministic transitions~\cite{turchetta2016safe} and~\cite{wachi2018safe}, we assume that the algorithm knows $\mcs_h(s,a)$ in advance. (Note that the transition kernel $\mbp$ is still \emph{unknown}.) If $\mcs_h(s,a)$ is not known in advance, no safe algorithm can achieve a sub-linear regret. This is because (i) if an unsafe state $s'$ that \emph{will not be} transited to is considered for a state-action pair $(s,a)$, the algorithm will lose the chance to explore $(s,a)$. This could result in a linear-to-$\ct$ regret when $(s,a)$ is actually optimal. (ii) If an unsafe state $s'$ that \emph{will be} transited to is missed for $(s,a)$, the algorithm will suffer from this unsafe state $s'$ when choosing $a$ at state $s$.

\begin{figure}[t]
\vskip 0.2in
\begin{center}
\centerline{\includegraphics[width=\columnwidth]{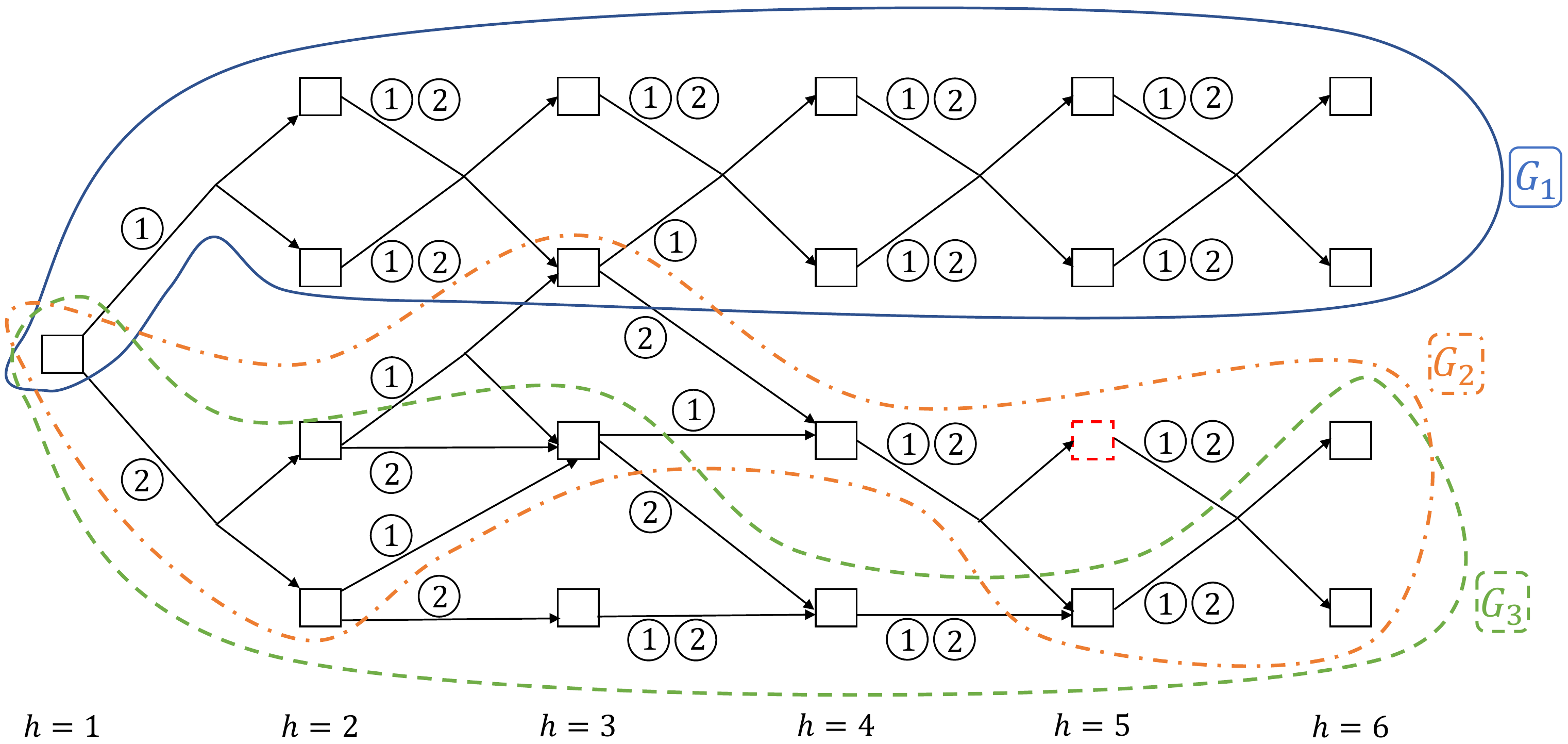}}
\caption{A sketch of subgraph examples. Squares represent states. The red dashed square at step $h=5$ is the unsafe state. Circles represent actions. Arrows represent state transitions. There are two actions $a=1,2$, as shown by the numbers in the circles.}
\label{fig:sketchsubgraph}
\end{center}
\vskip -0.2in
\end{figure}

\textbf{State-action subgraph:} While ergodicity does not hold, an important property here is that, by executing a \emph{deterministic} policy $\pi(s,h): \mcs \times [1,\ch] \to \mca$, the learner follows a closed directed state-action subgraph
\begin{align}\nonumber
G^{\pi} \triangleq \left\{ (s_1,\pi(s_1,1)), \left\{(s_2,\pi(s_2,2))\right\}_{s_2 \in \mcs_2^{\pi}}, ..., \mcs_{\ch}^{\pi} \right\},
\end{align}
where $\mcs_h^{\pi}$ denotes the set of all states that are visited with non-zero probability by policy $\pi$ at step $h$. $G$ may contain only a subset of all states in the global state space $\mcs$. For simplicity, we assume all episodes start from a fixed state $s_1$. (Our results can be easily generalized to the more general case with an arbitrary starting state.)

Please see~\cref{fig:sketchsubgraph} for a simple sketch of subgraph examples. For example, when choosing action $a=1$ at all steps, the learner follows subgraph $G_1$. Notice that $G_1$ is a safe subgraph, since the unsafe state at step $h=5$ will not be visited. As another example, the learner follows subgraph $G_2$ when choosing $a=2$ at step $h=1$, choosing $a=1$ at step $h=2$, choosing $a=2$ for the second state (i.e., the second square from the top when $h=3$) and $a=1$ for the third state (i.e., the third square from the top when $h=3$) at step $h=3$, and choosing $a=2$ at step $h=4$ and step $h=5$. Notice that $G_2$ is an unsafe subgraph, since the unsafe state at $h=5$ could be visited. For ease of understanding, in~\cref{fig:sketchsubgraph}, we only draw finite states, two actions and three subgraphs. However, this paper considers the general linear mixture MDP, where the number of states $s$, actions $a$ and subgraphs $G$ could be infinite.

\textbf{Performance metric:} We let $\mcg^{\safe} \triangleq \{G^{\safe}\}$ denote the set of all possible safe subgraphs, where all state-action-state triplets satisfy the instantaneous hard constraint~\eqref{eq:defhardconstraint}. Then, the set of all possible safe \emph{deterministic} policies is
\begin{align}
\Pi^{\safe} \triangleq \left\{ \pi: G^{\pi} \in \mcg^{\safe} \right\}.
\end{align}
Moreover, the $Q$-value (state-action-value) function and the $V$-value (state-value) function are defined as follows:
\begin{align}
& Q_h^{\pi}(s,a) \triangleq r_h(s,a) + \expect \left[ \sum\limits_{h'=h+1}^{H} r_{h'}(s_{h'},\pi(s_{h'},h')) \Big\vert s_h=s,a_h=a \right], \label{eq:defqvaluestandard} \\
& V_h^{\pi}(s) \triangleq \expect \left[ \sum\limits_{h'=h}^{H} r_{h'}(s_{h'},\pi(s_{h'},h')) \Big\vert s_h=s \right]. \label{eq:defvvaluestandard}
\end{align}
Therefore, our goal is to develop an RL algorithm $\pi \triangleq \{\pi^k\}_{k=1}^{\ck}$ that (i) is safe: $\pi^k \in \Pi^{\safe}$ for all $k$, i.e., constraint~\eqref{eq:defhardconstraint} is satisfied in all episodes $k$; (ii) achieves a sublinear regret, which is defined as
\begin{align}\label{eq:defregret}
R^{\pi} \triangleq \sum\limits_{k=1}^{\ck} \left\{ V_1^{*}(s_1) - V_1^{\pi^k}(s_1) \right\},
\end{align}
where $V_1^{*}(s_1)$ is the $V$-value of the optimal \emph{safe} policy, i.e.,
\begin{align}\label{eq;defoptimalvvalue}
V_1^{*}(s_1) = \max\limits_{\pi \in \Pi^{\safe}} V_1^{\pi}(s_1).
\end{align}

\section{A Near-Optimal Safe Algorithm}\label{sec:algorithm}

In this section, we present our algorithm, called Least-Square Value Iteration by lookiNg ahEad and peeking backWard (\abralg), as shown in~\cref{algorithm}. Before introducing our algorithm, we present a necessary assumption.

\begin{assumption}\label{ass:knownseedsubgraph}
\textbf{(Known seed safe subgraph)} There exists a known seed safe subgraph $G^{\safe,0} \in \mcg^{\safe}$ with the known safety value $\cc_h$ for a state-action-state triplet $(s_h^0,a_h^0,s_{h+1}^0)$ at each step $h$ of $G^{\safe,0}$.
\end{assumption}

A known seed safe subgraph is necessary for the existence of \emph{safe} RL algorithms under instantaneous hard constraints. Without it, the unsafe states and actions cannot be avoided in the first episode. Same assumptions on such a known safe set are also made in related work~\cite{pacchiano2021stochastic,amani2021safe}. As pointed out there, such an assumption is realistic since the known safe set can be obtained from existing strategies or trials with possibly low rewards.

Next, we define some notations. First, we let $\mcu_h \triangleq \{\alpha \bphi(s_h^0,a_h^0,s_{h+1}^0): \alpha\in \mbr\}$ denote the span of the feature $\bphi(s_h^0,a_h^0,s_{h+1}^0)$. Let $\ppsi(\mcu_h,\bphi_1) \triangleq \langle \bphi_1, \phii(s_h^0,a_h^0,s_{h+1}^0) \rangle \cdot \phii(s_h^0,a_h^0,s_{h+1}^0)$ denote the projection of a vector $\bphi_1$ to $\mcu_h$, where $\phii(s,a,s') \triangleq \frac{\bphi(s,a,s')}{\lVert\bphi(s,a,s')\rVert_2}$ is the normalized vector of $\bphi(s,a,s')$. Second, we let $\mcuo_h \triangleq \{\bphi_3 \in \mbr^d: \langle\bphi_3,\bphi_2\rangle=0, \forall \bphi_2\in\mcu_h\}$ denote the orthogonal complement of $\mcu_h$. Let $\ppsi(\mcuo_h,\bphi_1) \triangleq \bphi_1 - \ppsi(\mcu_h,\bphi_1)$ denote the projection of $\bphi_1$ to $\mcuo_h$. Third, we let $\phihk = \bphi(s_h^k,a_h^k,s_{h+1}^k)$ denote the feature vector of the state-action-state triplet $(s_h^k,a_h^k,s_{h+1}^k)$. Let $\lVert \boldsymbol{x} \rVert_{\blambd} = \sqrt{\boldsymbol{x}^{\text{T}}\blambd \boldsymbol{x}}$ denote the weighted $2$-norm of $\boldsymbol{x}$ with respect to $\blambd$. Let $\ident$ denote the identity matrix.

\begin{algorithm}[t]
   \caption{Least-Square Value Iteration by lookiNg ahEad and peeking backWard (\abralg)}
   \label{algorithm}
\begin{algorithmic} 
    \FOR{$k=1$ {\bfseries to} $\ck'$}
    \STATE At each step $h$, first choose the action $a_h^k=a_h(s_h^k)$ in the known seed safe subgraph $G^{\safe,0}$, then observe the next state $s_{h+1}^k$, finally observe the safety value $c_h(s_h^k,a_h^k,s_{h+1}^k)$.
    \ENDFOR
    \FOR{$k=K'+1$ {\bfseries to} $K$}
    \FOR{$h=H$ {\bfseries to} $1$}
    \STATE \textit{Step-1:} Update the estimated safety parameter $\gam$ according to~\eqref{eq:estimatesafetyparameter} and the estimated safety function $\tc_h^k$ according to~\eqref{eq:estimatesafetyvalue}.
    \STATE \textit{Step-2:} Update the estimated safe state-set:
    \begin{align}\nonumber
    \mcs_h^{k,\safe} = \{s\in\mcs \vert \exists a\in\mca, \text{ s.t.}~\eqref{eq:safetycondition1} \text{ and}~\eqref{eq:safetycondition2} \text{ hold}\},
    \end{align}
    and estimated safe action-set for states $s\in\mcs_h^{k,\safe}$:
    \begin{align}\nonumber
    & \mca_h^{k,\safe}(s) = \{a\in\mca \vert \eqref{eq:safetycondition1} \text{ and}~\eqref{eq:safetycondition2} \text{ hold for state } s\} .
    \end{align}
    \STATE \textit{Step-3:} Update the parameter $\whk$ according to~\eqref{eq:estimatemodel}.
    \STATE \textit{Step-4:} Update the estimated $Q$-values for all state-action pairs $(s,a)$ that are estimated to be safe, i.e., $s\in\mcs_h^{k,\safe}$ and $a\in \mca_h^{k,\safe}(s)$, according to~\eqref{eq:estimateqvalue}.
    \ENDFOR
    \FOR{$h=1$ {\bfseries to} $H-1$}
    \STATE \textit{Step-5:} Observe the current state $s_h^k$, and then choose an action according to~\eqref{eq:chooseaction}.
    \ENDFOR
    \ENDFOR
\end{algorithmic}
\end{algorithm}

Our~\abralg~algorithm contains a simple initialization phase and \emph{a more important learning phase that involves four novel ideas}. In the initialization phase,~\abralg~purely explores inside the known seed safe subgraph $G^{\safe,0}$, i.e., the first for-loop in~\cref{algorithm}, where $\ck'$ is a tunable parameter. This initialization phase borrows the idea in bandits with instantaneous hard constraints for obtaining and preparing some parameter information for the later learning phase~\cite{amani2019linear}.

From now on, we focus on introducing the five steps in the \emph{learning phase} (i.e., the second for-loop in~\cref{algorithm}) that involves four important novel ideas. In Step-1,~\abralg~updates the regularized least-square estimator of the \emph{projected} safety parameter $\ppsi(\mcuo_h,\bgam_h^*)$ as follows:
\begin{align}\label{eq:estimatesafetyparameter}
\gam = (\lamb)^{-1} \sum_{\tau=1}^{k-1} \ppsi(\mcuo_h,\phiht) \ppsi(\mcuo_h,\hc_h^{\tau}),
\end{align}
where the Gram matrix $\lamb = \lambda \ppsi(\mcuo_h,\ident) + \sum_{\tau=1}^{k-1} \ppsi(\mcuo_h,\phiht) \ppsi^{\text{T}}(\mcuo_h,\phiht)$, $\ppsi(\mcuo_h,\ident) = \ident - \phii(s_h^0,a_h^0,s_{h+1}^0) \phii^{\text{T}}(s_h^0,a_h^0,s_{h+1}^0)$, $\ppsi(\mcuo_h,\hc_h^{\tau}) = \hc_h^{\tau} - \frac{\langle \ppsi(\mcu_h,\bphi_{h,h+1}^{\tau}), \phii(s_h^0,a_h^0,s_{h+1}^0) \rangle}{\lVert \bphi(s_h^0,a_h^0,s_{h+1}^0) \rVert_2} \cdot c_h^0$ and $\lambda\geq d$ is a tunable parameter. Then, we estimate the safety function as follows:
\begin{align}
\tc_h^k(s,a,s') = \frac{\langle \ppsi(\mcu_h,\bphi_1), \phii(s_h^0,a_h^0,s_{h+1}^0) \rangle}{\lVert \bphi(s_h^0,a_h^0,s_{h+1}^0) \rVert_2} \cdot c_h^0 + \langle \gam, \ppsi(\mcuo_h,\bphi_1) \rangle + \bett \lVert \ppsi(\mcuo_h,\bphi_1) \rVert_{(\lamb)^{-1}}, \label{eq:estimatesafetyvalue}
\end{align}
where $\bphi_1 = \bphi(s,a,s')$ and $\bett$ is a tunable parameter given in~\cref{thm:regret}. Notice that, on the right-hand-side (RHS) of~\eqref{eq:estimatesafetyvalue}, the first term is the projected safety value of $(s,a,s')$ on $\mcu_h$, the second term is the projected empirical safety value of $(s,a,s')$ on $\mcuo_h$, and the last term is an upper-confidence-bound (UCB) bonus for the safety uncertainty. Thus, the accuracy of the safety value $\tc_h^k$ depends on how accurate $\gam$ in~\eqref{eq:estimatesafetyparameter} is and how small the safety uncertainty is. Next, Step-2 in~\cref{algorithm} is based on $\tc_h^k$ and involves our first novel idea that is critical for guaranteeing safety.

\textbf{Idea I: Constructing safe subgraphs by looking ahead.} As we discussed in~\cref{sec:introduction}, in bandits and RL with only unsafe actions, the safety at each step can be estimated \emph{separately}. In sharp contrast, due to the unsafe states and transitions in our setting, we must handle possible unsafe \emph{future} steps. Consider~\cref{fig:sketchsubgraph} as an example. Even though taking action $a=1$ for the third state (the third square from the top) at step $h=3$ is safe for $h=3$, by doing so, the unsafe state (the red dashed square) at $h=5$ will be visited no matter what action would be taken at $h=4$. To resolve this new challenge, our idea is to construct special safe subgraphs where any action only results in safe future (not even just next) states. To achieve this, in Step-2, we estimate the safe state-set $\mcs_h^{k,\safe}$ and action-set $\mca_h^{k,\safe}(s)$ in a \emph{backward} manner based on the two conditions below:
\begin{align}
& \text{Condition 1: } \max_{s'\in\mcs_h(s,a)} \tc_h^k(s,a,s') \leq \bc. \label{eq:safetycondition1} \\
& \text{Condition 2: } \mcs_h(s,a) \subseteq \mcs_{h+1}^{k,\safe}. \label{eq:safetycondition2}
\end{align}
Notice that, (i) condition 1 requires that by choosing action $a$ for state $s$, the instantaneous hard constraint is always satisfied at step $h$; (ii) condition 2 requires that all possible next states in $\mcs_h(s,a)$ must be safe for next step $h+1$. Thus, with conditions 1 and 2 satisfied simultaneously in a backward manner, all (not just next) steps $h'\geq h$ following $(s,a)$ must be safe. Please see~\cref{thm:constraintsatisfaction} for the safety performance of~\abralg~at all steps in any episode.

Moreover, since the linear mixture MDP induces a linear form of the $Q$-value function as follows:
\begin{align} \label{eq:optqvalue}
Q_h^*(s,a) = \min\{ r_h(s,a) + \langle w_h^*, \bphi_{V_{h+1}^*}(s,a) \rangle, \ch \},
\end{align}
in Step-3 of~\cref{algorithm}, we update the regularized least-square estimator of the parameter $w_h^*$ in~\eqref{eq:optqvalue} as follows:
\begin{align}\label{eq:estimatemodel}
\whk = (\lambb)^{-1} \sum_{\tau=1}^{k-1} \bphi_{h,V_{h+1}^{\tau}}^{\tau} V_{h+1}^{\tau}(s_{h+1}^{\tau}),
\end{align}
where the Gram matrix $\lambb = \lambda \ident + \sum_{\tau=1}^{k-1} \bphi_{h,V}^{\tau} \bphi_{h,V}^{\tau,\text{T}}$ and $\bphi_{h,V}^{\tau} = \bphi_{V}(s_h^{\tau},a_h^{\tau})$. Then, in Step-4 of~\cref{algorithm}, we update the $Q$-values of the safe state-action pairs as follows:
\begin{align}
& Q_h^k(s,a) = \min\Big\{ \ch, r_h(s,a) + \langle \whk, \bphi_{V_{h+1}^k}(s,a) \rangle \nonumber \\
& + \eps \cdot \lVert \bphi_{V_{h+1}^k}(s,a) \rVert_{(\lambb)^{-1}} + \epss \cdot \max_{s'\in \mcs_h(s,a)} \lVert \ppsi(\mcuo_h,\bphi(s,a,s')) \rVert_{(\lamb)^{-1}} \nonumber \\
& + \epsss \max_{(s_{h'},a_{h'},s') \in \mcg_h(s)} \lVert \ppsi(\mcuo_{h'},\bphi(s_{h'},a_{h'},s')) \rVert_{(\blambd_{h',1}^k)^{-1}} + \epssss \max_{s'\in \mcs_1(s_1,a_1^k)} \lVert \ppsi(\mcuo_1,\bphi(s_1,a_1^k,s')) \rVert_{(\blambd_{1,1}^k)^{-1}} \Big\}, \label{eq:estimateqvalue}
\end{align}
where $\eps=\bett+1$, $\epss$, $\epsss$ and $\epssss$ are given soon later, and $\mcg_h(s)$ is the set of subsubgraphs starting from state $s$ at step $h$. Notice that (i) the term with $\eps$ on the RHS of~\eqref{eq:estimateqvalue} is the standard Hoeffding bonus term; (ii) the terms with $\epss$, $\epsss$ and $\epssss$ are three new bonus terms that we construct for capturing the impacts from future and past steps. We elaborate our novel ideas in these new bonus terms below.

\textbf{Idea II: Encouraging to explore the transitions with higher uncertainty (i.e., looking ahead).} As we mentioned in~\cref{sec:introduction}, there is a new pessimism-optimism dilemma in our setting. Specifically, according to the optimism-in-face-of-uncertainty principle~\cite{azar2017minimax}, algorithms need to learn optimistically to achieve a sublinear regret. However, to avoid the unsafe states and transitions in our setting, algorithms have to be relatively pessimistic. To resolve this new dilemma, we construct a bonus term to encourage~\abralg~to explore the transitions with \emph{higher} uncertainty. To achieve this, this new bonus term, i.e., the term with $\epss$ in~\eqref{eq:estimateqvalue}, is designed to be the maximum UCB bonus over all possible next-states $s'\in \mcs_h(s,a)$.

Then, another new difficulty here is how to quantify the parameter $\epss$ for such a bonus term, such that a sublinear regret can be achieved. To resolve this problem, we set
\begin{align}\label{eq:epss}
\epss = \frac{\frac{4\bett H}{\dddelt} \frac{\bc-\bc_{h'}^0-\ddelt}{\bc-c_{h}^0-\ddelt}}{\bc-\bc_{h'}^0-\ddelt - \frac{\bc-\bc_{h'}^0-\ddelt}{\bc-c_{h}^0-\ddelt}\kappa},
\end{align}
where $\bc_{h'}^0 = \max_{h\leq h'\leq H} c_{h'}^0$, $\ddelt = L \cdot \max_{s,a,h}\max_{s',s''\in\mcs_h(s,a)} \lVert \bphi(s,a,s')-\bphi(s,a,s'') \rVert_2$, and $\dddelt$ and $\kappa$ are scalars given in~\cref{thm:regret}. Notice that when all states are assumed to be safe, all terms related to next state $s'$ would be $0$. Then, $\epss$ would be $\frac{4\bett \ch}{\bc-c_h^0}$, which is the same as the parameter used in the setting with only unsafe actions~\cite{amani2021safe}. However, one difference here is that we need to handle the \emph{worst} transition. Thus, the denominator needs to capture the \emph{smallest} safety balance, i.e., $\bc-\bc_{h'}^0-\ddelt$, that is left for exploration. Another difference is that even though the safety balance at current step is small, if the safety balance in future steps is large, the algorithm should still be encouraged to explore. To capture such a new special impact from future steps, we add the term $\frac{\bc-\bc_{h'}^0-\ddelt}{\bc-c_{h}^0-\ddelt}$, such that $\epss$ increases with the ratio between future safety balance $\bc-\bc_{h'}^0-\ddelt$ and current balance $\bc-c_{h}^0-\ddelt$. Please see~\cref{app:lemmaimpactfromfuture} for details.

\textbf{Idea III: Encouraging to explore the future subsubgraphs with higher uncertainty (i.e., looking ahead).} Idea II by itself is not sufficient to achieve a sublinear regret. This is because future uncertainty could prevent the algorithm from choosing the optimal action at current step. Consider~\cref{fig:sketchsubgraph} as an example and assume $G_1$ is the optimal subgraph. Even though the safety value at $h=1$ has been precisely known, the algorithm may still not choose the optimal action $a=1$ due to future uncertainty, e.g., it is uncertain whether the first two states at $h=2$ are safe or not. This is another critical difference compared with the case without instantaneous constraints or with only unsafe actions. Hence, at each step, the algorithm should be encouraged to explore the state that induces a future subsubgraph with \emph{higher} uncertainty. To achieve this, we construct a new bonus term (the term with $\epsss$ in~\eqref{eq:estimateqvalue}) that is the maximum UCB bonus over all future subsubgraphs $G_h(s)$, where 
\begin{align}\label{eq:epsss}
\epsss = \frac{4\bett H / \dddelt}{\bc-\bc_{h'}^0-\ddelt - \kappa}.
\end{align}
Differently from $\epss$ in~\eqref{eq:epss}, the term $\frac{\bc-\bc_{h'}^0-\ddelt}{\bc-c_{h}^0-\ddelt}$ does not appear in $\epsss$, because the maximization in this bonus term is taken over all states and actions in $\mcg_h(s)$, which already captures the impacts from future steps.

\textbf{Idea IV: Encouraging to explore the past subsubgraphs with higher uncertainty (i.e., peeking backward).} Surprisingly, with Ideas II and III alone, a sublinear regret may still not be achieved. This is because of the tricky impact from past steps. Intuitively, by choosing a different action at step $h=1$, what will happen in future steps could be completely different. To resolve this new challenge, we construct a new bonus term, i.e., the term with $\epssss$ in~\eqref{eq:estimateqvalue}, to encourage~\abralg~to explore the past subsubgraphs with \emph{higher} uncertainty, where
\begin{align}\label{eq:epssss}
\epssss = \frac{4\bett H}{ \bc-c_{1}^0-\ddelt }.
\end{align}
Differently from $\epsss$ in~\eqref{eq:epsss}, the denominator here depends on $c_1^0$ (not $\bc_{h'}^0$) at step $h=1$ that affects all future steps. 

Finally, in Step-5,~\abralg~chooses an action
\begin{align}\label{eq:chooseaction}
a_h^k = {\arg\max}_{a \in \mca_h^{k,\safe}(s,a)} Q_h^k(s_h^k,a).
\end{align}

\section{Theoretical Results}\label{sec:finalresults}

In this section, we provide the safety and regret guarantees for our~\abralg~algorithm, and a regret lower-bound. 

Before these, we make two necessary assumptions for obtaining good theoretical performance in our setting.~\cref{ass:starconvexity} below is from~\cite{amani2021safe}. The counter-example given there shows that such an assumption is required for the existence of safe algorithms with sublinear regrets. We let $\bpphi_{\boldsymbol{\alpha}}(s,a) \triangleq [\alpha(s')\bphi(s,a,s')]_{s'\in\mcs(s,a)}$ denote a matrix with $\alpha(s')\bphi(s,a,s')$ in each column, where $\alpha(s')$ is a scalar.

\begin{assumption}\label{ass:starconvexity}
\textbf{(Star convexity)} For all states $s_h$ at step $h$, the set $\mcd(s_h) \triangleq \{\bpphi_{\mathbf{1}}(s_h,a): a\in\mca\} \cup \{\bpphi_{\mathbf{1}}(s_h^0,a_h^0): \bpphi_{\mathbf{1}}(s_h^0,a_h^0,\cdot)=\bphi(s_h^0,a_h^0,s_{h+1}^0)\}$ is a star convex set around the safe feature $\bphi(s_h^0,a_h^0,s_{h+1}^0)$, i.e., for all $\bpphi_{\mathbf{1}}(s_h,a) \in \mcd(s_h)$ and $\boldsymbol{\alpha}: \mcs_h(s_h,a) \to [0,1]$ with $\lVert \boldsymbol{\alpha} \rVert_1 = 1$, we have $\bpphi_{\boldsymbol{\alpha}}(s_h,a) + \bpphi_{\boldsymbol{1-\alpha}}(s_h^0,a_h^0) \in \mcd(s_h)$, where $\mathbf{1}$ denotes a vector with all entries equal to $1$.
\end{assumption}

Next, we let $f_h(\bphi_1-\bphi_2) \triangleq \frac{\lVert \bphi_1 - \bphi_2 \rVert_2}{\lVert \bphi(s_h^*,a_h^*,s_{h+1}^*) - \bphi(s_h^0,a_h^0,s_{h+1}^0) \rVert_2}$ denote the $\mathcal{L}_2$-distance between features $\bphi_1$ and $\bphi_2$, normalized by the $\mathcal{L}_2$-distance between the unknown optimal feature $\bphi(s_h^*,a_h^*,s_{h+1}^*)$ and the known safe feature $\bphi(s_h^0,a_h^0,s_{h+1}^0)$ at step $h$. Let $g(r_{h,1}-r_{h,2}) \triangleq \frac{r_{h,1}-r_{h,2}}{r_h(s_h^*,a_h^*)}$ denote the reward difference $r_{h,1}-r_{h,2}$, normalized by the reward of the unknown optimal state-action pair at step $h$.

\begin{assumption}\label{ass:lipschitzreward}
\textbf{(Lipschitz rewards and transitions)} There exists $\delt \in [0,1]$, s.t., for any two safe state-action pairs $(s(i),a(i))$ and $(s(j),a(j))$ at step $h$,
\begin{align}
& g\left( r_h(s(i),a(i))-r_h(s(j),a(j)) \right) \leq \delt f_h\left( \bphi(s(i),a(i),\cdot)-\bphi(s(j),a(j),\cdot) \right), \label{eq:lipschitzreward} \\
& f_{h'} \left( \bphi(s_{h'}(i),a_{h'}(i),\cdot)-\bphi(s_{h'}(j),a_{h'}(j),\cdot) \right) \leq \deltt f_h \left( \bphi(s(i),a(i),\cdot)-\bphi(s(j),a(j),\cdot) \right), \label{eq:lipschitztransition}
\end{align}
where $(s_{h'}(i),a_{h'}(i))$ ($h'>h$) is the descendant of the state-action pair $(s(i),a(i))$ in the safe subgraphs.
\end{assumption}

Note that~\eqref{eq:lipschitzreward} implies that rewards are $\delt$-Lipschitz: as feature differences (RHS of~\eqref{eq:lipschitzreward}) become smaller, reward differences (LHS of~\eqref{eq:lipschitzreward}) become smaller; and~\eqref{eq:lipschitztransition} implies that safe transitions are $\deltt$-Lipschitz: as feature differences at current step (RHS of~\eqref{eq:lipschitztransition}) become smaller, feature differences at future steps $h'$ (LHS of~\eqref{eq:lipschitztransition}) become smaller.

When the unsafe states and transitions are taken into consideration, to still achieve a sublinear regret,~\cref{ass:lipschitzreward} is required. This is because (i) if rewards are not Lipschitz, even though a feature vector \emph{close to} the optimal one is learned to be safe, the learner could still suffer from a large reward gap compared with the optimal safe decision, which could result in a linear-to-$\ct$ regret; (ii) if safe transitions are not Lipschitz, even though the optimal safe decision at a step \emph{has been learned}, the learner could still be far away from optimum \emph{in future steps}, and hence suffer from a large reward gap, which could also result in a linear-to-$\ct$ regret. 

\subsection{Performance Guarantees and A Lower Bound}\label{subsec:results}

In \cref{thm:constraintsatisfaction} below, we show that~\abralg~is safe.

\begin{theorem}\label{thm:constraintsatisfaction}
\textbf{(Safety)} For any $p\in (0,1)$, with probability $1-p$, our~\abralg~algorithm satisfies the instantaneous hard constraint~\eqref{eq:defhardconstraint} at all steps $h$ of all episodes $k$.
\end{theorem}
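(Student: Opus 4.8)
The plan is to prove safety in two stages: first establish a high-probability event on which the estimated safety function over-estimates the true one, and then show by a backward induction that remaining inside the estimated safe subgraph forces the true constraint to hold at every realized transition. Throughout, episodes $k \leq \ck'$ are safe for free, since in the initialization phase~\abralg~plays only inside the known seed safe subgraph $G^{\safe,0} \in \mcg^{\safe}$, which satisfies~\eqref{eq:defhardconstraint} by~\cref{ass:knownseedsubgraph}; so the work is all for $k > \ck'$.

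First I would define the good event $\mathcal{E}$ on which, simultaneously for all $h$, all $k$, and all triplets $(s,a,s')$,
\begin{align}\nonumber
c_h(s,a,s') \leq \tc_h^k(s,a,s').
\end{align}
To show $\mbp(\mathcal{E}) \geq 1-p$, I would decompose both the feature $\bphi(s,a,s')$ and the unknown parameter $\bgam_h^*$ along the one-dimensional span $\mcu_h$ and its orthogonal complement $\mcuo_h$. The $\mcu_h$-component of $\bgam_h^*$ is pinned down exactly by the known seed value $c_h^0 = \langle \bgam_h^*, \bphi(s_h^0,a_h^0,s_{h+1}^0)\rangle$, which is reproduced exactly by the first term of $\tc_h^k$ in~\eqref{eq:estimatesafetyvalue}. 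It then remains to control the $\mcuo_h$-component: interpreting the residual $\ppsi(\mcuo_h,\hc_h^{\tau})$ as a $\sigma$-subGaussian noisy linear measurement of $\ppsi(\mcuo_h,\bgam_h^*)$, I would apply a self-normalized (Abbasi--Yadkori-type) concentration inequality inside the subspace $\mcuo_h$ to the ridge estimator $\gam$ of~\eqref{eq:estimatesafetyparameter}, obtaining
\begin{align}\nonumber
\left| \langle \gam - \ppsi(\mcuo_h,\bgam_h^*),\, \ppsi(\mcuo_h,\bphi(s,a,s')) \rangle \right| \leq \bett \, \lVert \ppsi(\mcuo_h,\bphi(s,a,s')) \rVert_{(\lamb)^{-1}}
\end{align}
for the choice of $\bett$ stated in~\cref{thm:regret}. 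Adding the exact $\mcu_h$-term to this bound and to the UCB bonus in~\eqref{eq:estimatesafetyvalue} gives $c_h \leq \tc_h^k$, i.e., the event $\mathcal{E}$.

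Second, conditioned on $\mathcal{E}$, I would argue safety through the backward construction of $\mcs_h^{k,\safe}$ and $\mca_h^{k,\safe}$ in Step-2. The key implication is that whenever a pair $(s,a)$ satisfies Condition~1 in~\eqref{eq:safetycondition1}, the event $\mathcal{E}$ upgrades it to a genuine guarantee: $c_h(s,a,s') \leq \tc_h^k(s,a,s') \leq \bc$ for every $s' \in \mcs_h(s,a)$. I would also verify that $G^{\safe,0}$ is always contained in the estimated safe subgraph: since the seed feature lies in $\mcu_h$, its orthogonal projection vanishes, so $\tc_h^k(s_h^0,a_h^0,s_{h+1}^0) = c_h^0 \leq \bc$ and Condition~1 holds, while Condition~2 in~\eqref{eq:safetycondition2} propagates backward along the seed subgraph; this guarantees every $\mca_h^{k,\safe}(s)$ reached by the algorithm is non-empty.

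Finally I would run the forward execution argument within each episode $k > \ck'$. Starting from $s_1 \in \mcs_1^{k,\safe}$, the action $a_h^k$ chosen by~\eqref{eq:chooseaction} lies in $\mca_h^{k,\safe}(s_h^k)$; Condition~1 then forces $c_h(s_h^k,a_h^k,s_{h+1}^k) \leq \bc$ for the realized next state, and Condition~2 forces $s_{h+1}^k \in \mcs_{h+1}^{k,\safe}$, which sustains the induction to step $h+1$. Hence~\eqref{eq:defhardconstraint} holds at every step of every episode on $\mathcal{E}$, yielding probability $1-p$. I expect the main obstacle to be the concentration step: unlike the standard linear analysis, the estimator tracks only the orthogonal-complement component $\ppsi(\mcuo_h,\bgam_h^*)$, so I must confirm that subtracting the known $\mcu_h$-contribution in $\ppsi(\mcuo_h,\hc_h^{\tau})$ produces a clean subGaussian measurement and that the subspace regularizer $\lambda\,\ppsi(\mcuo_h,\ident)$ yields the correct effective-dimension scaling in $\bett$; the backward-induction safety argument, though the conceptually novel contribution, is logically direct once $\mathcal{E}$ is secured.
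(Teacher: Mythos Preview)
Your proposal is correct and follows essentially the same approach as the paper: both establish a high-probability concentration bound $\lVert \ppsi(\mcuo_h,\bgam_h^*) - \gam \rVert_{\lamb} \leq \bett$ via a self-normalized (Abbasi--Yadkori) argument on the orthogonal-complement estimator, deduce the UCB property $c_h \leq \tc_h^k$, and then combine Condition~1 with the backward-propagated Condition~2 to certify safety at every step. Your write-up is in fact slightly more complete than the paper's, since you explicitly handle the initialization phase, verify non-emptiness of the estimated safe sets via the seed subgraph, and spell out the forward-execution induction, none of which the paper makes explicit.
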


Thanks to our Idea I in~\cref{sec:algorithm} for guaranteeing safety, the proof of~\cref{thm:constraintsatisfaction} (in~\cref{app:thmconstraintsatisfaction}) focuses on quantifying the accuracy of the estimated safety value in~\eqref{eq:estimatesafetyvalue}. Below,~\cref{thm:regret} provides the regret upper-bound of~\abralg.

\begin{theorem}\label{thm:regret}
\textbf{(Regret)} By setting $\ddeltt=\deltt$, $\lambda = d$, $\bett = \max\left\{ \sigma \sqrt{d \log\left( \frac{2+2\ct D^2/ \lambda}{p} \right)} + \sqrt{\lambda} L, b_{\bett} d\ch \sqrt{\log\left( \frac{dT}{p} \right)} \right\}$, $\ck' = 4 \bett D \sqrt{T} \log\left(\frac{d}{p}\right)$, where $\ct=\ch\ck$, $\kappa = \frac{4\bett D}{\lambda+\leps \ck'}$ and $\delc = \bc-\bc_1^0-\ddelt$, then there exist absolute constants $b_{\bett}>0$ and $\leps>0$, with probability $1-p$, the regret of~\abralg~is upper-bounded as follows:
\begin{align}\label{eq:regret}
R^{\abralg} \leq \left[ \eps + \epssss + \max_{h} \left( \epss+\epsss \right) \right] \sqrt{2d\ch\ct \log\left( 1+\ct \right)} + 2H\sqrt{\ct\log\left( \frac{2d\ct}{p} \right)} + \ch\ck' + \frac{D}{\leps}\left(\frac{\ck}{\ck'}-1\right).
\end{align}
\end{theorem}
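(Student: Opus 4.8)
The plan is to follow the optimism-in-the-face-of-uncertainty template for value iteration, but to replace the usual pointwise optimism invariant (which fails here, Difficulty~I) by a subgraph-aware invariant, and to separately charge the value lost because the estimated safe subgraph may exclude the true optimal one. First I would fix a high-probability good event. Applying a self-normalized (Abbasi-Yadkori-type) concentration bound to the regularized estimators $\gam$ in \eqref{eq:estimatesafetyparameter} and $\whk$ in \eqref{eq:estimatemodel}, together with the norm bounds $\|\bmu_h^*\|_2,\|\bgam_h^*\|_2\le L$ and $\|\bphi_{V_h}\|_2\le D$, I would show that $\bett$ is simultaneously a valid confidence radius for the safety estimator (its first, noise-driven branch) and for the model estimator (its second, $dH$-scaled branch obtained after a covering argument over value functions). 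On this event the safety estimate obeys $|\tc_h^k(s,a,s')-c_h(s,a,s')|\le 2\bett\|\ppsi(\mcuo_h,\bphi(s,a,s'))\|_{(\lamb)^{-1}}$ and the model estimate obeys $|\langle\whk-w_h^*,\bphi_{V_{h+1}^k}(s,a)\rangle|\le\eps\|\bphi_{V_{h+1}^k}(s,a)\|_{(\lambb)^{-1}}$ with $\eps=\bett+1$; the term $2\ch\sqrt{\ct\log(2d\ct/p)}$ arises by controlling, via Azuma-Hoeffding, the transition-noise martingale $V_{h+1}^k(s_{h+1}^k)-\langle\bmu_h^*,\bphi_{V_{h+1}^k}(s_h^k,a_h^k)\rangle$ whose increments are bounded by $\ch$.

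The hard part is the optimism step, where Difficulties~I and~II must be resolved. Because \abralg{} is confined to the estimated safe subgraph, the optimal safe feature $\bphi(s_h^*,a_h^*,s_{h+1}^*)$ may be absent from $\mca_h^{k,\safe}(s)$, so $V_h^k\ge V_h^*$ cannot hold pointwise. My approach is to use star convexity (\cref{ass:starconvexity}) to show that, on the good event, $\mca_h^{k,\safe}(s)$ always contains a safe feature on the segment joining the optimal feature and the seed-safe feature $\bphi(s_h^0,a_h^0,s_{h+1}^0)$, with normalized feature-distance to the optimum controlled by the residual safety uncertainty $\kappa=4\bett D/(\lambda+\leps\ck')$; \cref{thm:constraintsatisfaction} already certifies such features are genuinely safe. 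The Lipschitz reward bound \eqref{eq:lipschitzreward} then turns this feature gap into a per-step reward gap, and the Lipschitz transition bound \eqref{eq:lipschitztransition} propagates it to every descendant in the subgraph so it does not amplify. The crux is to verify that the three new bonus terms in \eqref{eq:estimateqvalue} jointly dominate the value forgone by taking this feasible-but-suboptimal action: I would check that the denominators of $\epss$ in \eqref{eq:epss}, $\epsss$ in \eqref{eq:epsss} and $\epssss$ in \eqref{eq:epssss} --- built respectively from the smallest future safety balance $\bc-\bc_{h'}^0-\ddelt$, the future-to-current balance ratio, and the step-$1$ balance $\bc-c_1^0-\ddelt$ --- are exactly the quantities making $Q_h^k(s_h^k,a_h^k)$ plus these bonuses upper-bound $Q_h^*(s_h^k,a_h^*)$, which yields the subgraph-aware invariant $V_1^k(s_1)\ge V_1^*(s_1)$ up to a residual $\kappa$-term. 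Separating the future impact (captured by $\epss,\epsss$, looking ahead) from the past impact (captured by $\epssss$, peeking backward) is precisely what lets these denominators stay positive, and is Difficulty~II.

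With the invariant established I would bound, for each learning episode $k>\ck'$, $V_1^*(s_1)-V_1^{\pi^k}(s_1)\le V_1^k(s_1)-V_1^{\pi^k}(s_1)$ and unroll the Bellman recursion along the realized trajectory $(s_h^k,a_h^k)$. Each step contributes the four bonus terms of \eqref{eq:estimateqvalue} plus a transition-noise martingale difference; telescoping over $h$ and summing over $k$, the martingale part reproduces $2\ch\sqrt{\ct\log(2d\ct/p)}$, while each bonus type is summed via the elliptical-potential (log-determinant) lemma applied to its Gram matrix ($\lambb$ for the $\eps$-bonus, the $\lamb$-type matrices for the safety-based bonuses); for the future- and past-subsubgraph bonuses this first requires relating the maximum uncertainty over a subsubgraph to the uncertainties actually realized along the trajectory, after which the log-determinant bound applies and produces the common factor $\sqrt{2d\ch\ct\log(1+\ct)}$. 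Collecting coefficients yields the leading term $[\eps+\epssss+\max_h(\epss+\epsss)]\sqrt{2d\ch\ct\log(1+\ct)}$, the $\max_h$ arising because the transition/future bonuses are charged at the worst step along the subgraph. I would then add $\ch\ck'$ for the $\ck'$ pure-exploration initialization episodes (each losing at most $\ch$), and bound the residual value loss from the leftover $\kappa$-uncertainty, accumulated over the $\ck-\ck'$ learning episodes, by $\frac{D}{\leps}(\frac{\ck}{\ck'}-1)$, using that the initialization drives the minimum eigenvalue of the relevant Gram matrix to at least $\lambda+\leps\ck'$. I expect the optimism step to be the main obstacle: certifying that the three bonus parameters in \eqref{eq:epss}, \eqref{eq:epsss} and \eqref{eq:epssss} jointly cover the combined forward-and-backward value loss requires the delicate balance-ratio calibration together with a two-directional induction along the safe subgraph, and it is here that both \cref{ass:starconvexity} and \cref{ass:lipschitzreward} are essential.
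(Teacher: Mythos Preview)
Your proposal is essentially on the same track as the paper: the high-level decomposition (good event via self-normalized concentration, an optimism-type invariant at step $1$, Bellman unrolling with Azuma--Hoeffding, and elliptical-potential summation of each bonus type) matches exactly, as does your identification of where star convexity, the two Lipschitz conditions, and the future/past separation enter.

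There is one technical difference worth flagging. You frame the optimism step as ``$V_1^k(s_1)\ge V_1^*(s_1)$ up to a residual $\kappa$-term'', with the residual later absorbed into the final $\frac{D}{\leps}(\frac{\ck}{\ck'}-1)$ term. The paper instead makes the invariant \emph{exact} by redefining the analysis-only value function $V_h^k$ on a specially constructed restricted set: it introduces action sets $\ta_h^k(s)=\{a\in\mca_h^{k,\safe}(s):\tf_h(s,a)\le\balpp\}\cup\{a_h^k(s)\}$ and state sets $\ts_h^k$ (close-to-optimal safe states union the visited states), sets $V_h^k(s)=\max_{a\in\ta_h^k(s)}Q_h^k(s,a)$, and then proves \emph{two} invariants by backward induction (\cref{lemma:invariants}): (i) $V_h^k(s)\ge V_h^*(s)$ when $s\in\mcs_h^*\cap\ts_h^k$, and (ii) $V_h^k(\hs)\ge V_h^*(s)$ when the optimal state $s$ is \emph{not} in $\ts_h^k$ but some surrogate $\hs\in\ts_h^k\setminus\mcs_h^*$ is. Invariant~(ii) is the piece your sketch does not explicitly isolate, and it is precisely where the past-impact bonus $\epssss$ is charged; without it the induction would break whenever the trajectory leaves $\mcs_h^*$. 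With this device the first bracket in the decomposition is exactly nonpositive, and the $\frac{D}{\leps}(\frac{\ck}{\ck'}-1)$ term arises entirely in the second bracket from relating the max-over-subsubgraph bonuses to realized-trajectory uncertainties (via the initialization-driven lower bound on the Gram eigenvalue). Your ``approximate invariant plus residual'' route may be workable, but the paper's restricted-set construction is what allows the induction to close cleanly in both the found-optimal-state and missed-optimal-state cases.
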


The regret in~\eqref{eq:regret} is dominated by the first term on the RHS of~\eqref{eq:regret} that results from the aforementioned new challenges due to the instantaneous hard constraint. Thus, incorporated with the values of the parameters,~\cref{thm:regret} indicates that the regret of~\abralg~is upper-bounded by $\tilde{O}\left( \frac{d\ch^3 \sqrt{d\ck}}{\bc-\bc_1^0-\ddelt} \right)$. Notably, it tightly matches the state-of-the-art regret $\tilde{O}\left( \frac{d\ch^3 \sqrt{d\ck}}{\bc-\bc_1^0-\ddelt} \right)$ in the setting with only unsafe actions~\cite{amani2021safe} and nearly matches that $\tilde{O}(d\ch^2\sqrt{\ck})$ in the unconstrained linear mixture MDP~\cite{jia2020model}. \emph{To the best of our knowledge, this is the first such result in the literature.} Further, we provide a lower bound in~\cref{thm:lowerbound} below that shows that the dependency on the safety term $\bc-\bc_1^0-\ddelt$ is necessary.

\begin{theorem}\label{thm:lowerbound}
\textbf{(A lower bound)} Assuming $\ck \geq 32\underline{R}$. The regret of any safe algorithm $\pi$ is lower-bounded as follows:
\begin{align}\label{eq:lowerbound}
R^{\pi} \geq \underline{R} \triangleq \max\left\{ \frac{dH\sqrt{\ck}}{16\sqrt{2}}, \frac{H/24}{(\bc-\bc_1^0-\ddelt)^2} \right\}.
\end{align}
\end{theorem}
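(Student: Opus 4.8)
\section*{Proof proposal for \texorpdfstring{\cref{thm:lowerbound}}{Theorem 3}}

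The plan is to establish the two arguments of the maximum in~\eqref{eq:lowerbound} separately, since $R^{\pi} \geq \max\{A,B\}$ follows at once from $R^{\pi}\geq A$ and $R^{\pi}\geq B$. Both bounds come from exhibiting a small family of hard instances that no safe algorithm can reliably distinguish, and then invoking a change-of-measure (information-theoretic) argument to force suboptimal play. The assumption $\ck \geq 32\underline{R}$ is used to guarantee that the episode horizon is long enough for the $\sqrt{\ck}$-scaling to take effect and for the safety-exploration phase below to fit within the $\ck$ episodes.

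For the first term $\frac{d\ch\sqrt{\ck}}{16\sqrt{2}}$, I would adapt the standard hard-instance construction for unconstrained linear mixture MDPs (as in the lower bounds of~\cite{zhou2021nearly,zhou2021provably}). Concretely, embed a $d$-dimensional hidden parameter into $\mbp_h(\cdot \mid s,a)=\langle \bmu_h^*,\bphi(s,a,\cdot)\rangle$ so that each coordinate encodes an independent hidden sign biased by only $\Theta(1/\sqrt{\ck})$, observable solely through noisy transitions, while the reward pays off only when all $d$ coordinates are correctly identified. Crucially, I would choose the safety function so that \emph{every} action is safe with a large margin (e.g.\ $c_h\equiv 0\leq\bc$), making constraint~\eqref{eq:defhardconstraint} vacuous; any safe algorithm then reduces to an unconstrained one on these instances, and an Assouad-type argument over the $d$ independent coordinate-testing sub-problems yields $d\ch\sqrt{\ck}$, with the factor $\ch$ coming from the per-episode reward scale.

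For the second term $\frac{\ch/24}{\delc^2}$, where $\delc=\bc-\bc_1^0-\ddelt$, I would build a pair of instances $\mcm^+$ and $\mcm^-$ identical in reward and transition but whose safety functions differ by $\Theta(\delc)$ along the feature direction of a single ``profitable'' action $a_1$ at step $1$. In $\mcm^+$ this action is safe (sitting exactly on the margin set by $\delc$) and unlocks a trajectory earning $\Theta(\ch)$ more cumulative reward than the seed-safe policy; in $\mcm^-$ the same action is unsafe by $\Theta(\delc)$. Since $\pi$ must satisfy~\eqref{eq:defhardconstraint} with high probability in \emph{both} instances, it cannot execute $a_1$ in $\mcm^-$, and because it cannot yet tell the two instances apart, it is forced to play the conservative seed action in $\mcm^+$ as well until it has certified $a_1$. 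The per-sample KL divergence between the two safety-observation laws is $\Theta(\delc^2/\sigma^2)$ (Gaussian noise), so by the Bretagnolle--Huber / Le~Cam inequality any test needs $\Omega(\delc^{-2})$ episodes, during each of which the algorithm forgoes $\Theta(\ch)$ reward in $\mcm^+$; averaging the regret over $\{\mcm^+,\mcm^-\}$ gives the claimed $\ch/\delc^2$ scaling after absorbing constants into $1/24$.

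The main obstacle is this second bound, specifically the tension that the algorithm must \emph{explore} $a_1$ to learn its safety yet is \emph{forbidden} from executing $a_1$ in the unsafe instance. The clean way around it is a shared-feature construction: I arrange $\bphi(s_1,a_0,\cdot)$ and $\bphi(s_1,a_1,\cdot)$ to have a controlled inner product so that the only informative samples about $c(a_1)$ arrive from playing the seed action (or convex combinations thereof), and the projected estimator~\eqref{eq:estimatesafetyparameter} along $\mcuo_1$ still requires $\Omega(\delc^{-2})$ seed-samples to resolve the $\Theta(\delc)$ gap. One must then verify that in $\mcm^-$ the safe-optimal value genuinely equals the seed value (so the $\Theta(\ch)$ gap in $\mcm^+$ is true regret rather than an artifact), and that Assumptions~\ref{ass:starconvexity}--\ref{ass:lipschitzreward} hold for the constructed instances, placing the lower bound squarely inside the class covered by~\cref{thm:regret}.
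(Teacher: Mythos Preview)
Your proposal is correct and follows essentially the same route as the paper: the first term is obtained by embedding the unconstrained linear-mixture lower bound (from~\cite{zhou2021nearly,zhou2022computationally}) with a vacuous safety constraint, and the second term comes from a two-instance Bretagnolle--Huber argument in which one action's safety value shifts by $\Theta(\delc)$ between instances, giving per-sample KL $\Theta(\delc^2)$ and hence $\Omega(\delc^{-2})$ episodes of conservative play costing $\Theta(\ch)$ each. The only minor difference is that the paper (following~\cite{pacchiano2021stochastic}) runs the contradiction by counting the expected pulls of the uncertain action itself, whereas you propose a shared-feature construction so information arrives through safe seed-based plays; both variants instantiate the same change-of-measure machinery.
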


\cref{thm:lowerbound} implies that the dependency of the regret of~\abralg~on $\bc-\bc_1^0-\ddelt$ is necessary. In addition, the regret of~\abralg~matches the lower bound within a factor of $\tilde{O}(\ch^2\sqrt{d})$. Same as in the setting with only unsafe actions, we conjecture that this gap can be further reduced by applying Bernstein inequality and leave this as future work. Please see~\cref{app:thmlowerbound} for the proof.

\subsection{Proof Sketch for~\cref{thm:regret}}\label{subsec:proofsketch}

In this subsection, we provide the high-level ideas for proving~\cref{thm:regret} (please see~\cref{app:thmregret} for the proof). Because of the new challenges from instantaneous hard constraints and our novel ideas in the algorithm design, there are several new difficulties in the regret analysis. The key ones are: (I) Differently from MDPs without constraints or with only unsafe actions, in our case, different policies could visit very different sets of states at each step. Hence, the commonly-used invariant on $V$-values that relies on the \emph{ergodicity} property no longer holds. (II) How to quantify the impacts when looking ahead and peeking backward. Below, we introduce our new analytical ideas, which may be of independent interest.

\textbf{Step-I: Solving difficulty I by constructing new invariants.} We construct new forms of $V$-value functions for different policies below. We let $\mcs_h^*$ denote the state set at step $h$ in the optimal safe subgraph. Let $\mcs_h^k$ denote the state set at step $h$ in the subgraph followed by policy $\pi^k$ of~\abralg~in episode $k$. Moreover, we let $\tf_h(s,a) \triangleq f_h(\bphi(s,a,\cdot)-\bphi(s_h^*,a_h^*,\cdot))$ denote the gap of transitions compared with optimal transitions. Let $\ta_h^k(s) \triangleq \{a\in\mca_h^{k,\safe}(s): \tf_h(s,a) \leq \balpp\} \cup \{a_h^k(s)\}$ capture the safe actions with transitions close to the optimal transitions, where $\balpp$ is the maximum of $\alpp$ in~\eqref{eq:impactfromfuture} and the RHS of~\eqref{eq:impactfrompast}. Let $\ts_h^k \triangleq \{s\in\mcs_h^{k,\safe}: \exists a\in\mca_h^{k,\safe}(s), \text{ s.t., } \tf_h(s,a) \leq \balpp\} 
\cup \mcs_h^k$ capture the safe states with transitions close to the optimal transitions. Next, we define the $V$-value functions of the optimal policy, estimated policy and policy $\pi^{k}$ to be
\begin{align}
& V_h^*(s) \triangleq Q_h^*(s,a_h^*(s)), \forall s \in \mcs_h^*, \label{eq:defoptvfunction} \\
& V_h^k(s) \triangleq \max_{a\in \ta_h^k(s)} Q_h^k(s,a), \forall s\in \ts_h^k, \label{eq:defanalysisvfunction} \\
& V_h^{\apik}(s) \triangleq Q_h^{\api^k}(s,a_h^k(s)), \forall s\in \mcs_h^k, \label{eq:defalgvfunction}
\end{align}
respectively. Then, the regret $R^{\abralg}$ can be decomposed into two parts, i.e., the values in the two brackets $[\cdot]$ below,
\begin{align}
R^{\abralg} = \sum\limits_{k=1}^{\ck} \left\{ [ V_1^*(s_1) - V_1^{k}(s_1) ] + [ V_1^k(s_1) - V_1^{\api^k}(s_1) ] \right\}. \label{eq:regretdecompose}
\end{align}
To upper-bound the regret, we prove that, with high probability, (i) the value in the first bracket of~\eqref{eq:regretdecompose} is non-positive; (ii) the value in the second bracket can be upper-bounded. Result (ii) can be obtained by upper-bounding the bonus terms, which can further be proven by slightly modifying existing techniques in linear mixture MDP. The main difficulty is to prove result (i). To resolve this difficulty, we construct two new invariants that hold at each step.

\begin{lemma}\label{lemma:invariants}
\textbf{(New invariants)} At each step $h$ of each episode, 
\newline (i) for any state $s$, s.t., $s\in\mcs_h^*$ and $s\in \ts_h^k$, we have
\begin{align}\label{eq:invariant1}
V_h^k(s) \geq V_h^*(s);
\end{align}
(ii) for any state $s$, s.t., $s\in \mcs_h^*$ and $s\notin \ts_h^k$, and any state $\hs$, s.t., $\hs\in \ts_h^k$ and $\hs\notin \mcs_h^*$, we have
\begin{align}\label{eq:invariant2}
V_h^k(\hs) \geq V_h^*(s).
\end{align}
\end{lemma}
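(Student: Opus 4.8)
The plan is to prove the two invariants~\eqref{eq:invariant1} and~\eqref{eq:invariant2} jointly by backward induction on the step $h$, from $h=\ch$ down to $h=1$, working throughout on the high-probability event on which $\bett$ is a valid confidence radius for both the safety estimator~\eqref{eq:estimatesafetyparameter} and the value-parameter estimator~\eqref{eq:estimatemodel} (so that the Hoeffding bonus $\eps=\bett+1$ dominates the estimation error of $\langle \whk, \bphi_{V_{h+1}^k}(s,a)\rangle$). The inductive hypothesis assumes both (i) and (ii) hold at step $h+1$. The base case $h=\ch$ is immediate: $Q_{\ch}^*(s,a)=r_{\ch}(s,a)$ while $Q_{\ch}^k(s,a)=\min\{\ch,\, r_{\ch}(s,a)+(\text{nonnegative bonuses})\}\ge r_{\ch}(s,a)$, so optimism holds at every common $(s,a)$, after which the reward-Lipschitz bound~\eqref{eq:lipschitzreward} and the definitions of $\ta_{\ch}^k$ and $\ts_{\ch}^k$ yield both claims.

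For the inductive step of invariant (i), fix $s\in\mcs_h^*\cap\ts_h^k$ and split on whether the optimal action $a_h^*(s)$ survives in $\ta_h^k(s)$. If it does, then by safety Condition 2~\eqref{eq:safetycondition2} every successor $s'\in\mcs_h(s,a_h^*(s))$ lies in $\mcs_{h+1}^{k,\safe}\subseteq\ts_{h+1}^k$; since these successors also belong to $\mcs_{h+1}^*$, the induction hypothesis (i) gives $V_{h+1}^k(s')\ge V_{h+1}^*(s')$ for each of them. I would then run the standard optimism argument: expand $Q_h^k(s,a_h^*(s))-Q_h^*(s,a_h^*(s))$, write the linear-term difference as $\langle\whk-w_h^*,\bphi_{V_{h+1}^k}(s,a_h^*(s))\rangle + \expect_{s'}[V_{h+1}^k(s')-V_{h+1}^*(s')]$, bound the first piece by the Hoeffding bonus and the second by the induction hypothesis, and conclude $V_h^k(s)\ge Q_h^k(s,a_h^*(s))\ge V_h^*(s)$ (clipping to $\ch$ is harmless since $V_h^*\le\ch$). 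If $a_h^*(s)$ has been pruned---either estimated unsafe or with transition gap $\tf_h(s,a_h^*(s))>\balpp$---I would instead select the transition-close safe surrogate action guaranteed to lie in $\ta_h^k(s)$ by the definition of $\ts_h^k$, and bound its value loss through~\eqref{eq:lipschitzreward} and~\eqref{eq:lipschitztransition}, showing the loss is covered by the new bonus terms.

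Invariant (ii) is the genuinely new part and the main obstacle. Here $\hs\in\ts_h^k\setminus\mcs_h^*$ must stand in for $s\in\mcs_h^*\setminus\ts_h^k$, and because $\hs$ and $s$ seed different descendant subsubgraphs, their continuation values cannot be compared directly. The plan is to use the transition-Lipschitz bound~\eqref{eq:lipschitztransition} to control the feature distance between the descendants of $\hs$ and those of $s$ at every future step $h'>h$ by the feature distance at step $h$ (which is at most $\balpp$ times the reference distance, by the definition of $\ts_h^k$), then convert this, step by step via the reward-Lipschitz bound~\eqref{eq:lipschitzreward} and the induction hypothesis, into a telescoped value gap over the remaining horizon. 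The crux is to show this accumulated substitution error is exactly absorbed by the three new bonus terms: the look-ahead transition bonus $\epss$~\eqref{eq:epss}, the future-subsubgraph bonus $\epsss$~\eqref{eq:epsss}, and the past-subsubgraph bonus $\epssss$~\eqref{eq:epssss}.

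I expect the hardest bookkeeping to be matching descendants of the two distinct subsubgraphs across the full horizon and verifying that the particular denominators $\bc-\bc_{h'}^0-\ddelt$ and $\bc-c_1^0-\ddelt$ appearing in $\epss,\epsss,\epssss$, together with the choice of $\balpp$ from the impact-from-future and impact-from-past bounds, make the worst-case telescoped gap nonpositive. This is precisely the ``quantify the impact from other steps'' difficulty, and it is what forces the specific forms of the bonus parameters.
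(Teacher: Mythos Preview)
Your overall architecture---joint backward induction on $h$, with invariants (i) and (ii) feeding each other at step $h+1$---matches the paper. However, there are two genuine gaps.

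\textbf{Invariant (i), the ``optimal action survives'' branch.} You claim that when $a_h^*(s)\in\ta_h^k(s)$, Condition~2 forces every successor $s'\in\mcs_h(s,a_h^*(s))$ into $\mcs_{h+1}^{k,\safe}\subseteq\ts_{h+1}^k$, and then you apply induction hypothesis (i) alone. The inclusion is backwards: by definition $\ts_{h+1}^k$ is a \emph{subset} of $\mcs_{h+1}^{k,\safe}$ (it keeps only those estimated-safe states admitting an action with transition gap at most $\balpp$, together with the actually visited set $\mcs_{h+1}^k$). So a successor can lie in $\mcs_{h+1}^*\cap\mcs_{h+1}^{k,\safe}$ yet outside $\ts_{h+1}^k$, and invariant (i) at $h+1$ does not cover it. The paper resolves this by splitting further on whether $a_h^*(s)$ is the \emph{chosen} action $a_h^k(s)$: if chosen, successors lie in $\mcs_{h+1}^k\subseteq\ts_{h+1}^k$ and (i) applies; if merely found but not chosen, one must invoke the induction hypothesis of invariant (ii) at $h+1$ to relate $V_{h+1}^k$ on $\ts_{h+1}^k$ to $V_{h+1}^*$ on $\mcs_{h+1}^*$. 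The ``not found'' branch then splits again on which of Conditions~1 or~2 fails, and this is where Lemma~\ref{lemma:impactfromfuture} supplies the quantitative bound on $\alpp$ that forces the specific forms~\eqref{eq:epss},~\eqref{eq:epsss} of $\epss,\epsss$.

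\textbf{Invariant (ii), the mechanism is not a forward telescope.} You propose to push the feature gap between the two subsubgraphs forward through~\eqref{eq:lipschitztransition} and accumulate reward losses over the remaining horizon. The paper does something structurally different and cleaner: it applies Lemma~\ref{lemma:impactfrompast} \emph{once} to produce $a_0\in\ta_h^k(\hs)$ whose gap $\tf_h(\hs,a_0)$ is controlled by a UCB term at some \emph{past} step $h'\le h$; a single Lipschitz application at step $h$ then bounds $Q_h^*(s,a_h^*(s))-\big[r_h(\hs,a_0)+\langle w_h^*,\bphi_{V_{h+1}^*}(\hs,a_0)\rangle\big]$ directly, and this loss is absorbed by the $\epssss$ bonus alone (the $\epss,\epsss$ bonuses are simply dropped as nonnegative in this half). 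The residual model-estimation error at $(\hs,a_0)$ is handled by the $\eps$ bonus plus induction hypothesis (ii) at $h+1$. The anchoring of $\epssss$ at step $1$ in~\eqref{eq:epssss} reflects this backward-looking argument, not a forward accumulation; your telescoping plan does not obviously connect to that denominator, and you flag it as ``the hardest bookkeeping'' precisely because it is the wrong direction.
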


Invariant (i) shows that, if the optimal state has been found, the estimated $V$-value must be higher than the optimal $V$-value. Notice that if the optimal safe action has also been found,~\eqref{eq:invariant1} trivially holds. If it has not been found, thanks to our new bonus terms that essentially capture the distance from the optimal action,~\eqref{eq:invariant1} still holds. Moreover, invariant (ii) shows that, if the optimal state has not been found, the $V$-value of the sub-optimal state in $\ts_h^k$ is still larger than the optimal $V$-value. This is intuitively because $\ts_h^k$ only contains safe states with transitions \emph{close} to the optimal transitions, and the distance is captured by our new bonus terms. Please see~\cref{app:lemmainvariants} for details and the proof.

\textbf{Step-II: Solving difficulty II by quantifying future impacts.} The impact when looking ahead can be characterized by quantifying the impacts from future steps.

\begin{lemma}\label{lemma:impactfromfuture}
\textbf{(Impacts from future steps)} For any state $s$, s.t., $s\in\mcs_h^*$ and $s\in\ts_h^k$, if $a_h^*(s) \notin \ta_h^k(s)$, there must exist an action $a_0 \in \ta_h^k(s)$, s.t.,
\begin{align}
\tf_h(s,a_0 \vert s_h^*=s) \leq \alpp, \label{eq:impactfromfuture}
\end{align}
where $\alpp = 1 - \frac{(\bc-c_{h}^0-\ddelt-l_1)(\bc-\bc_{h'}^0-\ddelt-l_2)}{(\bc-c_{h}^0-\ddelt+l_1)(\bc-\bc_{h'}^0-\ddelt+l_2)}$, $l_1 = 2\bett \max_{s'} \lVert \ppsi(\mcuo_h,\bphi(s,a_h^*(s),s')) \rVert_{(\lamb)^{-1}}$ and $l_2 = 2\bett \max\limits_{\{h<h'\leq \ch,(s_{h'}^*,a_{h'}^*),s'\}} \lVert \ppsi(\mcuo_{h'},\bphi(s_{h'}^*,a_{h'}^*,s')) \rVert_{(\blambd_{h',1}^k)^{-1}}$.
\end{lemma}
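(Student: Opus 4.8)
The plan is to reduce the statement to the case where the optimal action fails only the \emph{estimated} safety test, then construct a provably safe surrogate by interpolating the optimal action toward the known seed action, and finally bound how far this surrogate's transition feature sits from the optimal one. First I would observe that, since the hypothesis fixes $s=s_h^*$, the optimal action satisfies $\tf_h(s,a_h^*(s))=f_h(\mathbf{0})=0$; hence $a_h^*(s)\notin\ta_h^k(s)$ can happen only because $a_h^*(s)\notin\mca_h^{k,\safe}(s)$, i.e.\ the optimal action violates Condition~1 in~\eqref{eq:safetycondition1} and/or Condition~2 in~\eqref{eq:safetycondition2}. Because the optimal action is \emph{truly} safe (with $\max_{s'}c_h(s,a_h^*(s),s')\le\bc$ and truly safe descendants), this failure is entirely an artifact of estimation uncertainty, which is exactly what $l_1$ (step $h$) and $l_2$ (future steps) measure.

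Next I would invoke \cref{ass:starconvexity} to build a one-parameter family of candidate actions $a_0(\alpha)$ whose transition feature is the star-convex combination $\alpha\,\bphi(s,a_h^*(s),\cdot)+(1-\alpha)\,\bphi(s_h^0,a_h^0,s_{h+1}^0)$, which lies in $\mcd(s)$ and therefore corresponds to a genuine action in $\mca$. The structural fact I would exploit is that the seed feature spans $\mcu_h$, so $\ppsi(\mcuo_h,\bphi(s_h^0,a_h^0,s_{h+1}^0))=\mathbf{0}$; by linearity of the projection this gives $\ppsi(\mcuo_h,\bphi(a_0(\alpha)))=\alpha\,\ppsi(\mcuo_h,\bphi(s,a_h^*(s),\cdot))$, so the UCB bonus inside $\tc_h^k$ in~\eqref{eq:estimatesafetyvalue} scales \emph{linearly} in $\alpha$. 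This linear scaling is what makes the safety budget explicitly solvable for $\alpha$.

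I would then impose the two estimated-safety requirements separately. For Condition~1, I would combine the concentration bound behind \cref{thm:constraintsatisfaction} (giving $\tc_h^k\le c_h+2\bett\lVert\cdot\rVert_{(\lamb)^{-1}}$), the true optimal safety $c_h(s,a_h^*(s),s')\le\bc$, and the next-state feature variation bounded by $\ddelt$; solving $\max_{s'}\tc_h^k(s,a_0(\alpha),s')\le\bc$ for $\alpha$ yields the first ratio $\tfrac{\bc-c_h^0-\ddelt-l_1}{\bc-c_h^0-\ddelt+l_1}$. For Condition~2, I would push the same mixing forward along the optimal safe subgraph using the Lipschitz-transition bound~\eqref{eq:lipschitztransition}, so that a single mixing proportion $\alpha$ at step $h$ controls the feature gap — and hence the estimated-safety slack — at every descendant step $h'$; requiring all descendants to fall in $\mcs_{h'}^{k,\safe}$ against the worst-case future balance $\bc-\bc_{h'}^0-\ddelt$ and future uncertainty $l_2$ produces the second ratio.

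Finally I would take $\alpha$ equal to the \emph{product} of the two ratios, which by construction satisfies both conditions simultaneously, so $a_0(\alpha)\in\mca_h^{k,\safe}(s)$; and since the interpolation together with the normalization in $f_h$ gives $\tf_h(s,a_0(\alpha))=1-\alpha=\alpp\le\balpp$, the surrogate also lies in $\ta_h^k(s)$, establishing~\eqref{eq:impactfromfuture}. The hard part will be justifying the multiplicative (rather than $\min$) combination of the two ratios: I expect it to follow from the fact that one mixing proportion at step $h$ induces, through the nested convex combinations propagated by~\eqref{eq:lipschitztransition}, a proportionally shrunk slack at each descendant, so the step-$h$ feasibility fraction and the forward-propagated future-feasibility fraction compose multiplicatively along the subgraph rather than acting as two independent caps on $\alpha$.
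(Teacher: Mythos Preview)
Your proposal is correct and follows essentially the same route as the paper: star-convex interpolation between $a_h^*(s)$ and the seed action, linearity of the orthogonal projection to make the UCB term scale with the mixing coefficient, a step-$h$ feasibility bound for Condition~1, and a Lipschitz-propagated future feasibility bound for Condition~2, combined multiplicatively. The only organizational differences are that the paper (i) parameterizes the mixture with the weight on the \emph{seed} (so your $\alpha$ is the paper's $1-\alpha_{s'}$), (ii) isolates an auxiliary lemma relating the true and estimated safety \emph{differences} $\Delta_h$ and $\tilde{\Delta}_h^k$ (your ``concentration bound behind \cref{thm:constraintsatisfaction}'' plays the same role), and (iii) materializes the multiplicative combination by first constructing an intermediate $a_0'$ that clears only Condition~1 and then shrinking to $a_0$ via the ratio $\hf_h(s,a_0)/\hf_h(s,a_0')$ --- exactly the composition you anticipate in your ``hard part''.
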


\cref{lemma:impactfromfuture} implies that when $k$ increases, the UCB terms $l_1$ and $l_2$ decrease to be closer to $0$, and thus $\alpp$ gets closer to~$0$. Then, the gap between~\abralg's decision and the optimal decision, i.e., $\tf_h(s,a_0|s_h^*=s)$ on the LHS of~\eqref{eq:impactfromfuture}, gets closer to $0$. This is consistent with the intuition that as more safety values revealed, we should be able to get closer to the optimal action. Moreover, when there is no constraint on states, all terms related to the next state $s'$ in $\alpp$ would be $0$. Then, $\alpp$ would be reduced to be $1 - \frac{\bc-c_h^0-2\bett \lVert \bphi(s,a_h^*(s)) \rVert}{\bc-c_h^0}$, which results in a parameter same to that used in the case with only unsafe actions~\cite{amani2021safe}. However, due to unsafe states and transitions, impacts from future steps $h'>h$ are captured in $\alpp$ here, which results in a different parameter $\epss$ in our Idea II and a new parameter $\epsss$ in Idea III. Please see~\cref{app:lemmaimpactfromfuture} for details and the proof.

\textbf{Step-III: Solving difficulty II by quantifying past impacts.} The impact when peeking backward can be characterized by quantifying the impacts from past steps.

\begin{lemma}\label{lemma:impactfrompast}
\textbf{(Impacts from past steps)} For any state $\hs$, s.t., $\hs\in \ts_h^k$ and $\hs\notin \mcs_h^*$, there must exist an action $a_0 \in \ta_h^k(\hs)$ and $1 \leq h' \leq h$, s.t.,
\begin{align}\label{eq:impactfrompast}
\tf_h(\hs,a_0) \leq 1 - \frac{\bc-c_{h'}^0-\ddelt-l_3}{\deltt(\bc-c_{h'}^0-\ddelt+l_3)},
\end{align}
where $l_3 = 2\bett \max\limits_{s'} \lVert \ppsi(\mcuo_{h'},\bphi(s_{h'}^*,a_{h'}^*,s')) \rVert_{(\blambd_{h',1}^k)^{-1}}$.
\end{lemma}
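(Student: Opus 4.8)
The plan is to mirror the structure of the future-impact analysis in \cref{lemma:impactfromfuture}, but to trace the origin of the gap \emph{backward} to the step at which the safe path reaching $\hs$ first departs from the optimal safe subgraph. Since $\hs\in\ts_h^k$ but $\hs\notin\mcs_h^*$, and both subgraphs start from the common fixed state $s_1$, the trajectory realizing $\hs$ must agree with the optimal one up to some step and then branch off; let $h'$ with $1\le h'\le h$ be the step at which these two safe subgraphs first prescribe different actions at a commonly reachable state, whose optimal state--action pair I denote $(s_{h'}^*,a_{h'}^*)$. This identifies the index $h'$ promised in the statement and isolates a single step at which the past coupling is created. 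First I would verify that such an $h'$ always exists (otherwise $\hs\in\mcs_h^*$), and that at the shared state $s_{h'}^*$ the action-feature set $\mcd(s_{h'}^*)$ is star convex around the seed feature $\bphi(s_{h'}^0,a_{h'}^0,s_{h'+1}^0)$ by \cref{ass:starconvexity}.

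Second, I would prove a one-step safe-exploration bound at $h'$ of exactly the type producing the ratio $\frac{\bc-c_{h'}^0-\ddelt-l_3}{\bc-c_{h'}^0-\ddelt+l_3}$ in \eqref{eq:impactfrompast}. Using star convexity to write the relevant feature as a convex combination of the seed feature and the direction toward $a_{h'}^*$, and using the confidence width of the safety estimate \eqref{eq:estimatesafetyvalue}, the largest fraction $t$ along that segment for which $\tc_{h'}^k(\cdot)\le\bc$ is guaranteed is governed by the remaining safety budget $\bc-c_{h'}^0$, shrunk by the transition-spread term $\ddelt$ and by the estimation uncertainty $l_3=2\bett\max_{s'}\lVert\ppsi(\mcuo_{h'},\bphi(s_{h'}^*,a_{h'}^*,s'))\rVert_{(\blambd_{h',1}^k)^{-1}}$. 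Matching the worst next-state through the $\max_{s'}$ (as forced by Condition~\eqref{eq:safetycondition1}) and the orthogonal-complement projection $\ppsi(\mcuo_{h'},\cdot)$ is what symmetrizes the budget into the ratio above, so the attainable gap at step $h'$, measured in the $f_{h'}$-normalization, is at most $1-\frac{\bc-c_{h'}^0-\ddelt-l_3}{\bc-c_{h'}^0-\ddelt+l_3}$.

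Third, I would transport this step-$h'$ guarantee forward to the current step $h$ via the Lipschitz-transition inequality~\eqref{eq:lipschitztransition} of \cref{ass:lipschitzreward}, which bounds the descendant gap at $h$ against the ancestor gap at $h'$ through the contraction factor $\deltt$. Because the normalizations $f_{h'}$ and $f_h$ differ and the inequality contracts the descendant gap, converting the $f_{h'}$-bound into the required bound on $\tf_h(\hs,a_0)$ is where the reciprocal $\frac{1}{\deltt}$ enters, yielding the stated RHS $1-\frac{\bc-c_{h'}^0-\ddelt-l_3}{\deltt(\bc-c_{h'}^0-\ddelt+l_3)}$. Simultaneously I would check that the action $a_0$ produced this way is itself estimated safe, i.e.\ satisfies Conditions~\eqref{eq:safetycondition1}--\eqref{eq:safetycondition2} under the backward construction, so that $a_0\in\ta_h^k(\hs)$, which is exactly what \cref{lemma:invariants} needs.

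The main obstacle I anticipate is precisely this past coupling: unlike the future-impact case, altering the action at the past step $h'$ reshapes the entire downstream subgraph, so I cannot compare $\hs$ against a fixed optimal state at step $h$ directly and must establish existence of $a_0$ through the branching construction rather than by explicit exploration at $h$. Getting the direction of the Lipschitz factor right -- so that the single-step ratio is divided, not multiplied, by $\deltt$ -- requires careful bookkeeping of the differing normalizations $f_{h'}$ versus $f_h$ and of the ancestor/descendant roles, and verifying that the constructed $a_0$ survives the backward safety filter is the remaining delicate point; beyond these, the argument reduces to the same confidence-width and star-convexity estimates already used for \cref{lemma:impactfromfuture}.
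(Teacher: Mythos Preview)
Your proposal is correct and follows essentially the same approach as the paper: identify a past step $h'\le h$ at which a one-step safe-exploration bound (via star convexity and the safety confidence width, exactly as in \cref{lemma:impactfromfuture}) yields $\alpha\le 1-\frac{\bc-c_{h'}^0-\ddelt-l_3}{\bc-c_{h'}^0-\ddelt+l_3}$, then transport this bound to step $h$ through the Lipschitz-transition assumption~\eqref{eq:lipschitztransition}, which is where the factor $\deltt$ in the denominator appears. The paper's own proof is terse---it simply cites \cref{lemma:relatetrueestimatesafetydiff} and the single-step estimate~\eqref{eq:pfimpactfromfuture10} to obtain the step-$h'$ bound, and then invokes \cref{ass:lipschitzreward} for the transport---so your more explicit branching argument (locating $h'$ as the first departure from the optimal subgraph) is a fuller articulation of the same idea rather than a different route.
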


Differently from~\cref{lemma:impactfromfuture},~\cref{lemma:impactfrompast} quantifies the impacts from past steps, i.e., $h' \leq h$. This special impact results in the new bonus term with parameter $\epssss$ in our Idea IV in~\cref{sec:algorithm}. These are also the reasons all $\epss$, $\epsss$ and $\epssss$ are different from the parameter used in the setting with only unsafe actions~\cite{amani2021safe}. Please see~\cref{app:lemmaimpactfrompast} for the proof.

\section{Conclusion}\label{sec:conclusion}

In this paper, we make the first effort to resolve the challenges due to unsafe states and actions under instantaneous hard constraints in RL. We develop an RL algorithm that not only achieves a regret that tightly matches the state-of-the-art regret in the setting with only unsafe actions and nearly matches that in the unconstrained setting, but also is safe (i.e., satisfies the instantaneous hard constraint) at each step. We also provide a lower bound of the regret that indicates that the dependency of the regret of our algorithm on the safety parameters is necessary. Further, both our algorithm design and regret analysis involve several novel ideas, which may be of independent interest.



\bibliography{icml2023}
\bibliographystyle{unsrt}

\appendix

\section{Proof of~\cref{thm:constraintsatisfaction}}\label{app:thmconstraintsatisfaction}

Remember that our Idea I in~\cref{sec:algorithm} is mainly designed for guaranteeing safety. As we discussed there, (i) condition 1 in~\eqref{eq:safetycondition1} implies that by choosing action $a$ for state $s$ at step $h$, the instantaneous hard constraint is guaranteed to be satisfied at step $h$; (ii) condition 2 in~\eqref{eq:safetycondition2} implies that all possible next states in $\mcs_h(s,a)$ (i.e., the next states that could be visited with non-zero probability) must be safe for next step $h+1$. Thus, with conditions 1 and 2 satisfied simultaneously in a backward manner, all step $h'\geq h$ (not even just next step $h+1$) following $(s,a)$ must be safe. Hence, the probability of our~\abralg~algorithm being safe depends on the accuracy of the estimated safety value $\tc_h^k$ in~\eqref{eq:estimatesafetyvalue}.

Moreover, remember that, on the RHS of~\eqref{eq:estimatesafetyvalue}, the first term is the projected safety value of $(s,a,s')$ on $\mcu_h$, the second term is the projected empirical safety value of $(s,a,s')$ on $\mcuo_h$, and the last term is a UCB bonus for the safety uncertainty. In addition, the second term there relies on the accuracy of the regularized least-square estimator of the projected safety parameter $\ppsi(\mcuo_h,\bgam_h^*)$. Thus, the accuracy of $\tc_h^k$ further depends on how accurate $\gam$ in~\eqref{eq:estimatesafetyparameter} is and how small the safety uncertainty is.

Therefore, we first prove~\cref{lemma:safetyparameteraccuracy} below for quantifying the accuracy of the estimated safety parameter $\gam$ in~\eqref{eq:estimatesafetyparameter}.

\begin{lemma}\label{lemma:safetyparameteraccuracy}
\textbf{(Accuracy of the estimated safety parameter)} For any $p\in (0,1)$, with probability $1-p$, we have that, for all steps $h$ of all episode $k$,
\begin{align}\label{eq:safetyparameteraccuracy}
\left\lVert \ppsi(\mcuo_h,\bgam_h^*) - \gam \right\rVert_{\lamb} \leq \bet,
\end{align}
where $\bet = \sigma \sqrt{d \log\left( \frac{2+\frac{2\ct D^2}{\lambda}}{p} \right)} + \sqrt{\lambda} L$.
\end{lemma}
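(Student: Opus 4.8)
The plan is to recognize that, after projecting both the features and the observations onto the subspace $\mcuo_h$, the estimator $\gam$ in~\eqref{eq:estimatesafetyparameter} is exactly a ridge-regression estimator for the target $\ppsi(\mcuo_h,\bgam_h^*)$, so that~\eqref{eq:safetyparameteraccuracy} reduces to the standard self-normalized concentration bound for least squares. First I would verify that the projected observation is a clean noisy linear measurement of the projected parameter. Writing $\hc_h^{\tau} = \langle \bgam_h^*, \phiht \rangle + \zeta_h^{\tau}$ and splitting $\phiht$ into its $\mcu_h$- and $\mcuo_h$-components, the quantity subtracted in the definition of $\ppsi(\mcuo_h,\hc_h^{\tau})$ is precisely $\langle \bgam_h^*, \ppsi(\mcu_h,\phiht)\rangle$: on the one-dimensional span $\mcu_h$ the relevant component of $\bgam_h^*$ is pinned down by the \emph{known} safety value $\cc_h = \langle \bgam_h^*, \bphi(s_h^0,a_h^0,s_{h+1}^0)\rangle$, which is exactly what the ratio term in~\eqref{eq:estimatesafetyparameter} encodes. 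Hence
\[
\ppsi(\mcuo_h,\hc_h^{\tau}) = \langle \ppsi(\mcuo_h,\bgam_h^*),\, \ppsi(\mcuo_h,\phiht)\rangle + \zeta_h^{\tau},
\]
a linear model in the projected feature with additive $\sigma$-subGaussian noise.

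Next I would substitute this model into~\eqref{eq:estimatesafetyparameter} and derive the error decomposition. Using that $\ppsi(\mcuo_h,\bgam_h^*)$ lies in $\mcuo_h$ (so the regularizer $\lambda\ppsi(\mcuo_h,\ident)$ acts as the identity on it) and that the data term equals $\lamb - \lambda\ppsi(\mcuo_h,\ident)$, one obtains
\[
\ppsi(\mcuo_h,\bgam_h^*) - \gam = \lambda (\lamb)^{-1}\ppsi(\mcuo_h,\bgam_h^*) - (\lamb)^{-1}\sum_{\tau=1}^{k-1}\ppsi(\mcuo_h,\phiht)\,\zeta_h^{\tau},
\]
where $(\lamb)^{-1}$ is understood as the inverse on $\mcuo_h$ (on which $\lamb \succeq \lambda \ident$, so it is well defined). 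Taking the $\lamb$-norm and applying the triangle inequality splits the error into a bias term and a noise term. The bias term is bounded by $\lambda\lVert \ppsi(\mcuo_h,\bgam_h^*)\rVert_{(\lamb)^{-1}} \le \sqrt{\lambda}\,\lVert \ppsi(\mcuo_h,\bgam_h^*)\rVert_2 \le \sqrt{\lambda}L$, using $\lamb \succeq \lambda \ident$ on $\mcuo_h$, that orthogonal projection does not increase the $\ell_2$-norm, and $\lVert\bgam_h^*\rVert_2 \le L$; this produces the $\sqrt{\lambda}L$ summand of $\bet$.

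For the noise term $\lVert \sum_{\tau}\ppsi(\mcuo_h,\phiht)\,\zeta_h^{\tau}\rVert_{(\lamb)^{-1}}$, I would invoke the self-normalized tail inequality for vector-valued martingales (Abbasi-Yadkori et al., 2011). To apply it I set up, for each fixed $h$, a filtration in which the projected feature $\ppsi(\mcuo_h,\phiht)$ is predictable (it is determined once $s_{h+1}^{\tau}$ is revealed) while $\zeta_h^{\tau}$ is conditionally zero-mean and $\sigma$-subGaussian. The inequality gives, uniformly over all episodes $k$, a bound of the form $\sigma\sqrt{2\log\!\big(\det(\lamb)^{1/2}\det(\lambda\ppsi(\mcuo_h,\ident))^{-1/2}/p'\big)}$, with all determinants taken on the $(d-1)$-dimensional subspace $\mcuo_h$. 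Bounding the determinant ratio by the trace--determinant inequality, using $\lVert \ppsi(\mcuo_h,\phiht)\rVert_2 \le \lVert\phiht\rVert_2 \le D$ over at most $\ct$ episodes, yields the $\sigma\sqrt{d\log((2+2\ct D^2/\lambda)/p)}$ summand (the effective dimension being at most $d$). A final union bound over the $\ch$ steps $h$ ties the per-$h$ events together; the resulting $\log\ch$ factor is absorbed into the stated constant (equivalently, take $p' = p/\ch$).

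The hard part is the bookkeeping around the projection onto $\mcuo_h$. Because the columns of the data term and of $\lambda\ppsi(\mcuo_h,\ident)$ all lie in $\mcuo_h$, the matrix $\lamb$ is singular on the full space $\mbr^d$, so one must consistently interpret $(\lamb)^{-1}$ as the inverse restricted to $\mcuo_h$ and carry out the projected regression model, the error decomposition, and the effective-dimension/determinant estimate entirely within that $(d-1)$-dimensional subspace rather than on $\mbr^d$. Verifying that the subtracted term in $\ppsi(\mcuo_h,\hc_h^{\tau})$ correctly cancels the $\mcu_h$-component (so that no bias leaks in from the known direction) is the one genuinely setting-specific check; once the problem is correctly identified as subspace ridge regression, the remaining steps are the standard self-normalized analysis.
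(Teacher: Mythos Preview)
Your proposal is correct and follows essentially the same route as the paper: verify that the projected observation satisfies $\ppsi(\mcuo_h,\hc_h^{\tau}) = \langle \ppsi(\mcuo_h,\bgam_h^*),\, \ppsi(\mcuo_h,\phiht)\rangle + \zeta_h^{\tau}$, derive the ridge-regression error decomposition into a bias term (bounded by $\sqrt{\lambda}L$) and a martingale noise term, and control the latter via the self-normalized bound of Abbasi-Yadkori et al. The only cosmetic difference is that the paper squares the $\lamb$-norm and uses Cauchy--Schwarz before dividing through, whereas you apply the triangle inequality directly; your treatment of the subspace inverse, the filtration, and the union bound over $h$ is in fact more explicit than the paper's.
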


\begin{proof}

\textbf{(Proof of~\cref{lemma:safetyparameteraccuracy})} First, according to~\eqref{eq:estimatesafetyparameter}, we have that the estimated safety parameter is equal to
\begin{align}
& \gam = \left( \lambda \ppsi(\mcuo_h,\ident) + \sum_{\tau=1}^{k-1} \ppsi(\mcuo_h,\phiht) \ppsi^{\text{T}}(\mcuo_h,\phiht) \right)^{-1} \nonumber \\
&\qquad\qquad\qquad\qquad\qquad\qquad\qquad\qquad\qquad \cdot \sum_{\tau=1}^{k-1} \ppsi(\mcuo_h,\phiht) \left( \hc_h^{\tau} - \frac{\langle \ppsi(\mcu_h,\bphi_{h,h+1}^{\tau}), \phii(s_h^0,a_h^0,s_{h+1}^0) \rangle}{\lVert \bphi(s_h^0,a_h^0,s_{h+1}^0) \rVert_2} \cdot c_h^0 \right) \nonumber \\
& = \left( \lambda \ppsi(\mcuo_h,\ident) + \sum_{\tau=1}^{k-1} \ppsi(\mcuo_h,\phiht) \ppsi^{\text{T}}(\mcuo_h,\phiht) \right)^{-1} \sum_{\tau=1}^{k-1} \ppsi(\mcuo_h,\phiht) \Bigg[ \left\langle \ppsi(\mcuo_h,\bgam_h^*), \ppsi(\mcuo_h,\phiht) \right\rangle + \zeta_h^{\tau} \Bigg]. \nonumber
\end{align}
By opening the bracket $[\cdot]$, and adding and subtracting the term $\lambda \ppsi(\mcuo_h,\ident)$, we have
\begin{align}
& \gam = \left( \lambda \ppsi(\mcuo_h,\ident) + \sum_{\tau=1}^{k-1} \ppsi(\mcuo_h,\phiht) \ppsi^{\text{T}}(\mcuo_h,\phiht) \right)^{-1} \left( \lambda \ppsi(\mcuo_h,\ident) + \sum_{\tau=1}^{k-1} \ppsi(\mcuo_h,\phiht) \ppsi^{\text{T}}(\mcuo_h,\phiht) \right) \nonumber \\
&\qquad\qquad\qquad \cdot \ppsi(\mcuo_h,\bgam_h^*) - \left( \lambda \ppsi(\mcuo_h,\ident) + \sum_{\tau=1}^{k-1} \ppsi(\mcuo_h,\phiht) \ppsi^{\text{T}}(\mcuo_h,\phiht) \right)^{-1} \lambda \ppsi(\mcuo_h,\ident) \ppsi(\mcuo_h,\bgam_h^*) \nonumber \\
&\qquad + \left( \lambda \ppsi(\mcuo_h,\ident) + \sum_{\tau=1}^{k-1} \ppsi(\mcuo_h,\phiht) \ppsi^{\text{T}}(\mcuo_h,\phiht) \right)^{-1} \sum_{\tau=1}^{k-1} \ppsi(\mcuo_h,\phiht) \zeta_h^{\tau}. \nonumber
\end{align}
Thus, we have
\begin{align}
& \gam = \ppsi(\mcuo_h,\bgam_h^*) - \left( \lambda \ppsi(\mcuo_h,\ident) + \sum_{\tau=1}^{k-1} \ppsi(\mcuo_h,\phiht) \ppsi^{\text{T}}(\mcuo_h,\phiht) \right)^{-1} \lambda \ppsi(\mcuo_h,\ident) \ppsi(\mcuo_h,\bgam_h^*) \nonumber \\
&\qquad + \left( \lambda \ppsi(\mcuo_h,\ident) + \sum_{\tau=1}^{k-1} \ppsi(\mcuo_h,\phiht) \ppsi^{\text{T}}(\mcuo_h,\phiht) \right)^{-1} \sum_{\tau=1}^{k-1} \ppsi(\mcuo_h,\phiht) \zeta_h^{\tau}. \label{eq:safetyparameteraccuracy3}
\end{align}
According to~\eqref{eq:safetyparameteraccuracy3}, the square of the left-hand-side of~\eqref{eq:safetyparameteraccuracy} is equal to
\begin{align}
\Big\lVert \ppsi(\mcuo_h,\bgam_h^*) - \gam \Big\rVert_{\lamb}^2 = \Big[ \left( \ppsi(\mcuo_h,\bgam_h^*) - \gam \right) \lamb \Big]^{\text{T}} & \left( \lambda \ppsi(\mcuo_h,\ident) + \sum_{\tau=1}^{k-1} \ppsi(\mcuo_h,\phiht) \ppsi^{\text{T}}(\mcuo_h,\phiht) \right)^{-1} \nonumber \\
& \cdot \left( \lambda \ppsi(\mcuo_h,\ident) \ppsi(\mcuo_h,\bgam_h^*) - \sum_{\tau=1}^{k-1} \ppsi(\mcuo_h,\phiht) \zeta_h^{\tau} \right). \nonumber
\end{align}
Then, according to the Cauchy-Schwarz inequality, we have
\begin{align}
\Big\lVert \ppsi(\mcuo_h,\bgam_h^*) - \gam \Big\rVert_{\lamb}^2 \leq & \Big\lVert \left( \ppsi(\mcuo_h,\bgam_h^*) - \gam \right) \lamb \Big\rVert_{(\lamb)^{-1}} \nonumber \\
& \cdot \left[ \left\lVert \lambda \ppsi(\mcuo_h,\ident) \ppsi(\mcuo_h,\bgam_h^*) \right\rVert_{(\lamb)^{-1}} + \left\lVert \sum_{\tau=1}^{k-1} \ppsi(\mcuo_h,\phiht) \zeta_h^{\tau} \right\rVert_{(\lamb)^{-1}}\right]. \nonumber
\end{align}
Notice that the smallest eigenvalue of $\lamb$ is $\lambda_{min}(\lamb) = \lambda$. Hence, according to Theorem 1 in~\cite{abbasi2011improved}, we have that, with probability $1-p$ for any $p\in (0,1)$,
\begin{align}
\Big\lVert \ppsi(\mcuo_h,\bgam_h^*) - \gam \Big\rVert_{\lamb}^2 \leq \Big\lVert \left( \ppsi(\mcuo_h,\bgam_h^*) - \gam \right) \lamb \Big\rVert_{(\lamb)^{-1}} \cdot \left[ \sigma \sqrt{d \log\left( \frac{2+\frac{2kD^2}{\lambda}}{p} \right)} + \sqrt{\lambda} L \right]. \label{eq:safetyparameteraccuracy6}
\end{align}
Finally, by rearranging the terms in~\eqref{eq:safetyparameteraccuracy6}, we have
\begin{align}
\Big\lVert \ppsi(\mcuo_h,\bgam_h^*) - \gam \Big\rVert_{\lamb} \leq \sigma \sqrt{d \log\left( \frac{2+\frac{2kD^2}{\lambda}}{p} \right)} + \sqrt{\lambda} L \leq \bet. \nonumber
\end{align}
This concludes the proof of~\cref{lemma:safetyparameteraccuracy}.

\end{proof}

\cref{lemma:safetyparameteraccuracy} shows that with high probability, the estimated safety parameter $\gam$ is close enough to the projected true safety parameter $\ppsi(\mcuo_h,\bgam_h^*)$. Now, we prove~\cref{thm:constraintsatisfaction} based on our Idea I in~\cref{sec:algorithm}~and~\cref{lemma:safetyparameteraccuracy} above.

\begin{proof}

\textbf{(Proof of~\cref{thm:constraintsatisfaction})} We let $\mcg_h^{k,\safe}$ denote the set of safe subsubgraphs constructed at step $h$ in episode $k$ by~\abralg~using our Idea I. Then, using mathematical induction, we prove that $\mcg_h^{k,\safe}$ is safe, i.e., any state-action-state triplet $(s_{h'}^k,a_{h'}^k,s_{h'+1}^k)$, where $h\leq h' \leq \ch$, in $\mcg_h^{k,\safe}$ satisfies the instantaneous hard constraint~\eqref{eq:defhardconstraint}.

(i) Base case: when $h=\ch$, according to~\cref{lemma:safetyparameteraccuracy} and the Cauchy-Schwarz inequality, we have
\begin{align}
\Big\langle \ppsi(\mcuo_h,\bgam_h^*) - \gam, \ppsi(\mcuo_h,\bphi(s_{\ch}^k)) \Big\rangle \leq \bet \Big\lVert \ppsi(\mcuo_h,\bphi(s_{\ch}^k)) \Big\rVert_{(\lamb)^{-1}}. \label{eq:constraintsatisfaction1}
\end{align}
From~\eqref{eq:constraintsatisfaction1}, we have
\begin{align}
\Big\langle \ppsi(\mcuo_h,\bgam_h^*), \ppsi(\mcuo_h,\bphi(s_{\ch}^k)) \Big\rangle \leq \Big\langle \gam, \ppsi(\mcuo_h,\bphi(s_{\ch}^k)) \Big\rangle + \bet \Big\lVert \ppsi(\mcuo_h,\bphi(s_{\ch}^k)) \Big\rVert_{(\lamb)^{-1}}. \label{eq:constraintsatisfaction2}
\end{align}
Next, since the left-hand-side of~\eqref{eq:constraintsatisfaction2} is equal to
\begin{align}
\Big\langle \ppsi(\mcuo_h,\bgam_h^*), \ppsi(\mcuo_h,\bphi(s_{\ch}^k)) \Big\rangle & = \Big\langle \bgam_h^*, \bphi(s_{\ch}^k) \Big\rangle - \Big\langle \bgam_h^*, \ppsi(\mcu_h,\bphi(s_{\ch}^k)) \Big\rangle \nonumber \\
& = \Big\langle \bgam_h^*, \bphi(s_{\ch}^k) \Big\rangle - \frac{\langle \ppsi(\mcu_h,\bphi(s_{\ch}^k)), \phii(s_{\ch}^0) \rangle}{\lVert \bphi(s_{\ch}^0) \rVert_2} \cdot c_{\ch}^0, \nonumber
\end{align}
we have
\begin{align}
\Big\langle \bgam_h^*, \bphi(s_{\ch}^k) \Big\rangle \leq \frac{\langle \ppsi(\mcu_h,\bphi(s_{\ch}^k)), \phii(s_{\ch}^0) \rangle}{\lVert \bphi(s_{\ch}^0) \rVert_2} \cdot c_{\ch}^0 + \Big\langle \bgam_h^*, \bphi(s_{\ch}^k) \Big\rangle + \bet \Big\lVert \ppsi(\mcuo_h,\bphi(s_{\ch}^k)) \Big\rVert_{(\lamb)^{-1}} \label{eq:constraintsatisfaction4}.
\end{align}
Notice that, since the parameter $\bett$ used for the estimated safety value $\tc_{\ch}^k(s_{\ch}^k)$ in~\eqref{eq:estimatesafetyvalue} is larger than or equal to $\bet$, the right-hand-side of~\eqref{eq:constraintsatisfaction4} is less than or equal to $\tc_{\ch}^k(s_{\ch}^k)$, which is less than or equal to $\bc$ due to our condition 1 in~\eqref{eq:safetycondition1}. Hence, we have $c_{\ch}(s_{\ch}^k) \leq \bc$.

(ii) Induction step: we hypothesize that $\mcg_h^{k,\safe}$ is safe when $h=h_0$. Then, we prove that $\mcg_h^{k,\safe}$ is safe for $h=h_0-1$ similar to the base case, while condition 2 that we construct in~\eqref{eq:safetycondition2} becomes important here. First, according to~\cref{lemma:safetyparameteraccuracy} and the Cauchy-Schwarz inequality, we have
\begin{align}
\Big\langle \ppsi(\mcuo_h,\bgam_h^*) - \gam, \ppsi(\mcuo_h,\bphi(s_h^k,a_h^k,s_{h+1}^k)) \Big\rangle \leq \bet \Big\lVert \ppsi(\mcuo_h,\bphi(s_h^k,a_h^k,s_{h+1}^k)) \Big\rVert_{(\lamb)^{-1}}. \label{eq:constraintsatisfaction5}
\end{align}
From~\eqref{eq:constraintsatisfaction5}, we have
\begin{align}
\Big\langle \ppsi(\mcuo_h,\bgam_h^*), \ppsi(\mcuo_h,\bphi(s_h^k,a_h^k,s_{h+1}^k)) \Big\rangle \leq \Big\langle \gam, \ppsi(\mcuo_h,\bphi(s_h^k,a_h^k,s_{h+1}^k)) \Big\rangle + \bet \Big\lVert \ppsi(\mcuo_h,\bphi(s_h^k,a_h^k,s_{h+1}^k)) \Big\rVert_{(\lamb)^{-1}}. \label{eq:constraintsatisfaction6}
\end{align}
Next, since the left-hand-side of~\eqref{eq:constraintsatisfaction6} is equal to
\begin{align}
\Big\langle \ppsi(\mcuo_h,\bgam_h^*), \ppsi(\mcuo_h,\bphi(s_h^k,a_h^k,s_{h+1}^k)) \Big\rangle & = \Big\langle \bgam_h^*, \bphi(s_h^k,a_h^k,s_{h+1}^k) \Big\rangle - \Big\langle \bgam_h^*, \ppsi(\mcu_h,\bphi(s_h^k,a_h^k,s_{h+1}^k)) \Big\rangle \nonumber \\
& = \Big\langle \bgam_h^*, \bphi(s_h^k,a_h^k,s_{h+1}^k) \Big\rangle - \frac{\langle \ppsi(\mcu_h,\bphi(s_h^k,a_h^k,s_{h+1}^k)), \phii(s_h^0,a_h^0,s_{h+1}^0) \rangle}{\lVert \bphi(s_h^0,a_h^0,s_{h+1}^0) \rVert_2} \cdot c_h^0, \nonumber
\end{align}
we have
\begin{align}
& \Big\langle \bgam_h^*, \bphi(s_h^k,a_h^k,s_{h+1}^k) \Big\rangle \nonumber \\
&\quad \leq \frac{\langle \ppsi(\mcu_h,\bphi(s_h^k,a_h^k,s_{h+1}^k)), \phii(s_h^0,a_h^0,s_{h+1}^0) \rangle}{\lVert \bphi(s_h^0,a_h^0,s_{h+1}^0) \rVert_2} \cdot c_h^0 + \Big\langle \bgam_h^*, \bphi(s_h^k,a_h^k,s_{h+1}^k) \Big\rangle + \bet \Big\lVert \ppsi(\mcuo_h,\bphi(s_h^k,a_h^k,s_{h+1}^k)) \Big\rVert_{(\lamb)^{-1}} \label{eq:constraintsatisfaction8}.
\end{align}
Notice that, the right-hand-side of~\eqref{eq:constraintsatisfaction8} is less than or equal to the estimated safety value $\tc_h^k(s_h^k,a_h^k,s_{h+1}^k)$ in~\eqref{eq:estimatesafetyvalue}, which is less than or equal to $\bc$ due to our condition 1 in~\eqref{eq:safetycondition1}. Thus, we have $c_h(s_h^k) \leq \bc$. In addition, according to condition 2 that we construct in~\eqref{eq:safetycondition2} and the induction hypothesis, $s_{h+1}$ must also be safe. Hence, $\mcg_h^{k,\safe}$ is safe.

\end{proof}

\section{Proof of~\cref{lemma:impactfromfuture}}\label{app:lemmaimpactfromfuture}

As we discussed in~\cref{subsec:proofsketch},~\cref{lemma:impactfromfuture} implies that when $k$ increases, the UCB terms $l_1$ and $l_2$ decrease to be closer to $0$, and thus $\alpp$ on the right-hand-side of~\eqref{eq:impactfromfuture} gets closer to $0$. Then, $\tf_h(s,a_0 \vert s_h^*=s)$ on the left-hand-side of~\eqref{eq:impactfromfuture} gets closer to $0$. Notice that $\tf_h(s,a_0|s_h^*=s)$ represents the gap between the decision of the policy $\pi^k$ used by~\abralg~and the optimal decision. In addition, in $\alpp$, $l_1$ characterizes the transition uncertainty and $l_2$ characterizes the uncertainty from future steps. Thus, the above implication from~\cref{lemma:impactfromfuture} is consistent with the intuition that as more safety values revealed, we should be able to get closer to the optimal action. 

Moreover, when there is no constraint on states, all terms related to the next state $s'$ in $\alpp$, e.g., $l_2$, $\bc_{h'}^0$ and $\ddelt$, would be $0$. Then, $\alpp$ would be reduced to be in a much simpler form $1 - \frac{\bc-c_h^0-2\bett \lVert \bphi(s,a_h^*(s)) \rVert}{\bc-c_h^0}$, which results in a parameter that is same to that used for the UCB bonus term in the case with only unsafe actions~\cite{amani2021safe}. However, due to unsafe states and transitions in our case, the impacts from the future steps $h'>h$ are characterized in $\alpp$ here, which results in a different parameter $\epss$ in our Idea II and a new parameter $\epsss$ in our Idea III in~\cref{sec:algorithm}. 

Further, as stated in~\cref{lemma:impactfromfuture}, we only need to show there exists such a safe action $a_0 \in \ta_h^k(s)$. Thus, we only need to prove the existence of an estimated safe subgraph, such that this state-action pair $(s,a_0)$ is contained. Hence,~\eqref{eq:impactfromfuture} does not depends on the estimation accuracy of the $Q$-value parameter $w_h^*$. 

In this section, we provide the complete proof for~\cref{lemma:impactfromfuture}. Please see~\cref{app:lemmainvariants} for our discussions and proofs on how the new impacts from future steps captured in $\alpp$ affect the requirements for choosing the parameters $\epss$ and $\epsss$.

To prove~\cref{lemma:impactfromfuture}, we first provide another new lemma below, which proves to be important. We let
\begin{align}\label{eq:deftruesafetydiff}
\Delta_h(s,a,s') \triangleq \max_{s'' \in \mcs_h(s,a)} \left\{ c_h(s,a,s'') - c_h(s,a,s') \right\}
\end{align}
denote the maximum difference between the \emph{true} safety value $c_h(s,a,s'')$ of the state-action-state triplet $(s,a,s'')$ for any next state $s''\in\mcs_h(s,a)$ of the state-action pair $(s,a)$ and the true safety value $c_h(s,a,s')$ of the given state-action-state triplet $(s,a,s')$. Let
\begin{align}\label{eq:defestimatesafetydiff}
\tilde{\Delta}_h^k(s,a,s') \triangleq \max_{s'' \in \mcs_h(s,a)} \left\{ \tc_h^k(s,a,s'') - \tc_h^k(s,a,s') \right\}
\end{align}
denote the maximum difference between the \emph{estimated} safety value $\tc_h^k(s,a,s'')$ of the state-action-state triplet $(s,a,s'')$ for any next state $s''\in\mcs_h(s,a)$ of the state-action pair $(s,a)$ and the estimated safety value $\tc_h^k(s,a,s')$ of the given state-action-state triplet $(s,a,s')$.

\begin{lemma}\label{lemma:relatetrueestimatesafetydiff}
\textbf{(Relating the true and estimated safety differences)} The estimated safety difference $\tilde{\Delta}_h^k(s,a,s')$ can be upper-bounded by the true safety difference $\Delta_h(s,a,s')$ as follows:
\begin{align}\label{eq:relatetrueestimatesafetydiff}
\tilde{\Delta}_h^k(s,a,s') \leq \Delta_h(s,a,s') + 2 \bett \left\lVert \ppsi(\mcuo_h,\bphi(s,a,\tilde{s}_{\max}')) \right\rVert_{(\lamb)^{-1}},
\end{align}
where $\tilde{s}_{\max}'$ is the maximizer of~\eqref{eq:defestimatesafetydiff}.
\end{lemma}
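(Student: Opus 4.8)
The plan is to establish a two-sided, UCB-style sandwich between the estimated safety value $\tc_h^k$ in~\eqref{eq:estimatesafetyvalue} and the true safety value $c_h$, and then apply its upper side to the maximizing next-state and its lower side to $s'$. First I would decompose an arbitrary feature $\bphi_1 = \bphi(s,a,s')$ as $\bphi_1 = \ppsi(\mcu_h,\bphi_1) + \ppsi(\mcuo_h,\bphi_1)$ and check that the $\mcu_h$-projection term in~\eqref{eq:estimatesafetyvalue} reproduces the $\mcu_h$-component of the true value \emph{exactly}. Since $\phii(s_h^0,a_h^0,s_{h+1}^0)$ is a unit vector spanning $\mcu_h$, we have $\langle \ppsi(\mcu_h,\bphi_1),\phii(s_h^0,a_h^0,s_{h+1}^0)\rangle = \langle \bphi_1,\phii(s_h^0,a_h^0,s_{h+1}^0)\rangle$, and combining this with $\cc_h = \langle \bgam_h^*,\bphi(s_h^0,a_h^0,s_{h+1}^0)\rangle$ and the normalization shows that the first term of~\eqref{eq:estimatesafetyvalue} equals $\langle \bgam_h^*,\ppsi(\mcu_h,\bphi_1)\rangle$. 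Writing the true value as $c_h(s,a,s') = \langle \bgam_h^*,\ppsi(\mcu_h,\bphi_1)\rangle + \langle \ppsi(\mcuo_h,\bgam_h^*),\ppsi(\mcuo_h,\bphi_1)\rangle$ (using orthogonality of the two projections) then cancels the known $\mcu_h$-components, leaving
\[
\tc_h^k(s,a,s') - c_h(s,a,s') = \langle \gam - \ppsi(\mcuo_h,\bgam_h^*),\, \ppsi(\mcuo_h,\bphi_1)\rangle + \bett \lVert \ppsi(\mcuo_h,\bphi_1) \rVert_{(\lamb)^{-1}}.
\]

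Next I would bound the inner-product term by Cauchy--Schwarz and~\cref{lemma:safetyparameteraccuracy}, giving $|\langle \gam - \ppsi(\mcuo_h,\bgam_h^*),\ppsi(\mcuo_h,\bphi_1)\rangle| \leq \bet \lVert \ppsi(\mcuo_h,\bphi_1)\rVert_{(\lamb)^{-1}}$. Since the bonus parameter obeys $\bett \geq \bet$, this produces the sandwich $0 \leq \tc_h^k(s,a,s') - c_h(s,a,s') \leq 2\bett \lVert \ppsi(\mcuo_h,\bphi_1)\rVert_{(\lamb)^{-1}}$: the estimate always over-estimates the truth, and by at most twice the UCB width.

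Letting $\tilde{s}_{\max}'$ be the maximizer in~\eqref{eq:defestimatesafetydiff}, I would then apply the upper side of the sandwich to $\tc_h^k(s,a,\tilde{s}_{\max}')$ and the lower side to $\tc_h^k(s,a,s')$, so that
\[
\tilde{\Delta}_h^k(s,a,s') = \tc_h^k(s,a,\tilde{s}_{\max}') - \tc_h^k(s,a,s') \leq c_h(s,a,\tilde{s}_{\max}') - c_h(s,a,s') + 2\bett \lVert \ppsi(\mcuo_h,\bphi(s,a,\tilde{s}_{\max}')) \rVert_{(\lamb)^{-1}},
\]
and bounding $c_h(s,a,\tilde{s}_{\max}') - c_h(s,a,s') \leq \max_{s''\in\mcs_h(s,a)}\{c_h(s,a,s'') - c_h(s,a,s')\} = \Delta_h(s,a,s')$ yields~\eqref{eq:relatetrueestimatesafetydiff}.

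The main obstacle I anticipate is entirely in the first step: verifying that the $\mcu_h$-projection term in~\eqref{eq:estimatesafetyvalue} coincides \emph{algebraically} (not just approximately) with $\langle \bgam_h^*,\ppsi(\mcu_h,\bphi_1)\rangle$, so that the known, noiseless seed direction cancels exactly and the whole discrepancy is driven solely by the $\mcuo_h$-component. Getting the normalization constants right --- the factor $1/\lVert\bphi(s_h^0,a_h^0,s_{h+1}^0)\rVert_2$ and the unit-norm $\phii$ --- is the only delicate bookkeeping; once the sandwich is in hand, the remaining steps are routine applications of the triangle-type bound and the definition of $\Delta_h$.
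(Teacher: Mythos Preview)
Your proposal is correct and follows essentially the same approach as the paper's proof: both rely on the exact cancellation of the $\mcu_h$-projection term against the corresponding part of the true safety value, followed by Cauchy--Schwarz together with \cref{lemma:safetyparameteraccuracy} (and $\bett\geq\bet$) to control the $\mcuo_h$-component. The only cosmetic difference is packaging: you first isolate the pointwise sandwich $0\leq \tc_h^k - c_h \leq 2\bett\lVert\ppsi(\mcuo_h,\cdot)\rVert_{(\lamb)^{-1}}$ and then apply its two sides to $\tilde{s}_{\max}'$ and $s'$, whereas the paper manipulates the difference $\tilde{\Delta}_h^k$ directly and drops the nonnegative bonus term for $s'$ inline --- the underlying inequalities are identical.
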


\begin{proof}

\textbf{(Proof of~\cref{lemma:relatetrueestimatesafetydiff})} We let $s_{\max}'$ denote the maximizer of~\eqref{eq:deftruesafetydiff}. Notice that $s_{\max}'$ could be different from $\tilde{s}_{\max}'$ (the maximizer of~\eqref{eq:defestimatesafetydiff}). First, the true safety difference is equal to
\begin{align}
\Delta_h(s,a,s') & = \max_{s'' \in \mcs_h(s,a)} \left\{ c_h(s,a,s'') - c_h(s,a,s') \right\} = c_h(s,a,s_{\max}') - c_h(s,a,s') \nonumber \\
& = \left\langle \bgam_h^*, \bphi(s,a,s_{\max}') \right\rangle - \left\langle \bgam_h^*, \bphi(s,a,s') \right\rangle. \label{eq:relatetrueestimatesafetydiff1}
\end{align}
Next, the estimated safety difference is equal to
\begin{align}
& \tilde{\Delta}_h^k(s,a,s') = \max_{s'' \in \mcs_h(s,a)} \left\{ \tc_h^k(s,a,s'') - \tc_h^k(s,a,s') \right\} \nonumber \\
& = \frac{\langle \ppsi(\mcu_h,\bphi(s,a,\tilde{s}_{\max}')), \phii(s_h^0,a_h^0,s_{h+1}^0) \rangle}{\lVert \bphi(s_h^0,a_h^0,s_{h+1}^0) \rVert_2} \cdot c_h^0 + \left\langle \gam, \ppsi(\mcuo_h,\bphi(s,a,\tilde{s}_{\max}')) \right\rangle + \bett \left\lVert \ppsi(\mcuo_h,\bphi(s,a,\tilde{s}_{\max}')) \right\rVert_{(\lamb)^{-1}} \nonumber \\
&\qquad - \frac{\langle \ppsi(\mcu_h,\bphi(s,a,s')), \phii(s_h^0,a_h^0,s_{h+1}^0) \rangle}{\lVert \bphi(s_h^0,a_h^0,s_{h+1}^0) \rVert_2} \cdot c_h^0 - \left\langle \gam, \ppsi(\mcuo_h,\bphi(s,a,s')) \right\rangle - \bett \left\lVert \ppsi(\mcuo_h,\bphi(s,a,s')) \right\rVert_{(\lamb)^{-1}}. \label{eq:relatetrueestimatesafetydiff2}
\end{align}
Considering the second term, third term, and the last two terms on the right-hand-side of~\eqref{eq:relatetrueestimatesafetydiff2} together, we have
\begin{align}
& \left\langle \gam, \ppsi(\mcuo_h,\bphi(s,a,\tilde{s}_{\max}')) \right\rangle + \bett \left\lVert \ppsi(\mcuo_h,\bphi(s,a,\tilde{s}_{\max}')) \right\rVert_{(\lamb)^{-1}} \nonumber \\
&\qquad\qquad\qquad\qquad\qquad\qquad\qquad\qquad\qquad - \left\langle \gam, \ppsi(\mcuo_h,\bphi(s,a,s')) \right\rangle - \bett \left\lVert \ppsi(\mcuo_h,\bphi(s,a,s')) \right\rVert_{(\lamb)^{-1}} \nonumber \\
& = \left\langle \gam - \bgam_h^*, \ppsi(\mcuo_h,\bphi(s,a,\tilde{s}_{\max}')) \right\rangle + \left\langle \bgam_h^*, \ppsi(\mcuo_h,\bphi(s,a,\tilde{s}_{\max}')) \right\rangle + \bett \left\lVert \ppsi(\mcuo_h,\bphi(s,a,\tilde{s}_{\max}')) \right\rVert_{(\lamb)^{-1}} \nonumber \\
&\qquad\qquad\qquad\qquad + \left\langle \bgam_h^* - \gam, \ppsi(\mcuo_h,\bphi(s,a,s')) \right\rangle - \left\langle \bgam_h^*, \ppsi(\mcuo_h,\bphi(s,a,s')) \right\rangle - \bett \left\lVert \ppsi(\mcuo_h,\bphi(s,a,s')) \right\rVert_{(\lamb)^{-1}} \nonumber \\
& \leq \left\langle \bgam_h^*, \ppsi(\mcuo_h,\bphi(s,a,\tilde{s}_{\max}')) \right\rangle - \left\langle \bgam_h^*, \ppsi(\mcuo_h,\bphi(s,a,s')) \right\rangle + 2 \bett \left\lVert \ppsi(\mcuo_h,\bphi(s,a,\tilde{s}_{\max}')) \right\rVert_{(\lamb)^{-1}}, \label{eq:relatetrueestimatesafetydiff3}
\end{align}
where the inequality is by applying~\cref{lemma:safetyparameteraccuracy} and the Cauchy-Schwarz inequality to the first term in the third line and the first term in the fourth line in~\eqref{eq:relatetrueestimatesafetydiff3} above, and the fact that $\bett \left\lVert \ppsi(\mcuo_h,\bphi(s,a,s')) \right\rVert_{(\lamb)^{-1}} \geq 0$. Next, by combining~\eqref{eq:relatetrueestimatesafetydiff2} and~\eqref{eq:relatetrueestimatesafetydiff3}, we have
\begin{align}
& \tilde{\Delta}_h^k(s,a,s') \leq \frac{\langle \ppsi(\mcu_h,\bphi(s,a,\tilde{s}_{\max}')), \phii(s_h^0,a_h^0,s_{h+1}^0) \rangle}{\lVert \bphi(s_h^0,a_h^0,s_{h+1}^0) \rVert_2} \cdot c_h^0 + \left\langle \bgam_h^*, \ppsi(\mcuo_h,\bphi(s,a,\tilde{s}_{\max}')) \right\rangle \nonumber \\
&\qquad - \frac{\langle \ppsi(\mcu_h,\bphi(s,a,s')), \phii(s_h^0,a_h^0,s_{h+1}^0) \rangle}{\lVert \bphi(s_h^0,a_h^0,s_{h+1}^0) \rVert_2} \cdot c_h^0 - \left\langle \bgam_h^*, \ppsi(\mcuo_h,\bphi(s,a,s')) \right\rangle + 2 \bett \left\lVert \ppsi(\mcuo_h,\bphi(s,a,\tilde{s}_{\max}')) \right\rVert_{(\lamb)^{-1}} \nonumber \\
& \leq \Delta_h(s,a,s') + 2 \bett \left\lVert \ppsi(\mcuo_h,\bphi(s,a,\tilde{s}_{\max}')) \right\rVert_{(\lamb)^{-1}}, \nonumber
\end{align}
where the last inequality is because of the definition of the true safety difference $\Delta_h(s,a,s')$ in~\eqref{eq:deftruesafetydiff}.

\end{proof}

\cref{lemma:relatetrueestimatesafetydiff} shows that the estimated safety difference is only larger than the true safety difference by a term, i.e., $2 \bett \left\lVert \ppsi(\mcuo_h,\bphi(s,a,\tilde{s}_{\max}')) \right\rVert_{(\lamb)^{-1}}$, that decreases to $0$ as the number of learning episodes $k$ increases. This is consistent with the intuition that, as $k$ increases, the estimated safety difference $\tilde{\Delta}_h^k(s,a,s')$ should get closer to the true safety difference $\Delta_h(s,a,s')$. Below, based on~\cref{lemma:relatetrueestimatesafetydiff}, we prove~\cref{lemma:impactfromfuture}.

\begin{proof}

\textbf{(Proof of~\cref{lemma:impactfromfuture})} Recall that $\tf_h(s,a_0|s_h^*=s)$ represents the gap between the decision of the policy $\pi^k$ used by~\abralg~and the optimal decision. Thus, now we characterize the relation between the safety values based on the state-action pair $(s,a_0)$ and the optimal state-action pair $(s,a_h^*(s))$. First, according to the definition of estimated safety value in~\eqref{eq:estimatesafetyvalue} and~\cref{ass:starconvexity}, the estimated safety value of any state-action-state triplet $(s,a_0,s'(s,a_0))$ induced by the state-action pair $(s,a_0)$ is equal to
\begin{align}
& \tc_h^k(s,a_0,s'(s,a_0)) \nonumber \\
& = \frac{\langle \ppsi(\mcu_h,\bphi(s,a_0,s')), \phii(s_h^0,a_h^0,s_{h+1}^0) \rangle}{\lVert \bphi(s_h^0,a_h^0,s_{h+1}^0) \rVert_2} \cdot c_h^0 + \left\langle \gam, \ppsi(\mcuo_h,\bphi(s,a_0,s')) \right\rangle + \bett \left\lVert \ppsi(\mcuo_h,\bphi(s,a_0,s')) \right\rVert_{(\lamb)^{-1}} \nonumber \\
& = \frac{\langle \ppsi(\mcu_h,\alpha_{s'}\bphi(s_h^0,a_h^0,s_{h+1}^0)+(1-\alpha_{s'})\bphi(s,a_h^*(s),s'(s,a_h^*(s)))), \phii(s_h^0,a_h^0,s_{h+1}^0) \rangle}{\lVert \bphi(s_h^0,a_h^0,s_{h+1}^0) \rVert_2} \cdot c_h^0 \nonumber \\
&\qquad\qquad\qquad\qquad\qquad + \left\langle \gam, \ppsi(\mcuo_h,\alpha_{s'}\bphi(s_h^0,a_h^0,s_{h+1}^0)+(1-\alpha_{s'})\bphi(s,a_h^*(s),s'(s,a_h^*(s)))) \right\rangle \nonumber \\
&\qquad\qquad\qquad\qquad\qquad + \bett \left\lVert \ppsi(\mcuo_h,\alpha_{s'}\bphi(s_h^0,a_h^0,s_{h+1}^0)+(1-\alpha_{s'})\bphi(s,a_h^*(s),s'(s,a_h^*(s)))) \right\rVert_{(\lamb)^{-1}}. \label{eq:pfimpactfromfuture3}
\end{align}
where we drop $(s,a_0)$ from $s'(s,a_0)$ for simplicity. Since $\ppsi(\mcu_h,\bphi(s_h^0,a_h^0,s_{h+1}^0)) = \bphi(s_h^0,a_h^0,s_{h+1}^0)$ and $\ppsi(\mcuo_h,\bphi(s_h^0,a_h^0,s_{h+1}^0)) = 0$, from~\eqref{eq:pfimpactfromfuture3}, we have
\begin{align}
& \tc_h^k(s,a_0,s'(s,a_0)) = \alpha_{s'(s,a_0)} \cdot \frac{\langle \bphi(s_h^0,a_h^0,s_{h+1}^0), \phii(s_h^0,a_h^0,s_{h+1}^0) \rangle}{\lVert \bphi(s_h^0,a_h^0,s_{h+1}^0) \rVert_2} \cdot c_h^0 \nonumber \\
& + (1-\alpha_{s'(s,a_0)}) \cdot \Bigg[ \frac{\langle \ppsi(\mcu_h,\bphi(s,a_h^*(s),s'(s,a_h^*(s)))), \phii(s_h^0,a_h^0,s_{h+1}^0) \rangle}{\lVert \bphi(s_h^0,a_h^0,s_{h+1}^0) \rVert_2} \cdot c_h^0 \nonumber \\
&\qquad\qquad + \Big\langle \gam, \ppsi\Big( \mcuo_h,\bphi\big(s,a_h^*(s),s'(s,a_h^*(s))\big) \Big) \Big\rangle + \bett \Bigg\lVert \ppsi\Big( \mcuo_h,\bphi(s,a_h^*(s),s'(s,a_h^*(s))) \Big) \Bigg\rVert_{(\lamb)^{-1}} \Bigg]. \label{eq:pfimpactfromfuture4}
\end{align}
Let us focus on the terms in the bracket $[\cdot]$ of~\eqref{eq:pfimpactfromfuture4}. Notice that, (i) we have
\begin{align}
& \frac{\langle \ppsi(\mcu_h,\bphi(s,a_h^*(s),s'(s,a_h^*(s)))), \phii(s_h^0,a_h^0,s_{h+1}^0) \rangle}{\lVert \bphi(s_h^0,a_h^0,s_{h+1}^0) \rVert_2} \cdot c_h^0 + \left\langle \bgam_h^*, \ppsi(\mcuo_h,\bphi(s,a_h^*(s),s'(s,a_h^*(s)))) \right\rangle \nonumber \\
& = \left\langle \bgam_h^*, \left\langle \ppsi(\mcu_h,\bphi(s,a_h^*(s),s'(s,a_h^*(s)))), \phii(s_h^0,a_h^0,s_{h+1}^0) \right\rangle \phii(s_h^0,a_h^0,s_{h+1}^0) \right\rangle + \left\langle \bgam_h^*, \ppsi(\mcuo_h,\bphi(s,a_h^*(s),s'(s,a_h^*(s)))) \right\rangle \nonumber \\
& = \left\langle \bgam_h^*, \bphi(s,a_h^*(s),s'(s,a_h^*(s))) \right\rangle \nonumber \\
& = c_h(s,a_h^*(s),s'(s,a_h^*(s))) \nonumber \\
& \leq \bc - \Delta_h(s,a_h^*(s),s'(s,a_h^*(s))), \label{eq:pfimpactfromfuture5}
\end{align}
where the inequality is (a) because $(s,a_h^*(s))$ is safe, and hence $c_h(s,a_h^*(s),s') \leq \bc$ for all $s' \in \mcs_h(s,a_h^*(s))$; (b) according to the definition of the true safety difference in~\eqref{eq:deftruesafetydiff}. (ii) According to~\cref{lemma:safetyparameteraccuracy}, we have
\begin{align}
\left\langle \gam-\bgam_h^*, \ppsi(\mcuo_h,\bphi(s,a_h^*(s),s'(s,a_h^*(s)))) \right\rangle \leq \bett \left\lVert \ppsi(\mcuo_h,\bphi(s,a_h^*(s),s'(s,a_h^*(s)))) \right\rVert_{(\lamb)^{-1}}. \label{eq:pfimpactfromfuture6}
\end{align}
By combining~\eqref{eq:pfimpactfromfuture4},~\eqref{eq:pfimpactfromfuture5} and~\eqref{eq:pfimpactfromfuture6}, we have
\begin{align}
& \tc_h^k(s,a_0,s'(s,a_0)) \nonumber \\
& \leq \alpha_{s'(s,a_0)} c_h^0 + (1-\alpha_{s'(s,a_0)}) \left[\bc - \Delta_h(s,a_h^*(s),s'(s,a_h^*(s))) + 2\bett \left\lVert \ppsi(\mcuo_h,\bphi(s,a_h^*(s),s'(s,a_h^*(s)))) \right\rVert_{(\lamb)^{-1}} \right]. \label{eq:pfimpactfromfuture7}
\end{align}
Next, since the optimal action $a_h^*(s)$ has not been found by the algorithm, there must exist at least one next-state $s' \in \mcs_h(s,a)$, such that the instantaneous hard constraint~\eqref{eq:defhardconstraint} is violated. Thus, we must have
\begin{align}\label{eq:pfimpactfromfuture1}
\tc_h^k(s,a_h^*(s),\tilde{s}_{\max}') > \bc.
\end{align}
Combining~\eqref{eq:pfimpactfromfuture1} and~\cref{lemma:relatetrueestimatesafetydiff}, we have that, for all next state $s'(s,a_h^*(s)) \in \mcs_h(s,a_h^*(s))$,
\begin{align}
\tc_h^k(s,a_h^*(s),s'(s,a_h^*(s))) > \bc - \Delta_h(s,a_h^*(s),s'(s,a_h^*(s))) - 2 \bett \left\lVert \ppsi(\mcuo_h,\bphi(s,a,\tilde{s}_{\max}')) \right\rVert_{(\lamb)^{-1}}. \label{eq:pfimpactfromfuture2}
\end{align} 
However, as we discussed in our Idea II and Idea III in~\cref{sec:algorithm}, due to possible unsafe transitions and unsafe states in our problem, such a safety value in~\eqref{eq:pfimpactfromfuture2} may not be achieved by the algorithm. This is a critical difference compared with the case without instantaneous constraints or with only unsafe actions. Therefore, in the following, we first quantify the gap between the state-action pair $(s,a_0')$ that achieves the safety value in~\eqref{eq:pfimpactfromfuture2} and the optimal state-action pair $(s,a_h^*(s))$. Then, we quantify the smallest gap between the \emph{safe} state-action pair $(s,a_0)$ and such a possibly unsafe state-action pair $(s,a_0')$. Specifically, for the state-action pair $(s,a_0')$ that takes the safety value in~\eqref{eq:pfimpactfromfuture2}, from~\eqref{eq:pfimpactfromfuture7}, we have
\begin{align}
\alpha_{s'(s,a_0')} \leq 1 - \frac{\bc-c_{h}^0-\Delta_h(s,a_h^*(s),s'(s,a_h^*(s)))-2\bett \max_{s'} \lVert \ppsi(\mcuo_h,\bphi(s,a_h^*(s),s')) \rVert_{(\lamb)^{-1}}}{\bc-c_{h}^0-\Delta_h(s,a_h^*(s),s'(s,a_h^*(s)))+2\bett \max_{s'} \lVert \ppsi(\mcuo_h,\bphi(s,a_h^*(s),s')) \rVert_{(\lamb)^{-1}}}. \label{eq:pfimpactfromfuture8}
\end{align}
Since the right-hand-side of~\eqref{eq:pfimpactfromfuture8} increases with $\Delta_h(s,a_h^*(s),s'(s,a_h^*(s)))$, we have
\begin{align}
\alpha_{s'(s,a_0')} \leq 1 - \frac{\bc-c_{h}^0-\ddelt-2\bett \max_{s'} \lVert \ppsi(\mcuo_h,\bphi(s,a_h^*(s),s')) \rVert_{(\lamb)^{-1}}}{\bc-c_{h}^0-\ddelt+2\bett \max_{s'} \lVert \ppsi(\mcuo_h,\bphi(s,a_h^*(s),s')) \rVert_{(\lamb)^{-1}}}. \label{eq:pfimpactfromfuture9}
\end{align}
Note that~\eqref{eq:pfimpactfromfuture9} quantifies the gap between the state-action pair $(s,a_0')$ that achieves the safety value in~\eqref{eq:pfimpactfromfuture2} and the optimal state-action pair $(s,a_h^*(s))$. Next, we quantify the smallest gap between the safe state-action pair $(s,a_0)$ and such a possibly unsafe state-action pair $(s,a_0')$. According to~\eqref{eq:pfimpactfromfuture9}, there must exists a safe action $a_{h',0}'$ for only step $h'$, s.t.,
\begin{align}
\alpha_{s'(s,a_{h',0}')} \leq 1 - \frac{\bc-c_{h'}^0-\ddelt-2\bett \max_{s'} \lVert \ppsi(\mcuo_{h'},\bphi(s_{h'}^*,a_{h'}^*(s),s')) \rVert_{(\blambd_{h',1}^k)^{-1}}}{ \bc-c_{h'}^0-\ddelt+2\bett \max_{s'} \lVert \ppsi(\mcuo_{h'},\bphi(s_{h'}^*,a_{h'}^*(s),s')) \rVert_{(\blambd_{h',1}^k)^{-1}} }. \label{eq:pfimpactfromfuture10}
\end{align}
Then, let $\hf_h(s,a) \triangleq f_h(\bphi(s,a,\cdot)-\bphi(s_h^0,a_h^0,s_{h+1}^0))$ denote the normalized $\mathcal{L}_2$-distance between the features of the transitions associated with the state-action pair $(s,a)$ and the known safe feature $\bphi(s_h^0,a_h^0,s_{h+1}^0)$. According to~\eqref{eq:pfimpactfromfuture10} and~\eqref{eq:lipschitztransition}, there must exists an action $a_0$ that induces at least one safe subsubgraph $G_h^{k,\safe}(s,a_0)$, s.t.,
\begin{align}
\frac{\hf_h(s,a_0)}{\hf_h(s,a_0')} \geq \frac{\bc-\bc_{h'}^0-\ddelt-2\bett \max_{\{h<h'\leq \ch,(s_{h'}^*,a_{h'}^*(s),s')\in \mcg_h(s)\}} \lVert \ppsi(\mcuo_{h'},\bphi(s_{h'}^*,a_{h'}^*(s),s')) \rVert_{(\blambd_{h',1}^k)^{-1}}}{\bc-\bc_{h'}^0-\ddelt+2\bett \max_{\{h<h'\leq \ch,(s_{h'}^*,a_{h'}^*(s),s')\in \mcg_h(s)\}} \lVert \ppsi(\mcuo_{h'},\bphi(s_{h'}^*,a_{h'}^*(s),s')) \rVert_{(\blambd_{h',1}^k)^{-1}} }. \label{eq:pfimpactfromfuture11}
\end{align}
Finally, by combining~\eqref{eq:pfimpactfromfuture9} and~\eqref{eq:pfimpactfromfuture11}, we have that the left-hand-side of~\eqref{eq:impactfromfuture} can be upper-bounded as follows:
\begin{align}
& \tf_h(s,a_0 \vert s_h^*=s) 
= 1 - \frac{\hf_h(s,a_0 \vert s_h^*=s)}{\hf_h(s,a_0' \vert s_h^*=s)} \cdot \hf_h(s,a_0' \vert s_h^*=s) \nonumber \\
&\qquad\qquad \leq 1 - \frac{\bc-c_{h}^0-\ddelt-2\bett \max_{s'} \lVert \ppsi(\mcuo_h,\bphi(s,a_h^*(s),s')) \rVert_{(\lamb)^{-1}}}{\bc-c_{h}^0-\ddelt+2\bett \max_{s'} \lVert \ppsi(\mcuo_h,\bphi(s,a_h^*(s),s')) \rVert_{(\lamb)^{-1}}} \nonumber \\
&\qquad\qquad\qquad\qquad \cdot \frac{\bc-\bc_{h'}^0-\ddelt-2\bett \max_{\{h<h'\leq \ch,(s_{h'}^*,a_{h'}^*(s),s')\in \mcg_h(s)\}} \lVert \ppsi(\mcuo_{h'},\bphi(s_{h'}^*,a_{h'}^*(s),s')) \rVert_{(\blambd_{h',1}^k)^{-1}}}{\bc-\bc_{h'}^0-\ddelt+2\bett \max_{\{h<h'\leq \ch,(s_{h'}^*,a_{h'}^*(s),s')\in \mcg_h(s)\}} \lVert \ppsi(\mcuo_{h'},\bphi(s_{h'}^*,a_{h'}^*(s),s')) \rVert_{(\blambd_{h',1}^k)^{-1}} }. \nonumber
\end{align}

\end{proof}

\section{Proof of~\cref{lemma:impactfrompast}}\label{app:lemmaimpactfrompast}

As we mentioned in~\cref{subsec:proofsketch}, compared with~\cref{lemma:impactfromfuture}, the main difference in~\cref{lemma:impactfrompast} is that~\cref{lemma:impactfrompast} quantifies the impacts from past steps, i.e., $h' \leq h$. This special new impact results in the bonus term with parameter $\epssss$ in our Idea IV in~\cref{sec:algorithm}.

Notice that~\cref{lemma:impactfrompast} implies that when $k$ increases, the UCB terms $l_3$ decreases to be closer to $0$, and thus the right-hand-side~\eqref{eq:impactfrompast} get closer to $0$. Then, $\tf_h(\hs,a_0)$ on the left-hand-side of~\eqref{eq:impactfrompast} gets closer to $0$. Notice that $\tf_h(\hs,a_0)$ represents the gap between the decision of the policy $\pi^k$ used by~\abralg~and the optimal decision. In addition, on the right-hand-side of~\eqref{eq:impactfrompast}, $l_3$ characterizes the uncertainty from past steps. Thus, the above implication from~\cref{lemma:impactfrompast} is consistent with the intuition that as more safety values revealed, we should be able to get closer to the optimal action.

In this section, we provide the complete proof for~\cref{lemma:impactfrompast}. Please see~\cref{app:lemmainvariants} for our discussions and proofs on how this special new impact from past steps results in a new bonus term in our Idea IV in~\cref{sec:algorithm} and how it affects the requirements for choosing the parameters $\epssss$.

\begin{proof}

According to~\cref{lemma:relatetrueestimatesafetydiff} and~\eqref{eq:pfimpactfromfuture10}, there must exists a safe action $a_{h',0}$ at step $h'\leq h$, s.t.,
\begin{align}
\alpha_{s'(s,a_{h',0})} \leq 1 - \frac{\bc-c_{h'}^0-\ddelt-2\bett \max_{s'} \lVert \ppsi(\mcuo_{h'},\bphi(s_{h'}^*,a_{h'}^*(s),s')) \rVert_{(\blambd_{h',1}^k)^{-1}}}{ \bc-c_{h'}^0-\ddelt+2\bett \max_{s'} \lVert \ppsi(\mcuo_{h'},\bphi(s_{h'}^*,a_{h'}^*(s),s')) \rVert_{(\blambd_{h',1}^k)^{-1}} }. \label{eq:pfimpactfrompast1}
\end{align}
Then, according to~\cref{ass:lipschitzreward}, there must exists a safe action $a_0$ at step $h$, s.t.,
\begin{align}
\alpha_{s'(\hs,a_0)} \leq 1 - \frac{\bc-c_{h'}^0-\ddelt-2\bett \max_{s'} \lVert \ppsi(\mcuo_{h'},\bphi(s_{h'}^*,a_{h'}^*(s),s')) \rVert_{(\blambd_{h',1}^k)^{-1}}}{\deltt \left( \bc-c_{h'}^0-\ddelt+2\bett \max_{s'} \lVert \ppsi(\mcuo_{h'},\bphi(s_{h'}^*,a_{h'}^*(s),s')) \rVert_{(\blambd_{h',1}^k)^{-1}} \right) }. \label{eq:pfimpactfrompast2}
\end{align}
Finally, since $\tf_h(\hs,a_0) \leq \alpha_{s'(\hs,a_0)}$, we have
\begin{align}
\tf_h(\hs,a_0) \leq 1 - \frac{\bc-c_{h'}^0-\ddelt-2\bett \max_{s'} \lVert \ppsi(\mcuo_{h'},\bphi(s_{h'}^*,a_{h'}^*(s),s')) \rVert_{(\blambd_{h',1}^k)^{-1}}}{\deltt \left( \bc-c_{h'}^0-\ddelt+2\bett \max_{s'} \lVert \ppsi(\mcuo_{h'},\bphi(s_{h'}^*,a_{h'}^*(s),s')) \rVert_{(\blambd_{h',1}^k)^{-1}} \right) }. \nonumber
\end{align}

\end{proof}

\section{Proof of~\cref{lemma:invariants}}\label{app:lemmainvariants}

In this section, we provide the proof of~\cref{lemma:invariants}. The proof replies on~\cref{lemma:impactfromfuture} and~\cref{lemma:impactfrompast}. Recall from~\cref{subsec:proofsketch} that invariant (i) shows that, if the optimal state has been found, the estimated $V$-value must be higher than the optimal $V$-value. From a high-level point of view, if the optimal safe action has also been found, invariant (i) trivially holds. If it has not been found, thanks to our new bonus terms that essentially capture the distance between the estimated safe actions and the optimal action, invariant (i) still holds. Moreover, invariant (ii) shows that, if the optimal state has not been found, the $V$-value of the sub-optimal state in $\ts_h^k$ is still larger than the optimal $V$-value. This is intuitively because $\ts_h^k$ only contains safe states with transitions close enough (within a small gap captured by the small constant $\balpp$) to the optimal transitions, and the distance is captured by our new bonus terms.

\begin{proof}

We prove~\cref{lemma:invariants} by mathematical induction.

(i) Base case: when $h=\ch+1$, both invariants are trivially true, since $V_h^*(s) = V_h^k(s) = 0$.

(ii) Induction step: we hypothesize that the two invariants are true when $h=h_0$. Then, we prove that they are true for $h=h_0-1$.

(ii-a) Step-a: note that invariant (i) trivially holds for $h=\ch$ since $V_h^*(s) = V_h^k(s) = r_{\ch}(s)$. Next, we prove invariant (i) for $h<\ch$ by considering the following two cases, based on whether the optimal action $a_h^*(s)$ has been \emph{found} in $\ta_h^k(s)$ and \emph{chosen} or not.

(ii-a-1) Case-1: If the optimal action $a_h^*(s)$ has been found in $\ta_h^k(s)$ and chosen by $\pi^k$, i.e., $a_h^k(s) = a_h^*(s)$, based on Section D.4 in~\cite{jia2020model}, we have
\begin{align}
V_h^k(s) = Q_h^k(s,a_h^k(s)) = Q_h^k(s,a_h^*(s)) \geq Q_h^*(s,a_h^*(s)) = V_h^*(s), \label{eq:pfinvariants1}
\end{align}
where the inequality is because of the definition of $V_h^k(s)$ in~\eqref{eq:defanalysisvfunction} and the induction hypothesis of invariant (i) at step $h_0$. Notice that this step is different from the analysis in the case without constraints or with only unsafe actions. Here, the optimal action $a_h^*(s)$ must already be chosen, i.e., it is not enough to simply find that the action is safe. This is because, if the optimal action $a_h^*(s)$ is simply found to be safe while not chosen by the algorithm, a future subsubgraph that is completely different from that of the optimal policy could be visited by $\pi^k$.

(ii-a-2) Case-2: If the optimal action $a_h^*(s)$ has not been \emph{chosen} by $\pi^k$, i.e., $a_h^k(s) \neq a_h^*(s)$, we consider the following two subcases based on whether the optimal action $a_h^*(s)$ has been \emph{found} in $\ta_h^k(s)$ or not.

(ii-a-2-I) Subcase-2-I: If the optimal action $a_h^*(s)$ has been found in $\ta_h^k(s)$ by $\pi^k$, i.e., $a_h^*(s) \in \ta_h^k(s)$, we have
\begin{align}
V_h^k(s) = \max_{a\in \ta_h^k(s)} Q_h^k(s,a) = Q_h^k(s,a_h^*(s) \vert V_{h+1}^k) \geq Q_h^*(s,a_h^*(s) \vert V_{h+1}^k) \geq Q_h^*(s,a_h^*(s)) = V_h^*(s), \label{eq:pfinvariants2}
\end{align}
where the second inequality is because of the definition of $V_h^k(s)$ in~\eqref{eq:defanalysisvfunction} and the induction hypothesis of invariant (ii) at step $h_0$. Recal from~\eqref{eq:optqvalue} that $Q_h^*(s,a) = r_h(s,a) + \langle w_h^*, \bphi_{V_{h+1}^*}(s,a) \rangle$, which depends on the $V$-value $V_{h+1}^*$ at next step. Thus, we write such a dependency explicitly for $Q_h^k$ and $Q_h^*$ in~\eqref{eq:pfinvariants2}.

(ii-a-2-II) Subcase-2-II: If the optimal action $a_h^*(s)$ has not been found in $\ta_h^k(s)$ by $\pi^k$, i.e., $a_h^*(s) \notin \ta_h^k(s)$, we consider the following two subsubcases, based on the reason the optimal action $a_h^*(s)$ has not been found in $\ta_h^k(s)$ by $\pi^k$.

(ii-a-2-II-A) Subsubcase-2-II-A: If the optimal action $a_h^*(s)$ has not been found in $\ta_h^k(s)$ by $\pi^k$ because condition 1 in~\eqref{eq:safetycondition1} is violated, we have
\begin{align}
\max_{s'\in \mcs_h(s,a_h^*(s))} \tc_h^k(s,a_h^*(s),s') > \bc, \nonumber
\end{align}
Note that $V_h^k(s) = \max_{a\in \ta_h^k(s)} Q_h^k(s,a) \geq Q_h^k(s,a_0)$ and the bonus term $\epssss \cdot \max_{s'\in \mcs_1(s_1,a_1^k)} \lVert \ppsi(\mcuo_1,\bphi(s_1,a_1^k,s')) \rVert_{(\blambd_{1,1}^k)^{-1}}$ in~\eqref{eq:estimateqvalue} is non-negative, we have
\begin{align}
V_h^k(s) & \geq \min\Big\{ r_h(s,a_0) + \left\langle \whk, \bphi_{V_{h+1}^k}(s,a_0) \right\rangle + \eps \cdot \lVert \bphi_{V_{h+1}^k}(s,a_0) \rVert_{(\lambb)^{-1}} \nonumber \\
& + \epss\cdot \max_{s'\in \mcs_h(s,a_0)} \lVert \ppsi(\mcuo_h,\bphi(s,a_0,s')) \rVert_{(\lamb)^{-1}} + \epsss\cdot \max_{(s_{h'},a_{h'},s') \in \mcg_h(s)} \lVert \ppsi(\mcuo_{h'},\bphi(s_{h'},a_{h'},s')) \rVert_{(\blambd_{h',1}^k)^{-1}}, \ch \Big\}. \nonumber
\end{align}
Then, according to Section D.4 in~\cite{jia2020model}, we have
\begin{align}
V_h^k(s) & \geq \min\Big\{ r_h(s,a_0) + \left\langle w_h^*, \bphi_{V_{h+1}^k}(s,a_0) \right\rangle + (\eps-1) \cdot \lVert \bphi_{V_{h+1}^k}(s,a_0) \rVert_{(\lambb)^{-1}} \nonumber \\
& + \epss\cdot \max_{s'\in \mcs_h(s,a_0)} \lVert \ppsi(\mcuo_h,\bphi(s,a_0,s')) \rVert_{(\lamb)^{-1}} + \epsss\cdot \max_{(s_{h'},a_{h'},s') \in \mcg_h(s)} \lVert \ppsi(\mcuo_{h'},\bphi(s_{h'},a_{h'},s')) \rVert_{(\blambd_{h',1}^k)^{-1}}, \ch \Big\}. \label{eq:pfinvariants5}
\end{align}
Moreover, according to~\cref{lemma:impactfromfuture}, there must exists an action $a_0 \in \ta_h^k(s)$, s.t.,
\begin{align}
& \tf_h(s,a_0 \vert s_h^*=s) \leq 1 - \frac{\bc-c_{h}^0-\ddelt-2\bett \max_{s'} \lVert \ppsi(\mcuo_h,\bphi(s,a_h^*(s),s')) \rVert_{(\lamb)^{-1}}}{\bc-c_{h}^0-\ddelt+2\bett \max_{s'} \lVert \ppsi(\mcuo_h,\bphi(s,a_h^*(s),s')) \rVert_{(\lamb)^{-1}}} \nonumber \\
&\qquad\qquad\qquad\qquad \cdot \frac{\bc-\bc_{h'}^0-\ddelt-2\bett \max_{\{h<h'\leq \ch,(s_{h'}^*,a_{h'}^*(s),s')\in \mcg_h(s)\}} \lVert \ppsi(\mcuo_{h'},\bphi(s_{h'}^*,a_{h'}^*(s),s')) \rVert_{(\blambd_{h',1}^k)^{-1}}}{\bc-\bc_{h'}^0-\ddelt+2\bett \max_{\{h<h'\leq \ch,(s_{h'}^*,a_{h'}^*(s),s')\in \mcg_h(s)\}} \lVert \ppsi(\mcuo_{h'},\bphi(s_{h'}^*,a_{h'}^*(s),s')) \rVert_{(\blambd_{h',1}^k)^{-1}} }. \label{eq:pfinvariants6}
\end{align}
By combining~\eqref{eq:pfinvariants5} and~\eqref{eq:pfinvariants6}, and according to~\cref{ass:lipschitzreward} and invariant (ii) at the next step $h_0$, we have
\begin{align}
& V_h^k(s) \geq \min\Bigg\{ \frac{\bc-c_{h}^0-\ddelt-2\bett \max_{s'} \lVert \ppsi(\mcuo_h,\bphi(s,a_h^*(s),s')) \rVert_{(\lamb)^{-1}}}{\bc-c_{h}^0-\ddelt+2\bett \max_{s'} \lVert \ppsi(\mcuo_h,\bphi(s,a_h^*(s),s')) \rVert_{(\lamb)^{-1}}} \nonumber \\
&\qquad \cdot \frac{\bc-\bc_{h'}^0-\ddelt-2\bett \max_{\{h<h'\leq \ch,(s_{h'}^*,a_{h'}^*(s),s')\in \mcg_h(s)\}} \lVert \ppsi(\mcuo_{h'},\bphi(s_{h'}^*,a_{h'}^*(s),s')) \rVert_{(\blambd_{h',1}^k)^{-1}}}{\bc-\bc_{h'}^0-\ddelt+2\bett \max_{\{h<h'\leq \ch,(s_{h'}^*,a_{h'}^*(s),s')\in \mcg_h(s)\}} \lVert \ppsi(\mcuo_{h'},\bphi(s_{h'}^*,a_{h'}^*(s),s')) \rVert_{(\blambd_{h',1}^k)^{-1}} } \cdot \delt \Big[ r_h(s,a_h^*(s)) \nonumber \\
&\quad + \left\langle w_h^*, \bphi_{V_{h+1}^*}(s,a_h^*(s)) \right\rangle + (\eps-1) \cdot \lVert \bphi_{V_{h+1}^*}(s,a_h^*(s)) \rVert_{(\lambb)^{-1}} + \epss \nonumber \\
&\qquad \cdot \max_{s'\in \mcs_h(s,a_h^*(s))} \lVert \ppsi(\mcuo_h,\bphi(s,a_h^*(s),s')) \rVert_{(\lamb)^{-1}} + \epsss\cdot \max_{(s_{h'}^*,a_{h'}^*,s') \in \mcg_h(s)} \lVert \ppsi(\mcuo_{h'},\bphi(s_{h'}^*,a_{h'}^*,s')) \rVert_{(\blambd_{h',1}^k)^{-1}} \Big], \ch \Bigg\}. \nonumber
\end{align}
Since $\eps$ is set to be equal to $\bett+1$, we have $\eps-1\geq 0$. Thus, $(\eps-1) \cdot \lVert \bphi_{V_{h+1}^*}(s,a_h^*(s)) \rVert_{(\lambb)^{-1}} \geq 0$.
Thus, we have
\begin{align}
& V_h^k(s) \geq \min\Bigg\{ \frac{\bc-c_{h}^0-\ddelt-2\bett \max_{s'} \lVert \ppsi(\mcuo_h,\bphi(s,a_h^*(s),s')) \rVert_{(\lamb)^{-1}}}{\bc-c_{h}^0-\ddelt+2\bett \max_{s'} \lVert \ppsi(\mcuo_h,\bphi(s,a_h^*(s),s')) \rVert_{(\lamb)^{-1}}} \nonumber \\
&\qquad \cdot \frac{\bc-\bc_{h'}^0-\ddelt-2\bett \max_{\{h<h'\leq \ch,(s_{h'}^*,a_{h'}^*(s),s')\in \mcg_h(s)\}} \lVert \ppsi(\mcuo_{h'},\bphi(s_{h'}^*,a_{h'}^*(s),s')) \rVert_{(\blambd_{h',1}^k)^{-1}}}{\bc-\bc_{h'}^0-\ddelt+2\bett \max_{\{h<h'\leq \ch,(s_{h'}^*,a_{h'}^*(s),s')\in \mcg_h(s)\}} \lVert \ppsi(\mcuo_{h'},\bphi(s_{h'}^*,a_{h'}^*(s),s')) \rVert_{(\blambd_{h',1}^k)^{-1}} } \cdot \delt \Big[ r_h(s,a_h^*(s)) \nonumber \\
&\quad + \left\langle w_h^*, \bphi_{V_{h+1}^*}(s,a_h^*(s)) \right\rangle  + \epss \cdot \max_{s'\in \mcs_h(s,a_h^*(s))} \lVert \ppsi(\mcuo_h,\bphi(s,a_h^*(s),s')) \rVert_{(\lamb)^{-1}} \nonumber \\
&\quad + \epsss\cdot \max_{(s_{h'}^*,a_{h'}^*,s') \in \mcg_h(s)} \lVert \ppsi(\mcuo_{h'},\bphi(s_{h'}^*,a_{h'}^*,s')) \rVert_{(\blambd_{h',1}^k)^{-1}} \Big], \ch \Bigg\}. \label{eq:pfinvariants7}
\end{align}
Thus, to prove that $V_h^k(s) \geq V_h^*(s)$, we need to prove that
\begin{align}
& \delt \Big[ \bc-c_{h}^0-\ddelt-2\bett \max_{s'} \lVert \ppsi(\mcuo_h,\bphi(s,a_h^*(s),s')) \rVert_{(\lamb)^{-1}} \Big] \nonumber \\
&\quad \cdot\Big[ \bc-\bc_{h'}^0-\ddelt-2\bett \max_{\{h<h'\leq \ch,(s_{h'}^*,a_{h'}^*(s),s')\in \mcg_h(s)\}} \lVert \ppsi(\mcuo_{h'},\bphi(s_{h'}^*,a_{h'}^*(s),s')) \rVert_{(\blambd_{h',1}^k)^{-1}} \Big] \nonumber \\
&\quad \cdot \Big[ Q_h^*(s,a_h^*(s)) + \epss \cdot \max_{s'\in \mcs_h(s,a_h^*(s))} \lVert \ppsi(\mcuo_h,\bphi(s,a_h^*(s),s')) \rVert_{(\lamb)^{-1}} \nonumber \\
&\qquad\qquad\qquad\qquad\qquad\qquad\qquad\qquad + \epsss\cdot \max_{(s_{h'}^*,a_{h'}^*,s') \in \mcg_h(s)} \lVert \ppsi(\mcuo_{h'},\bphi(s_{h'}^*,a_{h'}^*,s')) \rVert_{(\blambd_{h',1}^k)^{-1}} \Big] \nonumber \\
& \geq \Big[ \bc-c_{h}^0-\ddelt+2\bett \max_{s'} \lVert \ppsi(\mcuo_h,\bphi(s,a_h^*(s),s')) \rVert_{(\lamb)^{-1}} \Big] \nonumber \\
&\quad \cdot\Big[ \bc-\bc_{h'}^0-\ddelt+2\bett \max_{\{h<h'\leq \ch,(s_{h'}^*,a_{h'}^*(s),s')\in \mcg_h(s)\}} \lVert \ppsi(\mcuo_{h'},\bphi(s_{h'}^*,a_{h'}^*(s),s')) \rVert_{(\blambd_{h',1}^k)^{-1}} \Big] \cdot Q_h^*(s,a_h^*(s)). \label{eq:pfinvariants8}
\end{align}
By rearranging the terms in~\eqref{eq:pfinvariants8}, we have
\begin{align}
& \delt \Big[ \bc-c_{h}^0-\ddelt-2\bett \max_{s'} \lVert \ppsi(\mcuo_h,\bphi(s,a_h^*(s),s')) \rVert_{(\lamb)^{-1}} \Big] \nonumber \\
&\quad \cdot\Big[ \bc-\bc_{h'}^0-\ddelt-2\bett \max_{\{h<h'\leq \ch,(s_{h'}^*,a_{h'}^*(s),s')\in \mcg_h(s)\}} \lVert \ppsi(\mcuo_{h'},\bphi(s_{h'}^*,a_{h'}^*(s),s')) \rVert_{(\blambd_{h',1}^k)^{-1}} \Big] \nonumber \\
&\quad \cdot \Big[ \epss \cdot \max_{s'\in \mcs_h(s,a_h^*(s))} \lVert \ppsi(\mcuo_h,\bphi(s,a_h^*(s),s')) \rVert_{(\lamb)^{-1}} + \epsss\cdot \max_{(s_{h'}^*,a_{h'}^*,s') \in \mcg_h(s)} \lVert \ppsi(\mcuo_{h'},\bphi(s_{h'}^*,a_{h'}^*,s')) \rVert_{(\blambd_{h',1}^k)^{-1}} \Big] \nonumber \\
& \geq 4\bett \Big[ (\bc-\bc_{h'}^0-\ddelt) \max_{s'} \lVert \ppsi(\mcuo_h,\bphi(s,a_h^*(s),s')) \rVert_{(\lamb)^{-1}} \nonumber \\
&\qquad\qquad + (\bc-c_{h}^0-\ddelt) \max_{\{h<h'\leq \ch,(s_{h'}^*,a_{h'}^*(s),s')\in \mcg_h(s)\}} \lVert \ppsi(\mcuo_{h'},\bphi(s_{h'}^*,a_{h'}^*(s),s')) \rVert_{(\blambd_{h',1}^k)^{-1}} \Big] \cdot Q_h^*(s,a_h^*(s)). \nonumber
\end{align}
Since $Q_h^*(s,a_h^*(s)) \leq \ch$ for all states $s$ and steps $h$, we have
\begin{align}
& \epss \cdot \max_{s'\in \mcs_h(s,a_h^*(s))} \lVert \ppsi(\mcuo_h,\bphi(s,a_h^*(s),s')) \rVert_{(\lamb)^{-1}} + \epsss\cdot \max_{(s_{h'}^*,a_{h'}^*,s') \in \mcg_h(s)} \lVert \ppsi(\mcuo_{h'},\bphi(s_{h'}^*,a_{h'}^*,s')) \rVert_{(\blambd_{h',1}^k)^{-1}} \nonumber \\
& \geq \frac{4\bett\ch}{\delt} \Bigg[ \frac{ \bc-\bc_{h'}^0-\ddelt }{ \bc-c_{h}^0-\ddelt } \max_{s'} \left\lVert \ppsi(\mcuo_h,\bphi(s,a_h^*(s),s')) \right\rVert_{(\lamb)^{-1}} \nonumber \\
&\qquad\qquad\qquad\qquad\qquad\qquad\qquad + \max_{\{h<h'\leq \ch,(s_{h'}^*,a_{h'}^*(s),s')\in \mcg_h(s)\}} \lVert \ppsi(\mcuo_{h'},\bphi(s_{h'}^*,a_{h'}^*(s),s')) \rVert_{(\blambd_{h',1}^k)^{-1}} \Bigg] \nonumber \\
& \cdot \Big[ \bc-\bc_{h'}^0-\ddelt - \frac{ \bc-\bc_{h'}^0-\ddelt }{ \bc-c_{h}^0-\ddelt } 2\bett \max_{s'} \left\lVert \ppsi(\mcuo_h,\bphi(s,a_h^*(s),s')) \right\rVert_{(\lamb)^{-1}} \nonumber \\
&\qquad\qquad\qquad\qquad\qquad\qquad\qquad - 2\bett \max_{\{h<h'\leq \ch,(s_{h'}^*,a_{h'}^*(s),s')\in \mcg_h(s)\}} \lVert \ppsi(\mcuo_{h'},\bphi(s_{h'}^*,a_{h'}^*(s),s')) \rVert_{(\blambd_{h',1}^k)^{-1}} \Big]^{-1}. \label{eq:pfinvariants10}
\end{align}
Note that~\eqref{eq:pfinvariants10} indicates that, to have $V_h^k(s) \geq V_h^*(s)$, we need
\begin{align}
\epss \geq \frac{4\bett H\frac{\bc-\bc_{h'}^0-\ddelt}{\bc-c_{h}^0-\ddelt}}{\delt(\bc-\bc_{h'}^0-\ddelt - \frac{\bc-\bc_{h'}^0-\ddelt}{\bc-c_{h}^0-\ddelt}\kappa)} 
\text{ and }
\epsss \geq \frac{4\bett H}{\delt(\bc-\bc_{h'}^0-\ddelt - \kappa)}. \nonumber
\end{align}
This is reason we set the parameters $\epss$ and $\epsss$ in our Idea II and Idea III to be in the form in~\eqref{eq:epss} and ~\eqref{eq:epsss}, respectively.

(ii-a-2-II-B) Subsubcase-2-II-B: If the optimal action $a_h^*(s)$ has not been found in $\ta_h^k(s)$ by $\pi^k$ because (although condition 1 in~\eqref{eq:safetycondition1} is satisfied) condition 2 in~\eqref{eq:safetycondition2} is violated, we have
\begin{align}
\mcs_{h+1}(s,a_h^*(s)) \not\subseteq \mcs_{h+1}^{k,\safe}. \nonumber
\end{align}
In this subsubcase, we can leverage the knowledge from the satisfied condition 1 to prove $V_h^k(s) \geq V_h^*(s)$. The proof then could follow the similar inductions in the proof for subsubcase-2-II-A. For completeness, we provide the proof steps below. First, since the bonus term $\epssss \cdot \max_{s'\in \mcs_1(s_1,a_1^k)} \lVert \ppsi(\mcuo_1,\bphi(s_1,a_1^k,s')) \rVert_{(\blambd_{1,1}^k)^{-1}}$ in~\eqref{eq:estimateqvalue} is non-negative, according to Section D.4 in~\cite{jia2020model}, we have
\begin{align}
V_h^k(s) & \geq \min\Big\{ r_h(s,a_0) + \left\langle w_h^*, \bphi_{V_{h+1}^k}(s,a_0) \right\rangle + (\eps-1) \cdot \lVert \bphi_{V_{h+1}^k}(s,a_0) \rVert_{(\lambb)^{-1}} \nonumber \\
& + \epss\cdot \max_{s'\in \mcs_h(s,a_0)} \lVert \ppsi(\mcuo_h,\bphi(s,a_0,s')) \rVert_{(\lamb)^{-1}} + \epsss\cdot \max_{(s_{h'},a_{h'},s') \in \mcg_h(s)} \lVert \ppsi(\mcuo_{h'},\bphi(s_{h'},a_{h'},s')) \rVert_{(\blambd_{h',1}^k)^{-1}}, \ch \Big\}. \nonumber
\end{align}
Next, according to~\cref{ass:lipschitzreward}, invariant (ii) at next step $h+1$ and $(\eps-1) \cdot \lVert \bphi_{V_{h+1}^*}(s,a_h^*(s)) \rVert_{(\lambb)^{-1}} \geq 0$, we have
\begin{align}
& V_h^k(s) \geq \min\Bigg\{ \frac{\bc-c_{h}^0-\ddelt-2\bett \max_{s'} \lVert \ppsi(\mcuo_h,\bphi(s,a_h^*(s),s')) \rVert_{(\lamb)^{-1}}}{\bc-c_{h}^0-\ddelt+2\bett \max_{s'} \lVert \ppsi(\mcuo_h,\bphi(s,a_h^*(s),s')) \rVert_{(\lamb)^{-1}}} \nonumber \\
&\qquad \cdot \frac{\bc-\bc_{h'}^0-\ddelt-2\bett \max_{\{h<h'\leq \ch,(s_{h'}^*,a_{h'}^*(s),s')\in \mcg_h(s)\}} \lVert \ppsi(\mcuo_{h'},\bphi(s_{h'}^*,a_{h'}^*(s),s')) \rVert_{(\blambd_{h',1}^k)^{-1}}}{\bc-\bc_{h'}^0-\ddelt+2\bett \max_{\{h<h'\leq \ch,(s_{h'}^*,a_{h'}^*(s),s')\in \mcg_h(s)\}} \lVert \ppsi(\mcuo_{h'},\bphi(s_{h'}^*,a_{h'}^*(s),s')) \rVert_{(\blambd_{h',1}^k)^{-1}} } \cdot \delt \Big[ r_h(s,a_h^*(s)) \nonumber \\
&\quad + \left\langle w_h^*, \bphi_{V_{h+1}^*}(s,a_h^*(s)) \right\rangle  + \epss \cdot \max_{s'\in \mcs_h(s,a_h^*(s))} \lVert \ppsi(\mcuo_h,\bphi(s,a_h^*(s),s')) \rVert_{(\lamb)^{-1}} \nonumber \\
&\quad + \epsss\cdot \max_{(s_{h'}^*,a_{h'}^*,s') \in \mcg_h(s)} \lVert \ppsi(\mcuo_{h'},\bphi(s_{h'}^*,a_{h'}^*,s')) \rVert_{(\blambd_{h',1}^k)^{-1}} \Big], \ch \Bigg\}. \nonumber
\end{align}
Then, to prove $V_h^k(s) \geq V_h^*(s)$, based on~\eqref{eq:pfinvariants8} and since $Q_h^*(s) \leq \ch$, we have
\begin{align}
& \epss \cdot \max_{s'\in \mcs_h(s,a_h^*(s))} \lVert \ppsi(\mcuo_h,\bphi(s,a_h^*(s),s')) \rVert_{(\lamb)^{-1}} + \epsss\cdot \max_{(s_{h'}^*,a_{h'}^*,s') \in \mcg_h(s)} \lVert \ppsi(\mcuo_{h'},\bphi(s_{h'}^*,a_{h'}^*,s')) \rVert_{(\blambd_{h',1}^k)^{-1}} \nonumber \\
& \geq \frac{4\bett\ch}{\delt} \Bigg[ \frac{ \bc-\bc_{h'}^0-\ddelt }{ \bc-c_{h}^0-\ddelt } \max_{s'} \left\lVert \ppsi(\mcuo_h,\bphi(s,a_h^*(s),s')) \right\rVert_{(\lamb)^{-1}} \nonumber \\
&\qquad\qquad\qquad\qquad\qquad\qquad\qquad + \max_{\{h<h'\leq \ch,(s_{h'}^*,a_{h'}^*(s),s')\in \mcg_h(s)\}} \lVert \ppsi(\mcuo_{h'},\bphi(s_{h'}^*,a_{h'}^*(s),s')) \rVert_{(\blambd_{h',1}^k)^{-1}} \Bigg] \nonumber \\
& \cdot \Big[ \bc-\bc_{h'}^0-\ddelt - \frac{ \bc-\bc_{h'}^0-\ddelt }{ \bc-c_{h}^0-\ddelt } 2\bett \max_{s'} \left\lVert \ppsi(\mcuo_h,\bphi(s,a_h^*(s),s')) \right\rVert_{(\lamb)^{-1}} \nonumber \\
&\qquad\qquad\qquad\qquad\qquad\qquad\qquad - 2\bett \max_{\{h<h'\leq \ch,(s_{h'}^*,a_{h'}^*(s),s')\in \mcg_h(s)\}} \lVert \ppsi(\mcuo_{h'},\bphi(s_{h'}^*,a_{h'}^*(s),s')) \rVert_{(\blambd_{h',1}^k)^{-1}} \Big]^{-1}, \nonumber
\end{align}
which provides the same requirements on the parameters $\epss$ and $\epsss$.

(ii-b) Step-b: differently from invariant (i) that trivially holds for $h=\ch$, we need to carefully handle the correctness of invariant (ii) at step $h=\ch$. Next, we prove invariant (ii) for all steps $h\leq\ch$ as follows.

First, since the bonus terms $\epss \cdot \max_{s'\in \mcs_h(s,a)} \lVert \ppsi(\mcuo_h,\bphi(s,a,s')) \rVert_{(\lamb)^{-1}}$ and $\epsss \cdot \max_{(s_{h'},a_{h'},s') \in \mcg_h(s)} \lVert \ppsi(\mcuo_{h'},\bphi(s_{h'},a_{h'},s')) \rVert_{(\blambd_{h',1}^k)^{-1}}$ in~\eqref{eq:estimateqvalue} are non-negative, to prove $V_h^k(\hs) \geq V_h^*(s)$, we need to prove that
\begin{align}
& r_h(\hs,\ha) + \langle \whk, \bphi_{V_{h+1}^k}(\hs,\ha) \rangle + \eps \cdot \lVert \bphi_{V_{h+1}^k}(\hs,\ha) \rVert_{(\lambb)^{-1}} + \epssss\cdot \max_{s'\in \mcs_1(s_1,a_1^k)} \lVert \ppsi(\mcuo_1,\bphi(s_1,a_1^k,s')) \rVert_{(\blambd_{1,1}^k)^{-1}} \nonumber \\
& \geq r_h(s,a_h^*(s)) + \left\langle w_h^*,\bphi_{V_{h+1}^*}(s,a_h^*(s)) \right\rangle, \label{eq:pfinvariants16}
\end{align}
for some $\ha \in \ta_h^k(\hs)$. To prove~\eqref{eq:pfinvariants16}, we prove
\begin{align}
& \left[ r_h(s,a_h^*(s)) + \left\langle w_h^*,\bphi_{V_{h+1}^*}(s,a_h^*(s)) \right\rangle \right] - \left[ r_h(\hs,\ha) + \langle \whk, \bphi_{V_{h+1}^k}(\hs,\ha) \rangle \right] \nonumber \\
& \leq \eps \cdot \lVert \bphi_{V_{h+1}^k}(\hs,\ha) \rVert_{(\lambb)^{-1}} + \epssss\cdot \max_{s'\in \mcs_1(s_1,a_1^k)} \lVert \ppsi(\mcuo_1,\bphi(s_1,a_1^k,s')) \rVert_{(\blambd_{1,1}^k)^{-1}}. \label{eq:pfinvariants17}
\end{align}
By adding and subtracting $r_h(\hs,\ha) + \langle w_h^*, \bphi_{V_{h+1}^*}(\hs,\ha) \rangle$, we decompose the left-hand-side of~\eqref{eq:pfinvariants17} into two parts that are easier for analysis in the following special way,
\begin{align}
& \left[ r_h(s,a_h^*(s)) + \left\langle w_h^*,\bphi_{V_{h+1}^*}(s,a_h^*(s)) \right\rangle \right] - \left[ r_h(\hs,\ha) + \langle \whk, \bphi_{V_{h+1}^k}(\hs,\ha) \rangle \right] \nonumber \\
& = \left[ r_h(s,a_h^*(s)) + \left\langle w_h^*,\bphi_{V_{h+1}^*}(s,a_h^*(s)) \right\rangle \right] - \left[ r_h(\hs,\ha) + \langle w_h^*, \bphi_{V_{h+1}^*}(\hs,\ha) \rangle \right] \nonumber \\
&\qquad\qquad\qquad\qquad\qquad + \left[ r_h(\hs,\ha) + \langle w_h^*, \bphi_{V_{h+1}^*}(\hs,\ha) \rangle \right] - \left[ r_h(\hs,\ha) + \langle \whk, \bphi_{V_{h+1}^k}(\hs,\ha) \rangle \right]. \label{eq:pfinvariants18}
\end{align}
Notice that by decomposing in this way, the value in the first two brackets $[\cdot]$ on the right-hand-side of~\eqref{eq:pfinvariants18} characterizes how the policy executed by our~\abralg~algorithm learns about and searches towards the optimal safe subgraph. The value in the last two brackets $[\cdot]$ on the right-hand-side of~\eqref{eq:pfinvariants18} characterizes how the policy executed by our~\abralg~algorithm learns and estimates the optimal $Q$-value parameter $w_h^*$. Next, according to invariant (ii) at next step $h_0$, the value in the last two brackets $[\cdot]$ on the right-hand-side of~\eqref{eq:pfinvariants18} can be upper-bounded as follows,
\begin{align}
& \left[ r_h(\hs,\ha) + \langle w_h^*, \bphi_{V_{h+1}^*}(\hs,\ha) \rangle \right] - \left[ r_h(\hs,\ha) + \langle \whk, \bphi_{V_{h+1}^k}(\hs,\ha) \rangle \right] \nonumber \\
&\qquad \leq \left[ r_h(\hs,\ha) + \langle w_h^*, \bphi_{V_{h+1}^k}(\hs,\ha) \rangle \right] - \left[ r_h(\hs,\ha) + \langle \whk, \bphi_{V_{h+1}^k}(\hs,\ha) \rangle \right] \leq \eps \cdot \lVert \bphi_{V_{h+1}^k}(\hs,\ha) \rVert_{(\lambb)^{-1}}. \nonumber
\end{align}
Then, to prove~\eqref{eq:pfinvariants17}, we need to prove
\begin{align}
& \left[ r_h(s,a_h^*(s)) + \left\langle w_h^*,\bphi_{V_{h+1}^*}(s,a_h^*(s)) \right\rangle \right] - \left[ r_h(\hs,\ha) + \langle w_h^*, \bphi_{V_{h+1}^*}(\hs,\ha) \rangle \right] \nonumber \\
&\qquad \leq \epssss\cdot \max_{s'\in \mcs_1(s_1,a_1^k)} \lVert \ppsi(\mcuo_1,\bphi(s_1,a_1^k,s')) \rVert_{(\blambd_{1,1}^k)^{-1}}. \label{eq:pfinvariants20}
\end{align}
Therefore, below we focus on bounding the value in the first two brackets on the right-hand-side of~\eqref{eq:pfinvariants18}. According to the definition of $V_h^k(s)$ and~\cref{lemma:impactfrompast}, there must exist an action $\ha \in \ta_h^k(\hs)$ and $1 \leq h' \leq h$, s.t.,
\begin{align}
\tf_h(\hs,\ha) \leq 1 - \frac{\bc-c_{h'}^0-\ddelt-2\bett \max\limits_{s'} \lVert \ppsi(\mcuo_{h'},\bphi(s_{h'}^*,a_{h'}^*,s')) \rVert_{(\blambd_{h',1}^k)^{-1}}}{\deltt(\bc-c_{h'}^0-\ddelt+2\bett \max\limits_{s'} \lVert \ppsi(\mcuo_{h'},\bphi(s_{h'}^*,a_{h'}^*,s')) \rVert_{(\blambd_{h',1}^k)^{-1}})}. \nonumber
\end{align}
Thus, we have
\begin{align}
& r_h(\hs,\ha) + \left\langle w_h^*, \bphi_{V_{h+1}^*}(\hs,\ha) \right\rangle \nonumber \\
& \geq \frac{\bc-c_{h'}^0-\ddelt-2\bett \max\limits_{s'} \lVert \ppsi(\mcuo_{h'},\bphi(s_{h'}^*,a_{h'}^*,s')) \rVert_{(\blambd_{h',1}^k)^{-1}}}{\bc-c_{h'}^0-\ddelt+2\bett \max\limits_{s'} \lVert \ppsi(\mcuo_{h'},\bphi(s_{h'}^*,a_{h'}^*,s')) \rVert_{(\blambd_{h',1}^k)^{-1}}} \left[ r_h(s,a_h^*(s)) + \left\langle w_h^*,\bphi_{V_{h+1}^*}(s,a_h^*(s)) \right\rangle \right]. \label{eq:pfinvariants22}
\end{align}
Notice that~\eqref{eq:pfinvariants22} indicates that the left-hand-side of~\eqref{eq:pfinvariants20} can be upper-bounded as follows,
\begin{align}
& \left[ r_h(s,a_h^*(s)) + \left\langle w_h^*,\bphi_{V_{h+1}^*}(s,a_h^*(s)) \right\rangle \right] - \left[ r_h(\hs,\ha) + \langle w_h^*, \bphi_{V_{h+1}^*}(\hs,\ha) \rangle \right] \nonumber \\
& \leq \frac{4\bett \max\limits_{s'} \lVert \ppsi(\mcuo_{h'},\bphi(s_{h'}^*,a_{h'}^*,s')) \rVert_{(\blambd_{h',1}^k)^{-1}}}{\bc-c_{h'}^0-\ddelt+2\bett \max\limits_{s'} \lVert \ppsi(\mcuo_{h'},\bphi(s_{h'}^*,a_{h'}^*,s')) \rVert_{(\blambd_{h',1}^k)^{-1}}} \left[ r_h(s,a_h^*(s)) + \left\langle w_h^*,\bphi_{V_{h+1}^*}(s,a_h^*(s)) \right\rangle \right] \nonumber \\
& \leq \frac{4\bett \ch \max\limits_{s'} \lVert \ppsi(\mcuo_{h'},\bphi(s_{h'}^*,a_{h'}^*,s')) \rVert_{(\blambd_{h',1}^k)^{-1}}}{\bc-c_{h'}^0-\ddelt+2\bett \max\limits_{s'} \lVert \ppsi(\mcuo_{h'},\bphi(s_{h'}^*,a_{h'}^*,s')) \rVert_{(\blambd_{h',1}^k)^{-1}}}, \label{eq:pfinvariants23}
\end{align}
where the last inequality is because $V_h^*(s) \leq \ch$ for all states $s$ and steps $h$.~\eqref{eq:pfinvariants23} indicates that to prove~\eqref{eq:pfinvariants20}, we need
\begin{align}
& (\bc-c_{h'}^0-\ddelt+2\bett \max\limits_{s'} \lVert \ppsi(\mcuo_{h'},\bphi(s_{h'}^*,a_{h'}^*,s')) \rVert_{(\blambd_{h',1}^k)^{-1}}) \cdot \epssss\cdot \max_{s'\in \mcs_1(s_1,a_1^k)} \lVert \ppsi(\mcuo_1,\bphi(s_1,a_1^k,s')) \rVert_{(\blambd_{1,1}^k)^{-1}} \nonumber \\
& \geq 4\bett \ch \max\limits_{s'} \lVert \ppsi(\mcuo_{h'},\bphi(s_{h'}^*,a_{h'}^*,s')) \rVert_{(\blambd_{h',1}^k)^{-1}}. \label{eq:pfinvariants24}
\end{align}
Note that~\eqref{eq:pfinvariants24} shows that, to prove $V_h^k(\hs) \geq V_h^*(s)$, we need
\begin{align}\nonumber
\epssss \geq \frac{4\bett \ch}{\bc-c_{1}^0-\ddelt}.
\end{align}
This is the reason we set the parameter $\epssss$ in our Idea IV to be in the form in~\eqref{eq:epssss}.

\end{proof}

\section{Proof of~\cref{thm:regret}}\label{app:thmregret}

As we mentioned in~\cref{subsec:proofsketch}, because of the new challenges from the instantaneous hard constraint~\eqref{eq:defhardconstraint} and our novel ideas in the algorithm design, there are several new difficulties in the regret analysis, which is shown in this section. The key ones are: (I) Differently from the unconstrained setting or the setting with only unsafe actions, in our case, the states that could be visited with non-zero probability by different policies could be completely different at each step $h$. Hence, the commonly-used invariant on $V$-values, i.e., $V_h^k(s) \geq V_h^*(s)$ for all $h$ and $s$, that relies on the ergodicity property no longer holds in our case. This difficulty is resolved by~\cref{lemma:invariants}. (II) How to quantify the impacts when looking ahead and peeking backward. This difficulty is resolved by~\cref{lemma:impactfromfuture} and~\cref{lemma:impactfrompast}.

\begin{proof}

First, for the convenience of the reader, we restate our new construction for the $V$-values functions of different policies. We let $\mcs_h^*$ denote the state set at step $h$ in the optimal safe subgraph. Let $\mcs_h^k$ denote the state set at step $h$ in the subgraph followed by policy $\pi^k$ of~\abralg~in episode $k$. Moreover, we let $\tf_h(s,a) \triangleq f_h(\bphi(s,a,\cdot)-\bphi(s_h^*,a_h^*,\cdot))$ denote the gap between the transitions associated with the state-action pair $(s,a)$ and the optimal transitions. Let $\ta_h^k(s) \triangleq \{a\in\mca_h^{k,\safe}(s): \tf_h(s,a) \leq \balpp\} \cup \{a_h^k(s)\}$ denote the union of the safe actions with transitions close to the optimal transitions and the action chosen by $\pi^k$ for a safe state $s$ at step $h$, where
\begin{align}
& \balpp = \max\Bigg\{ 1 - \frac{\bc-c_{h}^0-\ddelt-2\bett \max_{s'} \lVert \ppsi(\mcuo_h,\bphi(s,a_h^*(s),s')) \rVert_{(\lamb)^{-1}}}{\bc-c_{h}^0-\ddelt+2\bett \max_{s'} \lVert \ppsi(\mcuo_h,\bphi(s,a_h^*(s),s')) \rVert_{(\lamb)^{-1}}} \nonumber \\
&\qquad\qquad\qquad\qquad\qquad\qquad\qquad \cdot \frac{\bc-\bc_{h'}^0-\ddelt-2\bett \max\limits_{\{h<h'\leq \ch,(s_{h'}^*,a_{h'}^*),s'\}} \lVert \ppsi(\mcuo_{h'},\bphi(s_{h'}^*,a_{h'}^*,s')) \rVert_{(\blambd_{h',1}^k)^{-1}}}{\bc-\bc_{h'}^0-\ddelt+2\bett \max\limits_{\{h<h'\leq \ch,(s_{h'}^*,a_{h'}^*),s'\}} \lVert \ppsi(\mcuo_{h'},\bphi(s_{h'}^*,a_{h'}^*,s')) \rVert_{(\blambd_{h',1}^k)^{-1}}}, \nonumber \\
&\qquad\qquad\qquad\qquad 1 - \frac{\bc-c_{h'}^0-\ddelt-2\bett \max\limits_{s'} \lVert \ppsi(\mcuo_{h'},\bphi(s_{h'}^*,a_{h'}^*,s')) \rVert_{(\blambd_{h',1}^k)^{-1}}}{\deltt(\bc-c_{h'}^0-\ddelt+2\bett \max\limits_{s'} \lVert \ppsi(\mcuo_{h'},\bphi(s_{h'}^*,a_{h'}^*,s')) \rVert_{(\blambd_{h',1}^k)^{-1}})} \Bigg\} \nonumber
\end{align}
is a small value that decreases to be closer to $0$ when the number of learning episodes $k$ increases. Let $\ts_h^k \triangleq \{s\in\mcs_h^{k,\safe}: \exists a\in\mca_h^{k,\safe}(s), \text{ s.t., } \tf_h(s,a) \leq \balpp\} 
\cup \mcs_h^k$ denote the union of the safe states with transitions close to the optimal transitions and the state set at step $h$ in the subgraph followed by policy $\pi^k$ of~\abralg~in episode $k$. Next, we define the $V$-value functions of the optimal policy, estimated policy and policy $\pi^{k}$ to be
\begin{align}
& V_h^*(s) \triangleq Q_h^*(s,a_h^*(s)), \forall s \in \mcs_h^*, \label{eq:defoptvfunctionn} \\
& V_h^k(s) \triangleq \max_{a\in \ta_h^k(s)} Q_h^k(s,a), \forall s\in \ts_h^k, \label{eq:defanalysisvfunctionn} \\
& V_h^{\apik}(s) \triangleq Q_h^{\api^k}(s,a_h^k(s)), \forall s\in \mcs_h^k, \label{eq:defalgvfunctionn}
\end{align}
respectively. Then, the regret $R^{\abralg}$ can be decomposed into two parts as follows:
\begin{align}
R^{\abralg} = \sum\limits_{k=1}^{\ck} \left\{  V_1^*(s_1) - V_1^{\api^k}(s_1) \right\} = \sum\limits_{k=1}^{\ck} \left\{ \Big[ V_1^*(s_1) - V_1^{k}(s_1) \Big] + \Big[ V_1^k(s_1) - V_1^{\api^k}(s_1) \Big] \right\}. \label{eq:regretdecomposee}
\end{align}
To upper-bound the regret, we prove that, with high probability, (i) the value in the first bracket on the right-hand-side of~\eqref{eq:regretdecompose} is non-positive; (ii) the value in the second bracket on the right-hand-side of~\eqref{eq:regretdecomposee} can be upper-bounded. Note that, according to~\cref{lemma:invariants}, we have the value in the first bracket on the right-hand-side of~\eqref{eq:regretdecomposee} must be non-positive, i.e., $V_1^*(s_1) - V_1^{k}(s_1) \leq 0$ for all episodes $k$. The value in the second bracket on the right-hand-side of~\eqref{eq:regretdecomposee} can be upper-bounded by slightly modifying existing techniques for the linear mixture MDP. Specifically, according to the Azuma-Hoeffding inequality, we have
\begin{align}
& \sum\limits_{k=1}^{\ck} \left\{ V_1^k(s_1) - V_1^{\api^k}(s_1) \right\} \leq \sum\limits_{k=1}^{\ck} \sum\limits_{h=1}^{\ch} \Big\{ \eps \cdot \lVert \bphi_{V_{h+1}^k}(s,a) \rVert_{(\lambb)^{-1}} + \epss\cdot \max_{s'\in \mcs_h(s,a)} \lVert \ppsi(\mcuo_h,\bphi(s,a,s')) \rVert_{(\lamb)^{-1}} \nonumber \\
& + \epsss\cdot \max_{(s_{h'},a_{h'},s') \in \mcg_h(s)} \lVert \ppsi(\mcuo_{h'},\bphi(s_{h'},a_{h'},s')) \rVert_{(\blambd_{h',1}^k)^{-1}} + \epssss\cdot \max_{s'\in \mcs_1(s_1,a_1^k)} \lVert \ppsi(\mcuo_1,\bphi(s_1,a_1^k,s')) \rVert_{(\blambd_{1,1}^k)^{-1}} \Big\} \nonumber \\
& + 2\ch \sqrt{\ch\ck \log\left(\frac{2d\ch\ck}{p}\right)} + \ch\ck' \nonumber \\
& \leq \sum\limits_{k=1}^{\ck} \sum\limits_{h=1}^{\ch} \Bigg\{ \bett+1 + \frac{\frac{4\bett H}{\dddelt} \frac{\bc-\bc_{h'}^0-\ddelt}{\bc-c_{h}^0-\ddelt}}{\bc-\bc_{h'}^0-\ddelt - \frac{\bc-\bc_{h'}^0-\ddelt}{\bc-c_{h}^0-\ddelt}\kappa} + \frac{4\bett H / \dddelt}{\bc-\bc_{h'}^0-\ddelt - \kappa} + \frac{4\bett \ch}{\bc-c_{1}^0-\ddelt} \Bigg\} \nonumber \\
&\qquad\qquad\qquad \cdot \sqrt{2d\ch\ck \log\left(1+\ch\ck\right)} + 2\ch \sqrt{\ch\ck \log\left(\frac{2d\ch\ck}{p}\right)} + \ch\ck' + \frac{D}{\leps}\left(\frac{\ck}{\ck'}-1\right), \nonumber
\end{align}
where the last inequality is because of Lemma D.2 in~\cite{jin2020provably} and Lemma 1 in~\cite{amani2019linear}.

\end{proof}

\section{Proof of~\cref{thm:lowerbound}}\label{app:thmlowerbound}

In this section, we provide the proof for~\cref{thm:lowerbound}. The proof is based on the lower bound in the unconstrained horizon-free linear mixture MDP setting~\cite{zhou2022computationally} and the lower bound in the constrained bandit setting~\cite{pacchiano2021stochastic}. Note that these existing lower bounds do not show the dependency on the episode length $\ch$ and the safety parameter $\ddelt$ that are captured in our lower bound.

\begin{proof}

Notice that in~\cref{thm:lowerbound}, we assume $\ck \geq 32\underline{R}$. Under this assumption, Lemma 25 in~\cite{zhou2021nearly} indicates that in the linear bandit problems that are parameterized by the vector $\mu^*=\left\{-\frac{\sqrt{\delta/\ck}}{4\sqrt{2}},\frac{\sqrt{\delta/\ck}}{4\sqrt{2}}\right\}^{d}$ and with the action space $\mca=\{-1,1\}^d$ and Bernoulli distributed reward $r \sim \mathcal{B}(\delta+\langle\mu^*,a\rangle)$, where $0<\delta \leq \frac{1}{3}$, the regret of any algorithm is lower-bounded by $\frac{d\ch\sqrt{\ck}}{8\sqrt{2}}$. Next, consider an instance with three states $\{s_1,s_2,s_3\}$, one action $a$, and the reward $r_h(s_1,a)=r_h(s_2,a)=0$ and $r_h(s_3,a)=1$ for each $h$. Then, by using the same transition probability in Section C.3 of~\cite{zhou2022computationally}, we have that the regret of any algorithm for linear mixture MDPs with $\ch$ steps in each episode is lower-bounded by $\frac{d\ch\sqrt{\ck}}{16\sqrt{2}}$. Since the linear mixture MDP with instantaneous hard constraints subsumes (when the cost $c_h(s,a,s')=0$ for all state-action-state triplets) the unconstrained case, $\frac{d\ch\sqrt{\ck}}{16\sqrt{2}}$ is also a lower bound of the regret in our case. 

Further, to quantify the impact of the safety term $\bc-\bc_1^0-\ddelt$ on the lower bound, in the following, we focus on showing that, when the instantaneous hard constraint with threshold $\bc$ is considered, the regret is at least $\frac{H}{24(\bc-\bc_1^0-\ddelt)^2}$. We prove this by contradiction. Assume there exists a safe algorithm that can achieve a regret $R_0 < \frac{H}{24(\bc-\bc_1^0-\ddelt)^2}$ for any instance of the problem that we consider. Let us consider the following transition probability function: At step $h=1$, the transition probability is equal to $\mbp_1(s_2(i) \vert s_1,a(i)) = 1$ for all $i$, and $\mbp_1(s_2(i) \vert s_1,a(j)) = 0$ for all $i\neq j$; at step $h>1$, the transition probability is equal to $\mbp_h(s_{h+1}(i) \vert s_h(i),a(j))=1$ for all $i$ and $j$, and $\mbp_h(s_{h+1}(j) \vert s_h(i),a(l))=0$ for all $i \neq j$ and all $l$, where $i$, $j$ and $l$ are the indices of the states and actions. 

Now, let us consider an instance where the safety value function is as follows: at step $h=1$, the safety value is equal to $c_1(s_1,a(1),s') = \bc_1^0$, $c_1(s_1,a(2),s') = 2\bc-\bc_1^0$, $c_1(s_1,a(3),s') = \bc_1^0$, $c_1(s_1,a(4),s') = 2\bc-\bc_1^0-\ddelt$ and $c_1(s_1,a(i),s') = 2\bc-\bc_1^0$ for all $i>4$. Notice that $a(1)$ and $a(3)$ are safe actions, while $a(2)$, $a(4)$ and other actions are unsafe for state $s_1$ at step $h=1$. Moreover, at step $h>1$, for all $i$, the safety value is equal to $c_h(s_h(1),a(i),s') = \bc_1^0$, $c_1(s_h(2),a(i),s') = 2\bc-\bc_1^0$, $c_1(s_h(3),a(i),s') = \bc_1^0$, $c_1(s_h(4),a(i),s') = 2\bc-\bc_1^0-\ddelt$ and $c_1(s_h(j),a(i),s') = 2\bc-\bc_1^0$ for all $j>4$. Notice that $s_h(1)$ and $s_h(3)$ are safe states, while $s_h(2)$, $s_h(4)$ and other states are unsafe at each step $h>1$. The reward value function is as follows: at step $h=1$, the reward is equal to $r_1(s_1,a(1)) = \frac{1}{8}$, $r_1(s_1,a(2)) = 1$, $r_1(s_1,a(3)) = 0$ and $r_1(s_1,a(i)) = \frac{1}{2}$ for all $i>3$; at step $h>1$, for all $i$, the reward is equal to $r_h(s_h(1),a(i)) = \frac{1}{8}$, $r_1(s_h(2),a(i)) = 1$, $r_1(s_h(3),a(i)) = 0$ and $r_1(s_h(j),a(i)) = \frac{1}{2}$ for all $j>3$. Since for any algorithm that chooses action $a(1)$ at step $h=1$ less than half of the total episodes with probability $p_1$, the regret is at least $\frac{p_1\ch\ck}{2}$. Moreover, since the regret of assumed algorithm is $R_0 < \frac{H}{24(\bc-\bc_1^0-\ddelt)^2}$, we have that, for this algorithm,
\begin{align}
p_1 \leq  \frac{1}{12\ck(\bc-\bc_1^0-\ddelt)^2}. \nonumber
\end{align}

Next, let us consider another instance where the safety value function is as follows: at step $h=1$, the safety value is equal to $c_1(s_1,a(1),s') = \bc_1^0$, $c_1(s_1,a(2),s') = 2\bc-\bc_1^0$, $c_1(s_1,a(3),s') = \bc_1^0$, $c_1(s_1,a(4),s') = \bc_1^0+\ddelt$ and $c_1(s_1,a(i),s') = 2\bc-\bc_1^0$ for all $i>4$. Notice that $a(1)$, $a(3)$ and $a(4)$ are safe actions, while $a(2)$ and other actions are unsafe for state $s_1$ at step $h=1$. Moreover, at step $h>1$, for all $i$, the safety value is equal to $c_h(s_h(1),a(i),s') = \bc_1^0$, $c_1(s_h(2),a(i),s') = 2\bc-\bc_1^0$, $c_1(s_h(3),a(i),s') = \bc_1^0$, $c_1(s_h(4),a(i),s') = \bc_1^0+\ddelt$ and $c_1(s_h(j),a(i),s') = 2\bc-\bc_1^0$ for all $j>4$. Notice that $s_h(1)$, $s_h(3)$ and $s_h(4)$ are safe states, while $s_h(2)$ and other states are unsafe at each step $h>1$. The reward value function is as follows: at step $h=1$, the reward is equal to $r_1(s_1,a(1)) = \frac{1}{8}$, $r_1(s_1,a(2)) = 1$, $r_1(s_1,a(3)) = 0$ and $r_1(s_1,a(i)) = \frac{1}{2}$ for all $i>3$; at step $h>1$, the reward is equal to $r_h(s_h(1),a(i)) = \frac{1}{8}$, $r_1(s_h(2),a(i)) = 1$, $r_1(s_h(3),a(i)) = 0$ and $r_1(s_h(j),a(i)) = \frac{1}{2}$ for all $j>3$. Since for any algorithm that chooses action $a(1)$ at step $h=1$ more than half of the total episodes with probability $p_2$, the regret is at least $\frac{3p_2\ch\ck}{16}$. Moreover, since the regret of the assumed algorithm is $R_0 < \frac{H}{24(\bc-\bc_1^0-\ddelt)^2}$, we have, for this algorithm,
\begin{align}
p_2 \leq \frac{2}{9\ck(\bc-\bc_1^0-\ddelt)^2}. \nonumber
\end{align}

Notice that the main difference between this two instances is change of the safety of action $a(4)$ for state $s_1$ at step $h=1$. Specifically, in instance 1, action $a(4)$ is unsafe, while in instance 2 it becomes safe and incurs the largest reward. Thus, we can quantify the total variation distance between the statistical distributions between these two instances, which can further be upper-bounded by the Kullback–Leibler (KL) divergence. More specifically, according to Lemma 1 in~\cite{kaufmann2016complexity} and Lemma 15.1 in~\cite{lattimore2020bandit}, we have that this KL divergence is at least $q(4)\cdot D_{\text{KL}}\left( \mathcal{N}(2\bc-\bc_1^0-\ddelt, \ident) \Vert \mathcal{N}(\bc_1^0+\ddelt, \ident) \right) = 2q(4)(\bc-\bc_1^0-\ddelt)^2 \geq \frac{1}{2}$, where $q(4)$ is the expected number of times of choosing action $a(4)$ at step $h=1$ in instance 1. Thus, we have
\begin{align}
q(4) \geq \frac{1}{4(\bc-\bc_1^0-\ddelt)^2} \nonumber
\end{align}
For the algorithm choosing action $a(4)$ for at least $q(4)$ times in average for instance 1, the regret is at least $q(4)\cdot \frac{1}{2} \cdot \frac{1}{3} = \frac{1}{6}q(4)$. This contradicts with our assumption that the regret of this algorithm is $R_0 < \frac{H}{24(\bc-\bc_1^0-\ddelt)^2}$.

\end{proof}

\end{document}